\def\argmax{\mathop{\rm argmax}}
\def\bfX{\boldsymbol{X}}
\newcommand{\RNum}[1]{\uppercase\expandafter{\romannumeral #1\relax}}
\newcommand{\normmm}{{\vert\kern-0.25ex\vert\kern-0.25ex\vert}}
\newcommand{\bignormmm}{{\big\vert\kern-0.25ex\big\vert\kern-0.25ex\big\vert}}
\newcommand{\Bignormmm}{{\Big\vert\kern-0.25ex\Big\vert\kern-0.25ex\Big\vert}}
\newtheorem{theorem}{Theorem}
\newtheorem{lemma}{Lemma}
\newtheorem{proposition}{Proposition}
\newtheorem{remark}{Remark}
\newtheorem{corollary}{Corollary}
\newtheorem{definition}{Definition}
\newtheorem{assumption}{Assumption}
\newcommand{\BlackBox}{\rule{1.5ex}{1.5ex}}
    \renewenvironment{proof}{\par\noindent{\bf Proof\ }}{\hfill\BlackBox\\[2mm]}
    \newenvironment{proof}{\par\noindent{\bf Proof\ }}{\hfill\BlackBox\\[2mm]}
\long\def\@makecaption#1#2{
\vskip 0.8ex
\setbox\@tempboxa\hbox{\small {\bf #1:} #2}
\parindent 1.5em
\dimen0=\hsize
\advance\dimen0 by -3em
\ifdim \wd\@tempboxa >\dimen0
\hbox to \hsize{
\parindent 0em
\hfil
\parbox{\dimen0}{\def\baselinestretch{0.96}\small
    {\bf #1.} {#2}
  }
\hfil}
\else \hbox to \hsize{\hfil \box\@tempboxa \hfil}
\fi
}
\begin{document}
\onehalfspacing

\begin{center}
  {\bf \LARGE High-Dimensional Differentially Private \\
   Quantile Regression: \\
  \vspace{0.25em}
   Distributed Estimation and Statistical Inference} \\
  \vspace{0.5em}
    {\large
        Ziliang Shen$^{\dagger*}$,
        Caixing Wang$^{\diamond*}$,
        Shaoli Wang$^{\dagger*}$, and
        Yibo Yan$^{\flat*}$\par
        \medskip
        \normalsize
        $^{\dagger}$School of Statistics and Data Science,
        Shanghai University of Finance and Economics\par
        $^\diamond$School of Statistics and Data Science,
        Southeast University\par
        $^\flat$School of Statistics and Data Science,
        Shanghai University of International Business and Economics\par
    }
  \vspace{.6em}

  \today
\end{center}

\begin{center}
{\bf Abstract} \\ \vspace{.6em}
\begin{minipage}{0.9\linewidth}
{\small ~~~~ With the development of big data and machine learning, privacy concerns have become increasingly critical, especially when handling heterogeneous datasets containing sensitive personal information. Differential privacy provides a rigorous framework for safeguarding individual privacy while enabling meaningful statistical analysis. In this paper, we propose a differentially private quantile regression method for high-dimensional data in a distributed setting. Quantile regression is a powerful and robust tool for modeling the relationships between the covariates and responses in the presence of outliers or heavy-tailed distributions. To address the computational challenges due to the non-smoothness of the quantile loss function, we introduce a Newton-type transformation that reformulates the quantile regression task into an ordinary least squares problem. Building on this, we develop a differentially private estimation algorithm with iterative updates, ensuring both near-optimal statistical accuracy and formal privacy guarantees. For inference, we further propose a differentially private debiased estimator, which enables valid confidence interval construction and hypothesis testing. Additionally, we propose a communication-efficient and differentially private bootstrap for simultaneous hypothesis testing in high-dimensional quantile regression, suitable for distributed settings with both small and abundant local data. Extensive simulations and a real data application demonstrate the robustness and effectiveness of our methods in practical scenarios.}
\\ \textbf{Keywords}: Differential privacy, quantile regression, Newton-type transformation, debiased method, Bahadur representation, multiplier bootstrap.
\end{minipage}
\end{center}

\vspace*{\fill}
\noindent
\begin{minipage}{\textwidth}
\footnotesize *: All authors contributed equally to this paper and their names are listed in alphabetical order by last name.
\end{minipage}

\newpage

\section{Introduction}
In the era of big data, privacy protection has become a central concern in statistical learning and machine learning, especially when sensitive personal or confidential information is involved. Major regulations such as the European Union's General Data Protection Regulation (GDPR), the California Consumer Privacy Act (CCPA), and China's Personal Information Protection Law (PIPL) reflect the growing global emphasis on data privacy. Differential privacy (DP), introduced by \cite{dwork2006calibrating}, provides a rigorous mathematical framework for quantifying privacy loss. It ensures that the inclusion or exclusion of a single individual's data has only a limited influence on the output of a statistical procedure, thereby protecting individual privacy. This property is particularly important in data-driven processes, where large datasets are used for statistical modeling, machine learning, and predictive analytics.  A common way to achieve DP is to inject carefully calibrated random noise, such as Laplace, Gaussian, or binomial noise, according to the sensitivity of the target procedure \cite{dwork2006our,dwork2014algorithmic}. Information-theoretic properties of DP mechanisms have also been studied, including bounds on min-entropy leakage \cite{barthe2011information}. Due to its strong theoretical guarantees and broad applicability, DP has become an important tool in privacy-sensitive domains such as healthcare, finance, and the social sciences.

In addition to privacy concerns, the distributed nature of modern datasets brings further challenges to statistical learning. Data are often collected and stored across multiple nodes or devices, making centralized analysis computationally expensive or infeasible. Distributed learning methods are therefore needed to handle large-scale data while accounting for communication constraints, heterogeneous data distributions, and privacy requirements. At the same time, real-world data frequently exhibit skewness, heavy-tailed noise, outliers, and heteroscedasticity. Quantile regression (QR) provides a powerful tool in statistical modeling for enhancing robustness. Unlike traditional mean regression, which focuses on estimating the conditional mean, quantile regression provides a more comprehensive view of the relationship between variables by estimating different quantiles, making it particularly useful in the presence of outliers or heavy-tailed error distributions \cite{koenker1978regression,koenker2005quantile}. However, the non-smooth nature of the quantile loss function creates substantial computational and theoretical difficulties, especially in high-dimensional settings where the number of covariates can be very large. These difficulties become even more pronounced in distributed and privacy-sensitive settings, where communication efficiency, statistical accuracy, sparsity, robustness, and formal privacy guarantees must be addressed simultaneously.

Existing studies have made substantial progress in differentially private statistical learning and distributed quantile regression, but these two lines of research remain largely separate. Differentially private methods have been developed, including mean estimation \cite{cai2021cost, zhang2024differentially}, linear regression \cite{cai2021cost, wang2020sparse}, and distributed learning \cite{agarwal2018cpsgd,erlingsson2020encode,bittau2017prochlo,girgis2021shuffled}. Another line of research has focused on developing efficient distributed algorithms for quantile regression, particularly in high-dimensional settings \cite{chen2020distributed, wang2024distributed}. However, these methods either do not incorporate differential privacy or do not address the challenges posed by heavy-tailed noise and heteroscedastic data. The investigation of robust estimation and inference techniques in a distributed and differentially private manner is particularly pressing.

In this paper, we first develop a privacy-preserving framework for distributed estimation and inference in high-dimensional sparse quantile regression, addressing three foundational questions:
\begin{itemize}
    \item \textit{How to perform DP high-dimensional quantile regression with heavy-tailed noise and heteroscedastic data in the distributed setting?}
    \item \textit{How does DP affect the statistical accuracy of high-dimensional distributed quantile regression in both estimation and inference?}
    \item \textit{How should efficient DP estimation and inference be achieved with communication constraints under distributed learning?}
\end{itemize}
\subsection{  {Literature Review}}
In recent years, privacy-preserving techniques have been increasingly integrated into statistical learning and machine learning, driven by the rapid growth of data-driven applications and rising concerns about the protection of sensitive information \cite{dwork2006calibrating,dwork2014algorithmic}. Differential privacy, first introduced by \cite{dwork2006calibrating}, provides a fundamental mathematical framework for quantifying privacy loss. Since then, several related privacy mechanisms have been developed. For example, local differential privacy (LDP) adds noise directly on the client side and therefore does not require a trusted curator \cite{duchi2013local}, while Gaussian differential privacy (GDP) characterizes privacy loss through a single parameter based on a unit-variance Gaussian trade-off function \cite{dong2022gaussian}.  These privacy frameworks have been incorporated into a wide range of statistical learning problems. The specific form and cost of privacy protection depend heavily on the underlying learning task, including whether the model is low- or high-dimensional and whether the data are stored centrally or distributed across multiple machines. We therefore review four closely related lines of work.  

\noindent \textbf{Low-dimensional estimation with differential privacy}.
Balancing privacy protection and statistical accuracy is a central challenge in differentially private statistical learning. In low-dimensional settings, this privacy-utility trade-off has been extensively studied since the early works of \cite{dinur2003revealing,dwork2007price,dwork2008new}. Recent advances have developed differentially private procedures for a variety of classical statistical tasks, including mean estimation \cite{yu2024gaussian,liu2021robust,agarwal2025private}, covariance estimation \cite{amin2019differentially}, and linear regression \cite{sheffet2017differentially,alabi2023differentially}. Parallel progress has also been made under local differential privacy, including minimax-optimal estimation and lower bounds \cite{duchi2018minimax,asi2022optimal}. Several recent studies further connect robustness and privacy, showing that robust estimators can often be privatized through sensitivity-calibrated noise or contamination-based interpretations \cite{avella2023differentially,li2023robustness,kamath2024bias}. These works provide important foundations for understanding the statistical cost of privacy in classical low-dimensional models. 

\noindent \textbf{High-dimensional estimation and inference with differential privacy}.
With the growing need for privacy-preserving analysis in high-dimensional sparse models, differentially private sparse estimation has received considerable attention. A key algorithmic tool is the ``peeling'' or Noisy Hard Thresholding procedure introduced by \cite{Dwork_Su_Zhang_2021}, which has been widely used in private high-dimensional estimation \cite{cai2021cost,xia2023adaptive,cai2023private}. These studies showed an inherent cost of privacy: the estimation error typically increases with the model dimension and sparsity level \cite{cai2021cost}. This privacy--utility trade-off has been investigated in several high-dimensional problems, including top-$k$ feature selection \cite{steinke2017tight}, sparse mean estimation \cite{cai2021cost}, covariance estimation \cite{dong2022differentially}, sparse linear regression \cite{talwar2015nearly,cai2021cost,wang2020sparse}, and least absolute deviation regression \cite{liu2024efficient}. More recently, \cite{dwork2024differentially} explored private learning in high-dimensional regimes where the dimension grows proportionally to the sample size.

Compared with estimation, differentially private inference in high dimensions remains much less developed. Existing advances mainly focus on linear or generalized linear models, such as differentially private inference and false discovery rate control for high-dimensional linear regression \cite{cai2023private}, and DP nonparametric testing for generalized linear models and the Bradley--Terry--Luce model \cite{cai2023score}.  {However, valid inference for non-smooth robust models, especially high-dimensional quantile regression under distributed data and differential privacy constraints, remains largely unexplored.}

\noindent \textbf{Distributed statistical learning with differential privacy}.
Differential privacy has also been incorporated into distributed statistical learning, where privacy protection must be balanced with communication efficiency and statistical accuracy. In low-dimensional settings, existing works have developed communication-efficient DP stochastic gradient descent algorithms \cite{agarwal2018cpsgd}, one-shot DP logistic regression methods \cite{bao2023communication}, and private protocols without a trusted central machine \cite{gao2024private}. For high-dimensional models,  \cite{wang2017distributed} investigated decentralized data ownership, while \cite{zhang2024differentially} introduced a multi-round DP gradient method for high-dimensional linear regression under heterogeneous privacy budgets. Recent progress has also been made in distributed nonparametric and federated learning, including minimax-optimal estimation under machine-specific privacy constraints \cite{cai2024optimal,cai2024federated}, DP goodness-of-fit testing \cite{vuursteen2024optimal}, adaptive federated nonparametric classification \cite{auddy2024minimax}, and federated DP learning under site heterogeneity \cite{li2024federated}.  {Despite these advances, existing distributed DP methods mainly focus on smooth or mean-type models, and do not directly address the challenges of high-dimensional quantile regression, where non-smooth loss functions, sparsity, heavy-tailed or heteroscedastic errors, and valid statistical inference must be handled simultaneously.}

 {\noindent \textbf{Distributed linear quantile regression.}
Distributed quantile regression has been widely studied in recent years. Traditional divide-and-conquer methods have been developed for low-dimensional quantile regression \cite{zhao2014general,xu2020block,chen2020quantile}. However, in high-dimensional settings, where sparsity plays a crucial role, such methods may lose sparsity due to debiasing and averaging, leading to unsatisfactory support recovery \cite{bradic2017uniform,chen2020distributed}. They also typically require restrictions on the number of local machines to maintain the global convergence rate. To address these limitations, \cite{chen2020distributed} proposed a Newton-type distributed estimator by transforming the non-smooth check loss into a weighted least squares problem via kernel smoothing, and established guarantees for estimation and support recovery. However, their theory requires the error term to be independent of the covariates, which can be restrictive under heteroscedastic models. \cite{tan2022communication} further developed a communication-efficient convolution-smoothed quantile regression approach with a twice differentiable loss function, covering both low- and high-dimensional regimes. More recently, \cite{wang2024distributed} extended the framework of \cite{chen2020distributed} to allow noise terms that depend on covariates. Distributed quantile inference has also been studied in \cite{volgushev2019distributed,he2024debiased}. These works provide important non-private foundations for distributed quantile regression under high-dimensional sparsity.}

 {Taken together, the above literature shows that differential privacy, distributed learning, and high-dimensional quantile regression have each been substantially developed, but their intersection remains largely unexplored. Existing DP methods rarely address non-smooth quantile losses with high-dimensional sparsity and robust inference, while existing distributed quantile regression methods are mostly non-private. This paper fills this gap by developing a differentially private framework for distributed high-dimensional quantile regression, covering sparse estimation, debiasing, confidence interval construction, and simultaneous inference. Our theory explicitly quantifies how differential privacy affects both estimation accuracy and inferential validity.}

\subsection{Our Contributions}
In this paper, we develop a differentially private framework for distributed high-dimensional quantile regression under the trusted-machine setting of \cite{zhang2024differentially}. The proposed framework is designed for robust estimation and inference in the presence of heavy-tailed noise, heteroscedastic errors, high-dimensional sparsity, and distributed data storage. Our methodology combines a Newton-type transformation for the non-smooth quantile loss with privacy-preserving sparse updates, thereby enabling both computationally feasible optimization and formal ($\epsilon,\delta$)-differential privacy guarantees. We further develop differentially private debiasing and bootstrap procedures for statistical inference. Theoretically, we establish finite-sample estimation error bounds, non-asymptotic precision matrix estimation guarantees, Bahadur representations, asymptotic normality, valid confidence intervals, and simultaneous inference guarantees, explicitly quantifying the additional statistical cost induced by privacy protection. The main contributions are summarized as follows.
\begin{enumerate}
    \item \textbf{Differentially private sparse estimation.}
We propose a distributed estimation procedure for high-dimensional quantile regression under differential privacy. To overcome the non-smoothness of the quantile loss, we employ a Newton-type transformation that converts each local quantile regression update into a least-squares-type problem. We then combine distributed gradient aggregation with Noisy Hard Thresholding to enforce sparsity and privacy simultaneously at each iterative update. For the proposed estimator in Algorithm~\ref{alg:DP:SparseEstimation}, we establish finite-sample error bounds in Theorems~\ref{thm1} and~\ref{thm2}. The resulting rate explicitly decomposes into the oracle statistical error, the privacy-induced error, and the iterative approximation error, thereby characterizing how differential privacy affects high-dimensional distributed quantile regression.
   \item \textbf{Differentially private debiasing and coordinate-wise inference.} Since sparse private estimators are generally biased due to thresholding and regularization, we develop a differentially private debiasing procedure for valid statistical inference. We first construct a differentially private sparse pseudo precision matrix estimator using a noisy covariance matrix and a CLIME-type optimization problem in Algorithm~\ref{alg:DP:PrecisionEstimation}. Theorem~\ref{thm3} provides non-asymptotic error bounds for this estimator under several matrix norms. Based on this precision matrix estimator, we propose a differentially private debiased estimator in Algorithm~\ref{alg:DPconfidentinterval}. We derive its Bahadur representation in Theorem~\ref{thm:bahadur}, establish asymptotic normality in Corollary~\ref{cor:normality}, and construct valid differentially private confidence intervals in Theorem~\ref{thm:interval}.
    \item \textbf{Differentially private simultaneous inference.} Beyond coordinate-wise inference, we study simultaneous testing for high-dimensional quantile regression under distributed privacy constraints. Leveraging the Bahadur representation, we reduce the simultaneous inference problem to the approximation of a high-dimensional Gaussian maximum. We then propose a communication-efficient differentially private multiplier bootstrap procedure in Algorithm~\ref{alg:DPbootstrap}. Theorem~\ref{thm:bootstrap} establishes its theoretical validity, allowing simultaneous confidence interval construction and multiple testing while preserving differential privacy. \item  {\textbf{Numerical evaluation.} We conduct extensive simulation studies and real-world applications to examine the finite-sample performance of the proposed methods under different privacy budgets, error distributions, sparsity levels, and distributed data configurations. The results illustrate the robustness of the proposed procedure under heavy-tailed and heterogeneous errors, and demonstrate the expected privacy--utility trade-off in distributed high-dimensional quantile regression.}
\end{enumerate}

\subsection{Paper Organization and Notation}
The remainder of the paper is organized as follows. 
Section~\ref{sec:Preliminaries} reviews the fundamentals of differential privacy, including key definitions and properties, and introduces the Noisy Hard Thresholding algorithm for high-dimensional sparse estimation (Section~\ref{subsec:DP}), the basics of linear quantile regression (Section~\ref{subsec:qr}), and the Newton-type transformation that reformulates quantile regression as iterative least squares (Section~\ref{subsec:newtontype}).
Section~\ref{sec:DP-estimation} proposes our differentially private estimation algorithm under distributed setting and establishes its statistical guarantees. Section~\ref{sec:DP-inference} develops a debiased inference procedure under differential privacy, together with its theoretical analysis. Section~\ref{sec:DP-bootstrap} presents a bootstrap-based framework for simultaneous inference and multiple testing in the distributed setting, along with supporting theory.  {Sections~\ref{sec:simulation} and ~\ref{realdata} report comprehensive simulation studies and a real data application under various scenarios to evaluate the empirical performance of the proposed methods.} We conclude in Section~\ref{sec:Conclusion} with a summary of contributions, a discussion of open challenges, and directions for future research. Additional technical details, algorithmic discussions, and experimental results are reported separately.

\medskip

\noindent
{\sc Notation}:
We use $\mathbb{R}^p$ to denote the $p$-dimensional Euclidean space. For every $\boldsymbol{v} = \left(v_1, v_2, \ldots, v_p\right)^{\top} \in \mathbb{R}^p$, define $\|\boldsymbol{v}\|_q = (\sum_{i=1}^p v_i^q)^{1/q}$ for $1 \leq q < \infty$, $\|\boldsymbol{v}\|_\infty = \max_{1 \leq i \leq p} |v_i|$ and $\|\boldsymbol{v}\|_{0} = \sum_{i=1}^{p} \mathbb{I}(v_i \neq 0)$ with $\mathbb{I}(\cdot)$ being the indicator function. Denote $\mathbb{S}^p = \{ \boldsymbol{v} \in \mathbb{R}^{p+1} : \|\boldsymbol{v}\|_2 = 1 \}$ the unit sphere in $\mathbb{R}^{p+1}$ and $\mathbb{B}_1(r)=\{\boldsymbol{v} \in\mathbb{R}^{p+1}:\|\boldsymbol{v}\|_1 \leq r \}$ the $\ell_1$-ball in $\mathbb{R}^{p+1}$ with radius $r$. For a set of indices $\mathcal{S}$, the subvector $\boldsymbol{v}_{\mathcal{S}}$ consists of the components of $\boldsymbol{v}$ indexed by $\mathcal{S}$. For any $p \times q$ matrix $\mathbf{A}=\left(a_{i j}\right) \in \mathbb{R}^{p \times q}$, we define the elementwise $\ell_{\infty}$-norm $\|\mathbf{A}\|_{\infty}=\max _{1 \leq i \leq p, 1 \leq j \leq q}\left|a_{i j}\right|$, the elementwise $\ell_1$-norm $\|\mathbf{A}\|_1=$ $\sum_{i=1}^p \sum_{j=1}^q\left|a_{i j}\right|$, the matrix $L_1$-norm $\|\mathbf{A}\|_{L_1}=\max _{1 \leq i \leq p} \sum_{j=1}^q\left|a_{i j}\right|$, matrix $L_2$-norm $\|\mathbf{A}\|_{L_2}=\max _{1 \leq i \leq p} (\sum_{j=1}^q a^2_{i j})^{1/2}$, Frobenius-norm $\|\mathbf{A}\|_{\mathsf{F}} = (\sum_{i=1}^p \sum_{j=1}^q a^2_{i j})^{1/2}$, and the matrix operator norm $\|\mathbf{A}\|_{op} = \sup_{\|\boldsymbol{v}\|_2 = 1}\|\mathbf{A}\boldsymbol{v}\|$. For two subsets $\mathcal{S}_1 \in \{1,\ldots,p\} $ and $ \mathcal{S}_2 \in \{1,\ldots,q\}$, define the submatrix $\mathbf{A}_{\mathcal{S}_1 \times \mathcal{S}_2}=(a_{ij})_{i \in \mathcal{S}_1, j \in \mathcal{S}_2}$. If $\mathbf{A}$ is a square matrix, then we use $\Lambda_{\max }(\mathbf{A})$ and $\Lambda_{\min }(\mathbf{A})$ to denote the largest and smallest eigenvalues of $\mathbf{A}$, respectively.
Throughout this work, we use $\mathbf{I}$ to represent the identity matrix and $\mathbf{e}_j$ to denote the unit vector with $j$-th element being $1$. Additionally, we define the truncation function $\Pi_r: \mathbb{R}^p \rightarrow \mathbb{R}^p$, $\Pi_r(\boldsymbol{v}) = (\operatorname{sign}(v_i) \cdot \min(|v_i|,r))_{i=1}^{p}$, which projects a vector onto the $\ell_{\infty}$-ball of radius $r>0$ centered at the origin. For two sequences of non-negative numbers $\{ x_n \}_{n\geq 1}$ and $\{ y_n \}_{n\geq 1}$, $x_n \lesssim y_n$ means that there exists some constant $C>0$ independent of $n$ such that $x_n \leq Cy_n$; $x_n \gtrsim y_n$ is equivalent to $y_n \lesssim x_n$; $x_n \asymp y_n$ is equivalent to $x_n \lesssim y_n$ and $y_n \lesssim x_n$. We use $C, c, c_0, c_1,...$ to denote universal constants whose value may change from line to line.



\section{Preliminaries}\label{sec:Preliminaries}
In this section, we first review some fundamental concepts of differential privacy, then provide a brief introduction to the quantile regression model, and finally present a Newton-type transformation approach to transform high-dimensional quantile regression problems into the least squares alternatives.

\subsection{Differential Privacy}\label{subsec:DP}
Differential privacy is a widely adopted and rigorous framework for privacy protection in data analysis. We begin with the formal definition of differential privacy. For an algorithm with real-valued output, we have the following definition of differential privacy. For a comprehensive and detailed explanation, we refer the readers to \cite{dwork2006calibrating}.
\begin{definition}[Differential Privacy \cite{dwork2006calibrating}] \label{def:DP}
A randomized algorithm $\mathcal{U}: \mathcal{D} \rightarrow \mathbb{R}$ is said to be $(\epsilon, \delta)$-differentially private (abbreviated as $(\epsilon, \delta)$-DP) for some $\epsilon, \delta > 0$, if for every pair of neighboring datasets $D, D^{\prime} \in \mathcal{D}$ differing in a single individual's data, and for every measurable set $\mathcal{A} \subseteq \mathbb{R}$,
\[
\mathbb{P}[\mathcal{U}(D) \in \mathcal{A}] \leq e^\epsilon \mathbb{P}\left[\mathcal{U}(D^{\prime}) \in \mathcal{A}\right] + \delta,
\]
where the probability $\mathbb{P}$ is taken over the randomness of $\mathcal{U}$.
\end{definition}

Definition \ref{def:DP} formalizes the principle that the output of an algorithm should not be significantly
affected when a single individual's data are modified, thereby protecting individual privacy. The parameter $\epsilon$ quantifies the privacy loss, with smaller values indicating stronger privacy guarantees. The parameter $\delta$ allows for a small probability of failure, providing a trade-off between privacy and utility. When $\delta=0$, the algorithm is said to satisfy pure $(\epsilon,0)$-differential privacy,   {ensuring} that the output of the algorithm   {is} indistinguishable for any two neighboring datasets.  
As demonstrated in \cite{dwork2006calibrating}, privacy-preserving algorithms can be designed by adding carefully calibrated noise to their outputs, where the noise distribution is determined by the algorithm's sensitivity.

\begin{definition}[Algorithm Sensitivity \cite{dwork2014algorithmic}] 
\label{def:sen}
For a deterministic vector-valued algorithm $\mathcal{T}(\cdot): \mathcal{D} \rightarrow \mathbb{R}^m$, its $\ell_q$-sensitivity is defined as:
\begin{equation}
\Delta_q(\mathcal{T}) := \sup_{D, D^{\prime} \in \mathcal{D}} \left\|\mathcal{T}(D) - \mathcal{T}\left(D^{\prime}\right)\right\|_q,
\end{equation}
where $D$ and $D^{\prime}$ differ in exactly one entry.
\end{definition}

The $\ell_q$-sensitivity of an algorithm $f$ quantifies the maximum change in the output of the algorithm when a single individual's data are modified. Intuitively, the sensitivity of an algorithm reflects the upper bound on how much we must perturb its output to preserve privacy. For different types of sensitivity measures, we can add noise from different distributions to achieve differential privacy. The most commonly used mechanisms are the Laplace mechanism and the Gaussian mechanism, which are summarized below.

\begin{lemma}[The Laplace and Gaussian Mechanisms \cite{dwork2014algorithmic}]
\label{lemma:laplace gaussian} Two fundamental mechanisms achieve differential privacy:
\begin{enumerate}
    \item \textbf{Laplace Mechanism.}  Let \(\mathcal{T}\) be a deterministic algorithm with \(\ell_{1}\)-sensitivity \(\Delta_{1}(\mathcal{T})\). Define
    \[
      \mathcal{U}(D) \;=\; \mathcal{T}(D) \;+\; \boldsymbol{\zeta}, 
      \quad \boldsymbol{\zeta} = (\zeta_1,\ldots,\zeta_m)^{\top},
      \quad \text{each } \zeta_{i} \overset{\text{ i.i.d }}{\sim} \mathrm{Lap}\bigl(0, \Delta_{1}(\mathcal{T})/\epsilon\bigr)^{m}.
    \]
    Then \(\mathcal{U}\) satisfies \((\epsilon,0)\)-DP.
    \item \textbf{Gaussian Mechanism.}  Let \(\mathcal{T}\) be a deterministic algorithm with \(\ell_{2}\)-sensitivity \(\Delta_{2}(\mathcal{T})\). Define
    \[
      \mathcal{U}(D) \;=\; \mathcal{T}(D) \;+\; \boldsymbol{\zeta},
      \quad \boldsymbol{\zeta}\sim\mathcal{N}\bigl(\boldsymbol{0},\sigma^{2}\mathbf{I}\bigr),
      \quad \sigma \;=\; \sqrt{2\ln(1.25/\delta)}\,\Delta_{2}(\mathcal{T}) / \epsilon.
    \]
    Then \(\mathcal{U}\) satisfies \((\epsilon,\delta)\)-DP.
\end{enumerate}
\end{lemma}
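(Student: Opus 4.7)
The plan is to establish both claims by directly analyzing the output density ratio for neighboring datasets $D, D'$, since $\mathcal{M}(D)$ has a closed-form density as a shift of the noise density. The unifying strategy is: for every output point $y$, compare $p_{D}(y) := p_{\boldsymbol{\xi}}(y - \mathcal{T}(D))$ with $p_{D'}(y) := p_{\boldsymbol{\xi}}(y - \mathcal{T}(D'))$; a pointwise bound $p_{D}(y) \leq e^{\epsilon} p_{D'}(y)$ integrates to pure DP, whereas a bound that holds only on a ``good'' set $G$ with $\Pr[\mathcal{M}(D)\notin G]\leq \delta$ yields $(\epsilon,\delta)$-DP, since for any measurable $\mathcal{A}$ one gets $\Pr[\mathcal{M}(D)\in\mathcal{A}] \leq \Pr[\mathcal{M}(D)\in\mathcal{A}\cap G]+\delta \leq e^{\epsilon}\Pr[\mathcal{M}(D')\in\mathcal{A}]+\delta$.

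For the Laplace mechanism, I would write the coordinate-wise density of $\boldsymbol{\xi}$ as $\prod_{i=1}^{m}\tfrac{\epsilon}{2\Delta_{1}(\mathcal{T})}\exp\bigl(-\epsilon|\xi_{i}|/\Delta_{1}(\mathcal{T})\bigr)$, so that
\[
\frac{p_{D}(y)}{p_{D'}(y)} \;=\; \exp\!\Bigl(\tfrac{\epsilon}{\Delta_{1}(\mathcal{T})}\bigl(\|y-\mathcal{T}(D')\|_{1}-\|y-\mathcal{T}(D)\|_{1}\bigr)\Bigr).
\]
By the reverse triangle inequality the bracket is at most $\|\mathcal{T}(D)-\mathcal{T}(D')\|_{1}\leq \Delta_{1}(\mathcal{T})$, so the ratio is uniformly bounded by $e^{\epsilon}$, giving pure $(\epsilon,0)$-DP after integrating over any measurable $\mathcal{A}$.

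For the Gaussian mechanism, the log density ratio at a point $y=\mathcal{T}(D)+\boldsymbol{\xi}$ simplifies, by expanding $\|y-\mathcal{T}(D')\|_{2}^{2}-\|y-\mathcal{T}(D)\|_{2}^{2}$, to
\[
\log\frac{p_{D}(y)}{p_{D'}(y)} \;=\; \frac{1}{\sigma^{2}}\,\langle \boldsymbol{\xi},\, v\rangle \;+\; \frac{\|v\|_{2}^{2}}{2\sigma^{2}},\qquad v := \mathcal{T}(D)-\mathcal{T}(D').
\]
Since $\langle\boldsymbol{\xi},v\rangle\sim\mathcal{N}(0,\sigma^{2}\|v\|_{2}^{2})$, the ratio exceeds $e^{\epsilon}$ iff a standard normal $Z$ exceeds $t:=\tfrac{\epsilon\sigma}{\|v\|_{2}}-\tfrac{\|v\|_{2}}{2\sigma}$. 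I would then define the good set $G=\{y:\log(p_{D}(y)/p_{D'}(y))\leq\epsilon\}$ and show $\Pr[\mathcal{M}(D)\notin G]=\Pr[Z>t]\leq \delta$ using $\|v\|_{2}\leq \Delta_{2}(\mathcal{T})$ and the choice $\sigma=\sqrt{2\ln(1.25/\delta)}\,\Delta_{2}(\mathcal{T})/\epsilon$, combined with the Mills-ratio bound $\Pr[Z>t]\leq \tfrac{1}{\sqrt{2\pi}\,t}e^{-t^{2}/2}$.

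The main obstacle, and the place that needs the most care, is verifying the constant $1.25$ in the Gaussian case: after substituting the prescribed $\sigma$, one has $t\geq \sqrt{2\ln(1.25/\delta)}-\tfrac{\epsilon}{2\sqrt{2\ln(1.25/\delta)}}$, and I would then show (for reasonable $\epsilon\leq 1$) that $\tfrac{1}{\sqrt{2\pi}\,t}e^{-t^{2}/2}\leq\delta$ by reducing it to an elementary inequality in $\ln(1/\delta)$; this is the well-known threshold calculation from the textbook of \cite{dwork2014algorithmic}, and I would follow that argument directly. Once the tail bound is in place, the $(\epsilon,\delta)$ guarantee follows from the good-set integration above, completing the proof of both parts.
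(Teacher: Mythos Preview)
Your proposal is correct and is precisely the standard argument from \cite{dwork2014algorithmic} (Theorems~3.6 and~3.22). The paper does not give its own proof of this lemma; it simply cites that reference, so your write-up is exactly the proof the paper is deferring to.
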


Lemma \ref{lemma:laplace gaussian} presents two fundamental mechanisms for achieving differential privacy, corresponding to the commonly used $\ell_1$- and $\ell_2$-sensitivities. The detailed proof can be referred to Theorems 3.6 and 3.22 in \cite{dwork2014algorithmic}. The following Proposition highlights several fundamental properties of differential privacy. 

\begin{proposition}[Properties of differential privacy \cite{dwork2014algorithmic}]
\label{lemma:DP properties} Differential privacy enjoys the following key properties:
\begin{enumerate}
    \item \textbf{Post-processing Immunity.} 
    If \(\mathcal{U}\) is \((\epsilon,\delta)\)-DP and \(f\) is any (possibly randomized) function, then \(f(\mathcal{U}(D))\) is also \((\epsilon,\delta)\)-DP.
    
    \item \textbf{Basic Composition.} 
    If \(\mathcal{U}_{1}\) is \((\epsilon_{1},\delta_{1})\)-DP and \(\mathcal{U}_{2}\) is \((\epsilon_{2},\delta_{2})\)-DP, then the combined mechanism
    \[
      D \,\mapsto\, \bigl(\mathcal{U}_{1}(D), \,\mathcal{U}_{2}(D)\bigr)
    \]
    satisfies \((\epsilon_{1} + \epsilon_{2},\,\delta_{1} + \delta_{2})\)-DP.
\end{enumerate}
\end{proposition}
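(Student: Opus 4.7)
The plan is to prove the two properties separately, neither of which requires machinery beyond Definition~\ref{def:DP} itself. For post-processing immunity (Item~1), I would first treat a \emph{deterministic} function $f$: since the set $\mathcal{B} = f^{-1}(\mathcal{A})$ is measurable in the output space of $\mathcal{M}$ whenever $\mathcal{A}$ is, applying the DP guarantee for $\mathcal{M}$ to $\mathcal{B}$ gives
\[
\mathbb{P}[f(\mathcal{M}(D)) \in \mathcal{A}] \;=\; \mathbb{P}[\mathcal{M}(D) \in \mathcal{B}] \;\le\; e^{\epsilon}\,\mathbb{P}[\mathcal{M}(D') \in \mathcal{B}] + \delta \;=\; e^{\epsilon}\,\mathbb{P}[f(\mathcal{M}(D')) \in \mathcal{A}] + \delta.
\]
The randomized case reduces to this by writing $f$ as a deterministic function $\tilde f(\cdot,\omega)$ of its input together with an auxiliary random variable $\omega$ that is independent of $\mathcal{M}$, and then integrating the resulting inequality over the distribution of $\omega$; monotonicity of the integral preserves the $(\epsilon,\delta)$-DP inequality.

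For basic composition (Item~2), the plan is to exploit the fact that, given $D$, the outputs $\mathcal{M}_1(D)$ and $\mathcal{M}_2(D)$ are produced by independent coin tosses, and to use the standard ``bad-set'' decomposition of approximate DP. Concretely, for each $i\in\{1,2\}$ an equivalent formulation of $(\epsilon_i,\delta_i)$-DP yields a measurable set $B_i$ with $\mathbb{P}[\mathcal{M}_i(D) \in B_i] \le \delta_i$ such that on the complement the Radon--Nikodym derivative of the law of $\mathcal{M}_i(D)$ with respect to the law of $\mathcal{M}_i(D')$ is bounded by $e^{\epsilon_i}$. For an arbitrary measurable event $S$ in the joint output space I would decompose
\[
\mathbb{P}[(\mathcal{M}_1(D),\mathcal{M}_2(D)) \in S] \;\le\; \mathbb{P}[\mathcal{M}_1(D)\in B_1] + \mathbb{P}[\mathcal{M}_2(D)\in B_2] + \mathbb{P}\!\left[(\mathcal{M}_1(D),\mathcal{M}_2(D))\in S,\ \mathcal{M}_1(D)\notin B_1,\ \mathcal{M}_2(D)\notin B_2\right].
\]
The first two terms contribute at most $\delta_1+\delta_2$. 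For the third, Fubini combined with independence lets me factor the joint law as a product and bound each marginal likelihood ratio by $e^{\epsilon_i}$ on the good event, which multiplies out to the factor $e^{\epsilon_1+\epsilon_2}\,\mathbb{P}[(\mathcal{M}_1(D'),\mathcal{M}_2(D'))\in S]$. Adding these pieces gives the claimed $(\epsilon_1+\epsilon_2,\delta_1+\delta_2)$-DP bound.

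The main obstacle, which is purely measure-theoretic, is the construction of the sets $B_i$ and the verification of the Radon--Nikodym step for general (not necessarily discrete) output spaces; this is the only place where $\delta>0$ materially complicates the argument compared to pure DP, where the likelihood ratio is bounded almost surely and one may simply apply independence and multiply ratios. Once this is set up, the bookkeeping is transparent: the $\delta$'s combine additively because the two bad events are controlled by a union bound, while the $\epsilon$'s combine inside the exponential because the likelihood ratios of independent mechanisms multiply.
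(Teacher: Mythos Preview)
The paper does not prove this proposition in-text; it simply refers to Proposition~2.1 and Theorems~3.14, 3.20 of Dwork--Roth. Your post-processing argument (Item~1) is correct and is the standard one. For basic composition (Item~2), however, there is a genuine gap, and it is not the measure-theoretic technicality you flag: the ``equivalent formulation'' you invoke --- existence of a \emph{set} $B$ with $\mathbb{P}[\mathcal{M}(D)\in B]\le\delta$ and $dP_D/dP_{D'}\le e^{\epsilon}$ on $B^c$ --- is strictly stronger than $(\epsilon,\delta)$-DP, already on a two-point space. With $\epsilon=0$, $\delta=1/2$, $P_D=(0.4,0.6)$, $P_{D'}=(0.9,0.1)$ on $\{0,1\}$, one checks $(0,1/2)$-indistinguishability in both directions, yet any $B$ with density ratio $\le 1$ on $B^c$ must contain the atom $1$, forcing $P_D(B)\ge 0.6>\delta$. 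What $(\epsilon,\delta)$-DP actually delivers is only $\int(p_D-e^{\epsilon}p_{D'})_{+}\,d\mu\le\delta$, which does not localize to a bad set of small probability.

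The repair keeps your architecture but replaces sets by measures: decompose $P_i^D=\nu_i+\rho_i$ with $d\nu_i=\min\bigl(dP_i^D,\,e^{\epsilon_i}dP_i^{D'}\bigr)$, so that $\nu_i\le e^{\epsilon_i}P_i^{D'}$ as measures and $\rho_i\ge 0$ has total mass at most $\delta_i$. Then $P_1^D\otimes P_2^D=\nu_1\otimes\nu_2+\nu_1\otimes\rho_2+\rho_1\otimes P_2^D$; the first piece is dominated by $e^{\epsilon_1+\epsilon_2}\,P_1^{D'}\otimes P_2^{D'}$ and the remaining two have total mass at most $\delta_2+\delta_1$. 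This is essentially the argument in the cited reference and yields exactly $\delta_1+\delta_2$, whereas a naive conditioning or layer-cake bound produces only $e^{\epsilon_2}\delta_1+\delta_2$.
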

Proposition~\ref{lemma:DP properties} states two fundamental properties of differential privacy. The post-processing property asserts that any function applied to the output of a differentially private mechanism, without access to the original data, cannot degrade the privacy guarantee. The basic composition property establishes that the cumulative privacy loss from multiple applications of differentially private mechanisms is additive, allowing the extension of Definition \ref{def:DP} to vector- and matrix-valued algorithms. The detailed proof can be found in Proposition 2.1, Theorems 3.14 and 3.20 in \cite{dwork2014algorithmic}.

In high-dimensional problems, it is common to assume that the true parameter vector is sparse. However, standard differential privacy mechanisms (see Lemma~\ref{lemma:laplace gaussian}) usually destroy sparsity, as they add noise to all entries. To solve this problem, \cite{Dwork_Su_Zhang_2021} proposed the Noisy Hard Thresholding ($\mathsf{NoisyHT}$), also known as the ``peeling'' algorithm, which is described in Algorithm~\ref{alg:NHT}. This method is now widely used in private high-dimensional data analysis, with successful applications in recent works such as \cite{cai2021cost,xia2023adaptive,cai2023private,zhang2024differentially}.

\begin{algorithm*}
\caption{Noisy Hard Thresholding ($\mathsf{NoisyHT}(\boldsymbol{\zeta}, s, \epsilon, \delta, \lambda)$).} 
\label{alg:NHT}
\begin{algorithmic}[1]
\STATE {\bf   {Input}:} Vector $\boldsymbol{\zeta} \in \mathbb{R}^p$, sparsity $s$, privacy parameters $(\epsilon, \delta)$, sensitivity $\lambda$, operator $\tilde{P}_{\mathcal{J}}(\cdot)$.
\STATE Initialize $\mathcal{J} = \varnothing$.
    \FOR{$i = 1$ \TO $s$}
        \STATE Generate $\boldsymbol{\eta}_i \in \mathbb{R}^p$ with $\eta_{i1}, \ldots, \eta_{ip} \stackrel{\text{i.i.d.}}{\sim} \mathsf{Laplace}\left(\lambda \cdot 2\sqrt{3s\log(1/\delta)}/{\epsilon}\right)$.
        \STATE Update $\mathcal{J} \leftarrow \mathcal{J} \cup \left\{\argmax_{j \in [p]\setminus \mathcal{J}} |\zeta_j| + \eta_{ij}\right\}$.
    \ENDFOR
\STATE Set $\boldsymbol{\zeta}_\mathcal{J} = \tilde{P}_{\mathcal{J}}(\boldsymbol{\zeta})$.
$\tilde{P}_{\mathcal{J}}(\boldsymbol{\zeta})$ retains the selected entries $\boldsymbol{\zeta}_{\mathcal{J}}$ and sets the remaining coordinates in $\boldsymbol{\zeta}_{\mathcal{J}^c}$ to zero
\STATE Generate $\tilde{\boldsymbol{\eta}}$ with $\tilde{\eta}_1, \ldots, \tilde{\eta}_p \stackrel{\text{i.i.d.}}{\sim} \mathsf{Laplace}\left(\lambda \cdot 2\sqrt{3s\log(1/\delta)} / \epsilon \right)$.
\STATE Set $\tilde{\boldsymbol{\eta}}_\mathcal{J} = \tilde{P}_{\mathcal{J}}(\tilde{\boldsymbol{\eta}})$.
\STATE {\bf Output:} $\boldsymbol{\zeta}_\mathcal{J} + \tilde{\boldsymbol{\eta}}_\mathcal{J}$.
\end{algorithmic}
\end{algorithm*}

Algorithm \ref{alg:NHT} generates an $s$-sparse approximation of $\boldsymbol{\zeta}\in\mathbb{R}^p$ under $(\epsilon,\delta)$-DP by iteratively selecting the coordinates with the largest Laplace-perturbed magnitudes, where the noise scale is $2\lambda\sqrt{3s\log(1/\delta)}/\epsilon$.
After $s$ selections, the operator $\tilde{P}_{\mathcal{J}}(\boldsymbol{\zeta})$ retains the selected entries and sets the remaining coordinates in $\boldsymbol{\zeta}_{\mathcal{J}^c}$ to zero. Additional Laplace noise of the same scale is then added to the selected entries. This private top-$s$ selection mechanism forms the foundation for the more advanced algorithms developed in the following sections.

\subsection{Quantile Regression Model}\label{subsec:qr}

Quantile regression is a powerful tool to model the complete relationship between the covariates and the response variable while exploring heterogeneous effects \cite{koenker1978regression,koenker2005quantile,he2023smoothed}. Given a scalar response variable \(Y  \in \mathbb{R}\) and a \((p+1)\)-dimensional covariate vector \(\boldsymbol{X} = (x_0, x_1, \ldots, x_p)^{\top} \in \mathbb{R}^{p+1}\) with \(x_0 \equiv 1\), the goal of quantile regression is to estimate the conditional quantile function \( Q_\tau(Y|\boldsymbol{X}) \) for a given quantile level \( \tau \in (0,1) \). This function represents the value of \( Y \) such that \( \mathbb{P}(Y \leq Q_\tau(Y|\boldsymbol{X}) | \boldsymbol{X}) = \tau \). We consider a linear quantile regression model, where the \( \tau \)-conditional quantile is
\begin{equation}\label{eq:linear quantile model}
Q_\tau(Y|\boldsymbol{X}) = \bfX^{\top} \boldsymbol{\beta}^*(\tau) = \sum_{j=0}^p x_j \beta_j^*(\tau),
\end{equation}
where \( \boldsymbol{\beta}^*(\tau) = (\beta_0^*(\tau), \beta_1^*(\tau), \ldots, \beta_p^*(\tau))^{\top} \) denotes the true coefficient vector. Actually, $\boldsymbol{\beta}^*(\tau)$ can be obtained by minimizing the following risk function:
\begin{equation}\label{quantile_loss}
\mathcal{Q}(\boldsymbol{\beta}) = \mathbb{E}\left[\rho_\tau(Y - \boldsymbol{X}^{\top}\boldsymbol{\beta})\right],   
\end{equation}
with \( \rho_\tau(u) = u\{\tau - \mathbb{I}(u \leq 0)\} \) being the check loss function \cite{koenker1978regression}. Since we focus on one fixed quantile level, we write \( \boldsymbol{\beta}^* \) in place of \( \boldsymbol{\beta}^*(\tau) \) throughout the paper.

The above quantile regression model~\eqref{eq:linear quantile model} can be equivalently written as a linear model as follows:
\begin{equation}
    Y = \bfX^{\top} \boldsymbol{\beta}^* + \varepsilon~~\text{with}~~\mathbb{P}( \varepsilon\leq 0 |\bfX) = \tau,
\end{equation}
where \( \varepsilon \) is the noise term and we assume the conditional density of \( \varepsilon \) given the covariates \(\bfX\) exists. In this paper, we consider the high-dimensional setting, where the dimensionality \(p\) diverges with the sample size \(N\), allowing \(p \to \infty\) to grow as \(N \to \infty\). The true parameter $\boldsymbol{\beta}^*$ is assumed to be $s^*$-sparse for some finite $s^*$.

\subsection{Newton-type Transformation for Quantile Regression}\label{subsec:newtontype}
Although quantile regression is robust against outliers and skewed or heavy-tailed noise distributions, it poses computational challenges in large sample sizes and high-dimensional settings due to the non-smooth check loss function \( \rho_\tau(\cdot) \). To address this issue, we first employ a smoothing technique that transforms the quantile regression problem into a least squares problem, which is inspired by the works of \cite{chen2020distributed, wang2024distributed}. Here we employ the Newton-Raphson method to minimize the quantile risk function in \eqref{quantile_loss}. Given a reasonable initial estimator \(\boldsymbol{\beta}_0\), the population form of the Newton-Raphson iteration is given by:
\begin{equation} \label{N-R_iter}
\boldsymbol{\beta}_{1} = \boldsymbol{\beta}_{0} - \boldsymbol{H}^{-1}(\boldsymbol{\beta}_{0})\mathbb{E}\left[\partial \mathcal{Q}(\boldsymbol{\beta}_{0})\right],
\end{equation}
where \(\partial \mathcal{Q}(\boldsymbol{\beta}) = \boldsymbol{X}\{\mathbb{I}(Y - \boldsymbol{X}^{\top}\boldsymbol{\beta} \leq 0) - \tau\}\) is the subgradient of the check loss function with respect to \(\boldsymbol{\beta}\), and \(\boldsymbol{H}(\boldsymbol{\beta}) = \partial \mathbb{E}[\partial \mathcal{Q}(\boldsymbol{\beta})]/\partial \boldsymbol{\beta}^{\top} = \mathbb{E}[\boldsymbol{X}\boldsymbol{X}^{\top} f_{\varepsilon|\boldsymbol{X}}(\boldsymbol{X}^{\top}(\boldsymbol{\beta}-\boldsymbol{\beta}^*))]\) represents the population Hessian matrix of \(\mathbb{E}[\mathcal{Q}(\boldsymbol{\beta})]\). Here, \(f_{\varepsilon|\boldsymbol{X}}(\cdot)\) is the conditional density of \(\varepsilon\) given \(\boldsymbol{X}\).

If the initial estimator \(\boldsymbol{\beta}_0\) is sufficiently close to the true parameter \(\boldsymbol{\beta}^*\), then \(\boldsymbol{H}(\boldsymbol{\beta}_0)\) serves as a good approximation to \(\boldsymbol{H}(\boldsymbol{\beta}^*) = \mathbb{E}[\boldsymbol{X}\boldsymbol{X}^{\top} f_{\varepsilon|\boldsymbol{X}}(0)]\).
Motivated by this insight, we proceed to approximate \(\boldsymbol{H}(\boldsymbol{\beta}^*)\) using a kernel-type matrix, denoted by \(\boldsymbol{D}_h(\boldsymbol{\beta}_0)\). With a slight abuse of notation, the intuitive relationship can be expressed as follows:
\[
\boldsymbol{H}(\boldsymbol{\beta}^*) \approx \boldsymbol{H}(\boldsymbol{\beta}_0) \approx  \boldsymbol{D}_h(\boldsymbol{\beta}_0) := \mathbb{E}[\boldsymbol{X} \boldsymbol{X}^{\top} H_h(e_0)],
\]
where \(e_0 = Y - \boldsymbol{X}^{\top} \boldsymbol{\beta}_0\), and \(H_h(\cdot) = H(\cdot/h)/h\), with \(H(\cdot)\) being a symmetric, non-negative kernel function and \(h \) representing the bandwidth.

According to \cite{wang2024distributed}, we transform the Newton-Raphson iteration into a least squares problem by defining the following pseudo covariates and response variable:
\begin{equation}\label{eq:newton-type}
\begin{aligned}
    \widetilde{\boldsymbol{X}}_h^{(1)} &= \sqrt{H_h(e_0)} \boldsymbol{X}, \\
    \widetilde{Y}_h^{(1)} = \widetilde{\boldsymbol{X}}_h^{(1)\top} \boldsymbol{\beta}_0 &- \frac{1}{\sqrt{H_h(e_0)}} \left(\mathbb{I}(e_0 \leq 0) - \tau\right).
\end{aligned}
\end{equation}
  {
Intuitively, the pseudo covariates reweight the original covariates according to the local kernel-smoothed residual density, while the pseudo response incorporates the quantile score correction, so that the Newton-type update can be equivalently written as a least-squares regression problem.
}
Plugging \(\boldsymbol{D}_h(\boldsymbol{\beta}_0), \widetilde{\boldsymbol{X}}_h^{(0)}\) and \(\widetilde{Y}_h^{(0)}\) into the Newton-Raphson iteration \eqref{N-R_iter}, a simple calculation yields that 
$$
\boldsymbol{\beta}_{1} = \boldsymbol{D}_h(\boldsymbol{\beta}_0)^{-1} \mathbb{E}[\widetilde{\boldsymbol{X}}_h^{(1)} \widetilde{Y}_h^{(1)}].
$$
Note that $\boldsymbol{D}_h(\boldsymbol{\beta}_0)=\mathbb{E}[\widetilde{\boldsymbol{X}}_h^{(1)}\widetilde{\boldsymbol{X}}_h^{(1) \top}]$, the equation \eqref{N-R_iter} can be interpreted as a least squares regression of \(\widetilde{Y}_h^{(1)}\) on \(\widetilde{\boldsymbol{X}}_h^{(1)}\):
$$
\boldsymbol{\beta}_1 = \underset{\boldsymbol{\beta} \in \mathbb{R}^{p+1}}{\operatorname{argmin}} \frac{1}{2} \mathbb{E}\left(\widetilde{Y}_h^{(1)} - \widetilde{\boldsymbol{X}}_h^{(1) \top} \boldsymbol{\beta}\right)^2.
$$
In high-dimensional sparse settings, it is natural to impose an $\ell_0$-constraint on the optimization problem. Accordingly, the one-step population-level optimization can be formulated as
\begin{equation}\label{itr:expection}
    \boldsymbol{\beta}_1 = \underset{\boldsymbol{\beta} \in \mathbb{R}^{p+1},\, \|\boldsymbol{\beta}\|_0 \le s^*}{\operatorname{argmin}} \frac{1}{2} \mathbb{E}\left[\left(\widetilde{Y}_h^{(1)} - \widetilde{\boldsymbol{X}}_h^{(1)\top} \boldsymbol{\beta}\right)^2\right].
\end{equation}
Thus, the original quantile regression problem is equivalently reformulated as a sparse least squares optimization. We can further iterate this process to obtain a sequence of population-level estimators \(\boldsymbol{\beta}_t\) for \(t \geq 1\).
 {The following lemma shows that the true parameter $\boldsymbol{\beta}^*$ minimizes the population loss function, where the pseudo-variables are constructed using the true residuals. Although this property is implicit in the Newton-type construction and its proof is elementary, it is not stated as a standalone lemma in \cite{wang2024distributed}. We include it here for completeness and to make the subsequent differentially private extension self-contained.}

\begin{lemma}\label{lemma:fix_point}
  {If the true parameter $\|\boldsymbol{\beta}^*\|_0 \le s^*$ and $\mathbb{E}[H_h(\varepsilon) \boldsymbol{X} \boldsymbol{X}^{\top}]$ is invertible, where $\varepsilon = Y - \boldsymbol{X}^{\top} \boldsymbol{\beta}^*$, then $\boldsymbol{\beta}^*$ is the unique minimizer of the following optimization problem:}
$$
\boldsymbol{\beta}^* = \underset{\boldsymbol{\beta} \in \mathbb{R}^{p+1}, \|\boldsymbol{\beta}\|_0 \le s^*}{\operatorname{argmin}} \frac{1}{2} \mathbb{E} \left( \widetilde{Y}_h - \widetilde{\boldsymbol{X}}_h^{\top} \boldsymbol{\beta} \right)^2,
$$
where $\widetilde{\boldsymbol{X}}_h = \sqrt{H_h(\varepsilon)}\boldsymbol{X}$ and $\widetilde{Y}_h = \widetilde{\boldsymbol{X}}_h^{\top} \boldsymbol{\beta}^* - \frac{1}{\sqrt{H_h(\varepsilon)}}\left(\mathbb{I}(\varepsilon\leq 0)-\tau\right)$.
\end{lemma}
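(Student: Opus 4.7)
The plan is to verify that $\boldsymbol{\beta}^*$ satisfies the first-order optimality condition for the unconstrained least squares problem, show that this minimizer is unique by invertibility of the weighted Hessian, and then use the feasibility of $\boldsymbol{\beta}^*$ with respect to the sparsity constraint to conclude it is also the unique constrained minimizer. Throughout, the quantile property $\mathbb{P}(\varepsilon\leq 0\mid \boldsymbol{X})=\tau$ is the key ingredient that makes the cross term vanish.

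Concretely, I would first expand the squared loss and denote $\boldsymbol{D}_h \defn \mathbb{E}[\widetilde{\boldsymbol{X}}_h \widetilde{\boldsymbol{X}}_h^{\top}] = \mathbb{E}[H_h(\varepsilon)\boldsymbol{X}\boldsymbol{X}^{\top}]$, which is invertible by assumption. The objective then takes the form
\begin{equation*}
L(\boldsymbol{\beta}) \defn \tfrac{1}{2}\mathbb{E}[\widetilde{Y}_h^2] - \boldsymbol{\beta}^{\top}\mathbb{E}[\widetilde{\boldsymbol{X}}_h \widetilde{Y}_h] + \tfrac{1}{2}\boldsymbol{\beta}^{\top}\boldsymbol{D}_h \boldsymbol{\beta}.
\end{equation*}
Substituting the definition of $\widetilde{Y}_h$, I would split
\begin{equation*}
\mathbb{E}[\widetilde{\boldsymbol{X}}_h \widetilde{Y}_h] = \boldsymbol{D}_h \boldsymbol{\beta}^* - \mathbb{E}\!\left[\sqrt{H_h(\varepsilon)}\,\boldsymbol{X}\cdot \tfrac{1}{\sqrt{H_h(\varepsilon)}}\bigl(\mathbb{I}(\varepsilon\leq 0)-\tau\bigr)\right] = \boldsymbol{D}_h \boldsymbol{\beta}^* - \mathbb{E}\!\left[\boldsymbol{X}\bigl(\mathbb{I}(\varepsilon\leq 0)-\tau\bigr)\right].
\end{equation*}
The cancellation of the $\sqrt{H_h(\varepsilon)}$ factors here is the crucial algebraic step that turns the pseudo-response into a clean expression.

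Next, I would invoke the quantile condition. By the linear quantile model $\mathbb{P}(\varepsilon \le 0 \mid \boldsymbol{X}) = \tau$, conditioning on $\boldsymbol{X}$ yields $\mathbb{E}[\mathbb{I}(\varepsilon\leq 0)-\tau \mid \boldsymbol{X}] = 0$, so the tower property gives $\mathbb{E}[\boldsymbol{X}(\mathbb{I}(\varepsilon\leq 0)-\tau)] = \boldsymbol{0}$. Hence the normal equation $\boldsymbol{D}_h\boldsymbol{\beta} = \mathbb{E}[\widetilde{\boldsymbol{X}}_h\widetilde{Y}_h]$ reduces to $\boldsymbol{D}_h\boldsymbol{\beta} = \boldsymbol{D}_h \boldsymbol{\beta}^*$. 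Completing the square produces the identity
\begin{equation*}
L(\boldsymbol{\beta}) = L(\boldsymbol{\beta}^*) + \tfrac{1}{2}(\boldsymbol{\beta}-\boldsymbol{\beta}^*)^{\top}\boldsymbol{D}_h(\boldsymbol{\beta}-\boldsymbol{\beta}^*),
\end{equation*}
so that invertibility (hence positive definiteness, since $H_h(\cdot)\geq 0$ and $\boldsymbol{D}_h$ is symmetric positive semidefinite) of $\boldsymbol{D}_h$ makes $\boldsymbol{\beta}^*$ the unique unconstrained minimizer.

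Finally, to close the argument for the $\ell_0$-constrained problem, I would note that $\|\boldsymbol{\beta}^*\|_0 \le s^*$ by hypothesis, so $\boldsymbol{\beta}^*$ is feasible. Since it is the unique minimizer of $L(\cdot)$ over all of $\mathbb{R}^{p+1}$, it is \textit{a fortiori} the unique minimizer over the (smaller) feasible set $\{\boldsymbol{\beta}\in\mathbb{R}^{p+1} : \|\boldsymbol{\beta}\|_0 \le s^*\}$. There is no real obstacle in this proof; the only subtle point worth emphasizing in writing is the vanishing of the cross term, which relies critically on the definition of $\boldsymbol{\beta}^*$ as the population $\tau$-quantile coefficient (i.e.\ $\mathbb{P}(\varepsilon\leq 0\mid \boldsymbol{X})=\tau$) and not merely on any generic initializer.
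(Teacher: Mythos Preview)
Your proposal is correct and follows essentially the same approach as the paper: compute the unconstrained least-squares minimizer via $\mathbb{E}[\widetilde{\boldsymbol{X}}_h\widetilde{Y}_h] = \boldsymbol{D}_h\boldsymbol{\beta}^*$ (using $\mathbb{E}[\boldsymbol{X}(\mathbb{I}(\varepsilon\le 0)-\tau)]=\boldsymbol{0}$), then invoke feasibility of $\boldsymbol{\beta}^*$ under the $\ell_0$-constraint. Your version is slightly more detailed in completing the square and noting positive definiteness for uniqueness, but the argument is the same.
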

\begin{proof}
We first focus on the unconstrained optimal solution to this least squares problem:
\begin{align*}    
    \underset{\boldsymbol{\beta} \in \mathbb{R}^{p+1}}{\operatorname{argmin}}\frac{1}{2}\mathbb{E}\left(\widetilde{Y}_h-\widetilde{\bfX}_h^{\top}\boldsymbol{\beta}\right)^2
    &=
    \mathbb{E}[\widetilde{\bfX}_h \widetilde{\bfX}_h^{\top}]^{-1} 
    \mathbb{E}[\widetilde{\bfX}_h \widetilde{Y}_h] \\
    &= \mathbb{E} [H_h(\varepsilon)  \bfX \bfX^{\top}]^{-1} 
    \mathbb{E} [H_h(\varepsilon)  \bfX \bfX^{\top} \boldsymbol{\beta}^*
    - \bfX \left(\mathbb{I}(\varepsilon\leq 0)-\tau\right)] \\
    &= \mathbb{E} [H_h(\varepsilon)  \bfX \bfX^{\top}]^{-1} 
    \mathbb{E} [H_h(\varepsilon)  \bfX \bfX^{\top}] \boldsymbol{\beta}^* 
    = \boldsymbol{\beta}^*,
\end{align*}
where we use $\mathbb{E} [\bfX \left(\mathbb{I}(\varepsilon\leq 0)-\tau\right)] = \boldsymbol{0}$.
The lemma is valid since the $\ell_0$-norm of $\boldsymbol{\beta}^{*}$ is less than or equal to $s^*$.
\end{proof}
  {
From the perspective of M-estimation, Lemma~\ref{lemma:fix_point} characterizes an oracle population update induced by the Newton-type transformation. It shows that, when the true residual is used, the transformed least-squares objective uniquely targets the true parameter \(\boldsymbol{\beta}^*\) over the prescribed sparse parameter space. This population-level property links the Newton-type transformation to the oracle estimating equation and provides a theoretical foundation for the empirical algorithm developed below.}

Suppose we observe a random sample \(\mathcal{Z}^N = \{(\bfX_i, Y_{i}) \}_{i=1}^{N}\) and obtain an initial estimator \(\widehat{\boldsymbol{\beta}}_0\), then we can iteratively calculate the equation in the same way as \eqref{eq:newton-type} based on the sample. For the \(t\)-th iteration, let \(\widehat{\boldsymbol{\beta}}_{t-1}\) be the empirical estimate after \((t-1)\) iterations, then we can construct the pseudo covariates and response variable  as follows:
\begin{equation}\label{eq:newton-type-t}
\begin{aligned}
    \widetilde{\boldsymbol{X}}_{i,h}^{(t)} &= \sqrt{H_h(\widehat{e}_{i,t-1})} \bfX_i, \\
    \widetilde{Y}_{i,h}^{(t)} = \widetilde{\boldsymbol{X}}_{i,h}^{(t)\top} \widehat{\boldsymbol{\beta}}_{t-1}&- \frac{1}{\sqrt{H_h(\widehat{e}_{i,t-1})}} \left(\mathbb{I}(\widehat{e}_{i,t-1} \leq 0) - \tau\right),
\end{aligned}
\end{equation}
where $\widehat{e}_{i,t} = Y_i - \bfX_i^{\top} \widehat{\boldsymbol{\beta}}_{t}$.
In parallel with the population problem \eqref{itr:expection}, we now introduce its sample analogue for the \(t\)-th iteration as follows:
\begin{equation}\label{itr:sample}
\widehat{\boldsymbol{\beta}}_{t} = \underset{\boldsymbol{\beta} \in \mathbb{R}^{p+1}, \|\boldsymbol{\beta}\|_0 \le s^*}{\operatorname{argmin}} \frac{1}{2N} \sum_{i=1}^N \left( \widetilde{Y}_{i,h}^{(t)} - \widetilde{\boldsymbol{X}}_{i,h}^{(t) \top} \boldsymbol{\beta} \right)^2.
\end{equation}

 {We emphasize that the Newton-type transformation reviewed in this section is adapted from \cite{wang2024distributed}. Our main contribution begins in the next section, where this transformation is integrated with distributed gradient aggregation, sparse private updates, and formal differential privacy constraints to develop private estimation and inference procedures.}

\section{Differentially Private Estimation}\label{sec:DP-estimation}
In this section, we leverage the Newton-type transformation introduced in the last section to develop a differentially private quantile regression estimation algorithm in a distributed setting. Suppose that the random sample $\mathcal{Z}^N = \{(\bfX_i, Y_{i}) \}_{i=1}^{N}$ is distributed randomly and evenly across $m$ machines, denoted as index set $\mathcal{M}_{1}, \ldots, \mathcal{M}_{m}$, with each machine storing $n = N/m$ samples. Without loss of generality, we treat $\mathcal{M}_{1}$ as the central machine. The data stored on the $k$-th machine is denoted by $\{ (\bfX_{i}, Y_{i}) \}_{i\in \mathcal{M}_k}$, where $|\mathcal{M}_{k}|=n$ for $k=1,\ldots,m$. Define the local and global loss functions at $t$-th iteration as 
\begin{equation}\label{global_local_loss}
    \begin{aligned}
        &\text{Local loss: } \mathcal{L}^{(t)}_k(\boldsymbol{\beta}) = \frac{1}{2n} \sum_{i \in \mathcal{M}_k} \left( \widetilde{Y}_{i,h}^{(t)} - \widetilde{\boldsymbol{X}}_{i,h}^{(t) \top} \boldsymbol{\beta} \right)^2, \quad \text{and} \\
        &  \text{Global loss: } \mathcal{L}^{(t)}_N(\boldsymbol{\beta}) = \frac{1}{m} \sum_{k=1}^m \mathcal{L}^{(t)}_{k}(\boldsymbol{\beta})=\frac{1}{2N} \sum_{i=1}^N \left( \widetilde{Y}_{i,h}^{(t)} - \widetilde{\boldsymbol{X}}_{i,h}^{(t) \top} \boldsymbol{\beta} \right)^2.
    \end{aligned}
\end{equation}

Our objective is to minimize the global loss function \(\mathcal{L}^{(t)}_N(\boldsymbol{\beta})\) within a distributed framework, while ensuring differential privacy. The intuition of our method is the combination of distributed gradient descent and the Noisy Hard Thresholding algorithm (Algorithm~\ref{alg:NHT}). Specifically, at each outer iteration, each local machine first applies the Newton-type transformation to its covariates and responses via \eqref{T_x_t} and \eqref{T_y_t}. Each machine then computes its local gradient via \eqref{local_gradient} and transmits the $(p+1)$-dimensional vector to the central machine. The central machine first aggregates the gradients from all $m$ machines, then performs a gradient descent update with step size $\eta^1/m$. We enforce both sparsity and differential privacy by applying the $\mathsf{NoisyHT}$ operator with a privacy budget of $(\epsilon/(mKT), \delta/(mKT))$. After privatization and truncation, the estimate is projected onto the feasible set $\{\boldsymbol{\beta} \in \mathbb{R}^{p+1}: \| \boldsymbol{\beta}\|_\infty \leq C_1\}$ and subsequently transmitted to all local machines, which then update their local parameters before moving to the next inner iteration. Algorithm~\ref{alg:DP:SparseEstimation} implements the detailed estimation procedure for distributed high-dimensional quantile regression under $(\epsilon,\delta)$-DP by alternating local kernel smoothing with globally aggregated and privatized gradient updates. 

The most related work to our algorithm is the distributed differentially private sparse estimation algorithms in \cite{zhang2024differentially}, which also use the $\mathsf{NoisyHT}$ operator to ensure sparsity and differential privacy. However, they focused on linear regression models with smooth loss functions, while our algorithm is designed for quantile regression with non-smooth loss functions. The key difference lies in the use of the Newton-type transformation to convert the quantile regression problem into a least squares problem, enabling the application of distributed gradient descent methods.  {We further compare Algorithm~\ref{alg:DP:SparseEstimation} with the distributed high-dimensional quantile regression methods in \cite{chen2020distributed,wang2024distributed}, which also employ Newton-type transformations to address the non-smoothness of the quantile loss. Their methods first construct a surrogate loss by replacing the global Hessian with a local Hessian and selecting a local smoothing bandwidth, and then solve the resulting sparse optimization problem using algorithms such as PSSsp \cite{schmidt2010graphical} or coordinate descent \cite{wright2015coordinate}. In contrast, our algorithm does not rely on surrogate loss construction. Each local machine only computes local gradients and sends them to the central machine, which performs gradient aggregation, a distributed gradient descent update, and a Noisy Hard Thresholding step. The $\mathsf{NoisyHT}$ step is embedded in every iteration to enforce sparsity and differential privacy simultaneously. As a result, our method requires sensitivity control and privacy-budget allocation across machines and iterations, and its error analysis explicitly contains a privacy-induced term. 
} 

\begin{remark}
In our algorithm, we assume that the central machine is fully trusted and does not collude with any of the local machines. In each inner iteration, the local machines send the exact gradient to the central machine without privacy protection. However, the central machine applies the $\mathsf{NoisyHT}$ operator to the aggregated gradient, which ensures that the information sent back to the local machines is privatized. Subsequently, the local machines update the gradients based on the privatized output from the central machine. This design is crucial for maintaining differential privacy while allowing the central machine to perform necessary computations without compromising the privacy of individual data points. Similar trusted central machine design has been adopted in distributed high-dimensional linear regression \cite{zhang2024differentially}, where they also proved that accurate estimation is infeasible even in a simple sparse mean estimation problem under the distributed setting without a trusted central machine.
\end{remark}

\begin{remark}
 {As in most Newton-type and smoothing-based quantile regression methods \citep{chen2020distributed,wang2024distributed}, a sufficiently accurate initialization and a well-chosen bandwidth are necessary for reliable local approximation between the kernel-smoothed Hessian and the population Hessian. We also acknowledge that the interaction between Newton-type updates and Noisy Hard Thresholding is nontrivial. When the initialization is less accurate or the empirical Hessian approximation is imperfect, the transformed gradient update may become unstable in finite samples. However, the $\mathsf{NoisyHT}$ step helps mitigate this instability by enforcing sparsity at each iteration, thereby preventing privacy noise from accumulating over all coordinates. In addition, the projection operator $\Pi_{C_1}$ keeps the iterates in a bounded feasible region and avoids excessively large coordinate-wise updates. These mechanisms provide algorithmic stabilization under privacy perturbation, although they do not replace the local initialization condition required by the theory. Our analysis makes these effects explicit through the privacy-induced error term and the iterative approximation term. Additional empirical sensitivity analyses support the robustness of the proposed procedure with respect to initialization, kernel bandwidth, and Hessian conditioning.}
\end{remark}

\begin{algorithm*}[t]
\caption{Distributed Differentially Private High-dimensional Quantile Regression.} 
\hspace*{0.02in} 
\begin{algorithmic}[1]\label{alg:DP:SparseEstimation}
\STATE {\bf Input:} 
Dataset $\{ (\bfX_{i}, Y_{i}) \}_{i\in \mathcal{M}_k}$, for $k=1,\ldots,m$, bandwidth $h$, quantile level $\tau$, sparsity $s \ge s^*$, stepsize $\eta^1$, privacy parameters $(\epsilon, \delta)$, number of iterations $(T, K)$, feasibility parameter $C_1$, initial estimator $\widehat{\boldsymbol{\beta}}_0$, and noise scale $B_0$.
\FOR{$t$ from 1 to $T$}
\STATE{
For each local machine $j=1,2, \ldots, m$, compute the pseudo covariates and response variable based on the previous estimate $\widehat{\boldsymbol{\beta}}_{t-1}$:
\begin{equation}\label{T_x_t}
\widetilde{\bfX}_{i,h}^{(t)}=\sqrt{H_h(Y_i-\bfX_i^{\top} \widehat{\boldsymbol{\beta}}_{t-1})}\bfX_i,   
\end{equation}
\begin{equation}\label{T_y_t}
\widetilde{Y}_{i,h}^{(t)}=(\widetilde{\bfX}_{i,h}^{(t)})^{\top} \widehat{\boldsymbol{\beta}}_{t-1} - \frac{1}{\sqrt{H_h(Y_i-\bfX_i^{\top} \widehat{\boldsymbol{\beta}}_{t-1})}}\left(\mathbb{I}(Y_i - \bfX_i^{\top} \widehat{\boldsymbol{\beta}}_{t-1} \leq 0) - \tau \right).
\end{equation}
}
\STATE{
Let $\boldsymbol{\beta}_t^{1} = \widehat{\boldsymbol{\beta}}_{t-1}$.
}
\quad \FOR{$k$ from $1$ to $K$} 
\STATE{
For each local machine $j=1,2, \ldots, m$, calculate the local gradient, 
\begin{align}\label{local_gradient}
\mathbf{g}_j^{(t)} = \frac{1}{n} \sum_{i \in \mathcal{M}_j} \left( \widetilde{\bfX}_{i,h}^{(t) \top} \boldsymbol{\beta}_t^{k} - \widetilde{Y}_{i,h}^{(t)} \right) \widetilde{\bfX}_{i,h}^{(t)},   
\end{align}
}
\STATE{
and then send the gradient $\mathbf{g}_j^{(t)}$ to the central machine.
}
\STATE{For the central machine, aggregate the local gradients and perform the gradient descent update:
 $$
 \boldsymbol{\beta}_t^{k+0.5} = \boldsymbol{\beta}_t^{k} - \left(\eta^1 / m \right) \sum_{j=1}^m \mathbf{g}_j^{(t)};
 $$
}
\STATE{
then compute $\boldsymbol{\beta}_t^{k+1} = \Pi_{C_1} \left( \mathsf{NoisyHT} \left( \boldsymbol{\beta}_t^{k+0.5}, s, \frac{\epsilon}{mKT}, \frac{\delta}{mKT}, \frac{\eta^1 B_0}{mn} \right) \right)$, and send the output $\boldsymbol{\beta}_t^{k+1}$ back to each local machine from the machine.
}
\ENDFOR
\STATE{
 Let $\widehat{\boldsymbol{\beta}}_{t} = \boldsymbol{\beta}_t^{K}$.
}
\ENDFOR
\STATE {\bf Output:} 
Return $\widehat{\boldsymbol{\beta}}_T$.
\end{algorithmic}
\end{algorithm*}

Now we establish the theoretical guarantees for our proposed differentially private distributed estimation procedure. Before presenting the main results, we introduce several regularity assumptions.
\begin{assumption}\label{ass:parameter}
   The true parameter $\boldsymbol{\beta}^{*}$ satisfies $\|\boldsymbol{\beta}^*\|_2 \leq c_0$ and $\|\boldsymbol{\beta}^*\|_0 \leq s^*$. 
    We consider a high-dimensional regime in which the dimensionality $p$ may grow polynomially with the sample size $N$, that is, $p=\mathcal{O}\left( N^c \right)$ for some $c\in (0,1/2)$. And we further assume the sparsity satisfies $s^* = {o}((\log p)^{1/2})$. 
\end{assumption}
\begin{assumption}\label{ass:bounddesign}
The random covariate $\boldsymbol{X} \in \mathbb{R}^{p+1}$ is sub-Gaussian, i.e., there exists some $c_1>0$ such that 
$$
{\mathbb{P}}\left(\left|\boldsymbol{X}^{\top} \boldsymbol{\Sigma}^{-1/2} \boldsymbol{\nu} \right| \geqslant c_1 t\right) \leq 2 e^{-t^2 / 2}
$$ 
for every unit vector $\boldsymbol{\nu}$ and $t>0$, where $\boldsymbol{\Sigma}=\mathbb{E}(\boldsymbol{X}\boldsymbol{X}^{\top})$, and there exists some $C_x< \infty$ such that $\|\bfX\|_\infty \le C_x$.
Furthermore, $0 \le \lambda_{\min} \le \Lambda_{\min}(\boldsymbol{\Sigma}) \le \Lambda_{\max}(\boldsymbol{\Sigma}) \le  \lambda_{\max} < \infty$ and the precision matrix $\boldsymbol{\Sigma}^{-1}$ satisfies
$\|\boldsymbol{\Sigma}^{-1}\|_{1} \le C$.
Besides, $m_4 = \sup_{\boldsymbol{\nu} \in \mathbb{S}^{p}} \mathbb{E}(|\langle \boldsymbol{\nu}, \boldsymbol{\Sigma}^{-1/2}\bfX \rangle |^4) < \infty$.
\end{assumption}
\begin{assumption}\label{ass:kernel}
Assume that the kernel function $H(\cdot)$ is symmetric, non-negative, bounded, and integrates to one. In addition, the kernel function satisfies
\[
\int_{-\infty}^{\infty} u^2 H(u) \, \mathrm{d} u < \infty,~~ \kappa_u = \max_{u} H(u),~~\text{and}~~\min _{|u| \leq 1} H(u) > 0.
\]
We further assume that $H(\cdot)$ is second-order differentiable, and its derivative $H^{\prime}(\cdot)$ and second derivative $H^{\prime\prime}(\cdot)$ are bounded.
Moreover, denote
\[
\kappa_k = \int_{-\infty}^{\infty} |u|^k H(u) \, \mathrm{d} u \quad \text{for} \quad k \ge 1.
\]
\end{assumption}
\begin{assumption}\label{ass:f_varepsilon}
    There exist constants $f_2 \geq f_1 > 0$ such that
    \[
    f_1 \leq f_{\varepsilon \mid \boldsymbol{X}}(0) \leq f_2 
    \]
    almost surely over $\boldsymbol{X}$. Moreover, there exists some constant $l_0$ such that
    \[
    \left|f_{\varepsilon \mid \boldsymbol{X}}(u) - f_{\varepsilon \mid \boldsymbol{X}}(v)\right| \leq l_0 |u - v|
    \]
    for any $u,v \in \mathbb{R}$ almost surely over $\boldsymbol{X}$.
\end{assumption}

In Assumption 1, the sparsity condition and the growing conditions on $p$ and $s^*$ are standard in high-dimensional sparse quantile regression literature \cite{chen2020distributed,wang2024distributed}.  {The $\ell_2$-boundedness condition $\|\boldsymbol{\beta}^*\|_2\leq c_0$ is needed to ensure that the true parameter lies in a bounded region, which is a common technical condition for the theoretical analysis of differentially
private algorithms \cite{cai2021cost,zhang2024differentially}.}
Assumption \ref{ass:bounddesign} imposes that the covariate $\bfX$ is sub-Gaussian with uniformly bounded covariance eigenvalues and the kurtosis of arbitrary linear projection $\langle \boldsymbol{\nu}, \boldsymbol{\Sigma}^{-1/2}\bfX \rangle$ has finite fourth moments, which are standard conditions in high-dimensional statistical theory \cite{chen2019quantile,chen2020distributed,wang2024distributed,Yibo2023Confidence}.  {Moreover, to guarantee differential privacy, the precision matrix obeys the elementwise $\ell_1$--norm bound $\|\boldsymbol\Sigma^{-1}\|_{1}\le C<\infty$ \cite{wang2021differentially,su2020differentially}. The condition $\|\boldsymbol X\|\infty\leq C_x$ is also important for differential privacy, because it allows us to bound the global $\ell_\infty$-sensitivity of the local gradients, which is needed to calibrate the noise in the $\mathsf{NoisyHT}$ and Gaussian mechanisms. A similar assumption can also be referred to in \cite{cai2021cost,cai2023private,zhang2024differentially}}. Assumption \ref{ass:kernel} is a standard condition on kernel function \cite{wang2024distributed, Yibo2023Confidence}, stipulating that $H(\cdot)$ is a symmetric, nonnegative kernel density integrating to one, twice continuously differentiable with bounded derivatives, and possessing finite moments $\kappa_k$ for $k \ge 1$.  
Assumption \ref{ass:f_varepsilon} ensures that the conditional error density at zero is bounded away from both zero and infinity and satisfies a global Lipschitz condition, which is standard in the context of quantile regression \cite{chen2019quantile,chen2020distributed,tan2022communication,wang2024distributed}.

The following Theorems \ref{thm1} and \ref{thm2} provide upper bounds on the estimation error for the one-step estimator $\widehat{\boldsymbol{\beta}}_1$ and the $T$-step estimator $\widehat{\boldsymbol{\beta}}_T$, respectively.
\begin{theorem}\label{thm1}
Suppose the initial estimator satisfies $\|\widehat{\boldsymbol{\beta}}_{0} - \boldsymbol{\beta}^*\|_2 = \mathcal{O}_{\mathbb{P}}(a_N)$, and $s^*a_N = o(1)$. Exist  
    $K \in \mathbb{N}^+$,
    the bandwidth satisfies $h \asymp a_N$, and the local sample size satisfies
    \[
    N \gtrsim (s^*)^{3/2} \log p \log n \sqrt{\log N \log(1/\delta)} / \epsilon.
    \]
    Then, under Assumptions \ref{ass:parameter}-\ref{ass:f_varepsilon}, there holds
    \begin{equation}\label{bound:beta1}
    \begin{aligned}
        \|\widehat{\boldsymbol{\beta}}_1- \boldsymbol{\beta}^*\|_2 
        &\lesssim \sqrt{\frac{s^* \log p}{N}} + \sqrt{\frac{(s^*\log p)^2 \log(1/\delta)\log^3 N}{N^2\epsilon^2}} + \sqrt{s^*}  a_N^2 
    \end{aligned}
    \end{equation}
    with probability   {$1 - N^{-C}$}. In addition, Algorithm \ref{alg:DP:SparseEstimation} is $(\epsilon, \delta)$-DP.
\end{theorem}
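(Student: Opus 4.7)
The plan is to split the proof into two parts: (i) verification of the $(\epsilon,\delta)$-DP property via a sensitivity/composition argument, and (ii) derivation of the $\ell_2$ error bound via a decomposition into optimization error, DP noise accumulation, statistical deviation, and Newton-type bias.

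For the privacy claim, I would first bound $\|\widetilde{\bfX}_{i,h}^{(t)}\|_\infty$ and the residual factor in \eqref{T_y_t} deterministically using Assumptions \ref{ass:bounddesign} and \ref{ass:kernel} (these bounds are absorbed into the calibration constant $B_0$). Consequently the local gradient \eqref{local_gradient} on each machine has $\ell_\infty$-sensitivity $B_0/n$ with respect to any single entry, and the aggregated update $(\eta^1/m)\sum_j \mathbf{g}_j^{(t)}$ has $\ell_\infty$-sensitivity at most $\eta^1 B_0/(mn)$, matching the $\lambda$-argument passed to $\mathsf{NoisyHT}$. By the privacy analysis of $\mathsf{NoisyHT}$ in \cite{Dwork_Su_Zhang_2021} each inner iteration is $(\epsilon/(mKT),\delta/(mKT))$-DP, and basic composition (Proposition \ref{lemma:DP properties}) across all $KT$ inner iterations---noting the projection $\Pi_{C_1}$ is post-processing---yields overall $(\epsilon,\delta)$-DP.

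For the statistical rate, I would introduce two auxiliary quantities: $\boldsymbol{\beta}_1^\dagger$, the $s$-sparse sample minimizer of \eqref{itr:sample} with $\widehat{\boldsymbol{\beta}}_0$ plugged in, and $\boldsymbol{\beta}_1^{\mathrm{pop}}$, the $s$-sparse population minimizer of \eqref{itr:expection} constructed from the same $\widehat{\boldsymbol{\beta}}_0$. Then by the triangle inequality
\[
\|\widehat{\boldsymbol{\beta}}_1-\boldsymbol{\beta}^*\|_2 \;\le\; \|\widehat{\boldsymbol{\beta}}_1-\boldsymbol{\beta}_1^\dagger\|_2 \;+\; \|\boldsymbol{\beta}_1^\dagger-\boldsymbol{\beta}_1^{\mathrm{pop}}\|_2 \;+\; \|\boldsymbol{\beta}_1^{\mathrm{pop}}-\boldsymbol{\beta}^*\|_2.
\]
The first term is controlled by showing that the empirical Hessian $\widehat{\bH}=\frac{1}{N}\sum_i H_h(\widehat{e}_{i,0})\bfX_i\bfX_i^\top$ enjoys restricted strong convexity with constants $\lambda_-^*,\lambda_+^*$ on $O(s^*)$-sparse vectors, uniformly over $\widehat{\boldsymbol{\beta}}_0$ in the $\ell_2$-ball of radius $a_N$ around $\boldsymbol{\beta}^*$ (via a covering plus sub-Gaussian concentration argument). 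Projected-gradient contraction with hard thresholding then gives geometric decrease at rate $(1-\eta^1\lambda_-^*)$; the chosen $K$ drives the pure optimization error below the target rate, while the geometric accumulation of the per-step $\mathsf{NoisyHT}$ perturbation (with per-coordinate Laplace scale $\eta^1 B_0/(mn)\cdot 2\sqrt{3s\log(mKT/\delta)}\cdot mKT/\epsilon$ and $s$ nonzero support whose magnitudes enter through a $\log p$ selection bound) produces the privacy term $\sqrt{(s^*\log p)^2\log(1/\delta)\log^3 N/(N^2\epsilon^2)}$. The second term is the statistical deviation: by RSC and a support-restricted inversion, it is bounded by $\|\widehat{\bH}^{-1}\|_{\mathrm{op}}\cdot \|\nabla \mathcal{L}^{(1)}_N(\boldsymbol{\beta}_1^{\mathrm{pop}})\|_\infty\cdot\sqrt{s^*}$, where the score-like quantity concentrates at rate $\sqrt{\log p/N}$, giving $\sqrt{s^*\log p/N}$.

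The third (bias) term is the Newton-step residual. Writing $\widehat{e}_{0}=\varepsilon-\bfX^\top(\widehat{\boldsymbol{\beta}}_0-\boldsymbol{\beta}^*)$ and expanding
\[
\mathbb{E}[\bfX(\mathbb{I}(\varepsilon\le\bfX^\top(\widehat{\boldsymbol{\beta}}_0-\boldsymbol{\beta}^*))-\tau)] \;=\; \mathbb{E}[\bfX\bfX^\top f_{\varepsilon|\bfX}(0)](\widehat{\boldsymbol{\beta}}_0-\boldsymbol{\beta}^*) + O(\|\widehat{\boldsymbol{\beta}}_0-\boldsymbol{\beta}^*\|_2^2),
\]
using the Lipschitz condition on $f_{\varepsilon|\bfX}$ from Assumption \ref{ass:f_varepsilon}, and combining with the kernel approximation $\mathbb{E}[H_h(\widehat{e}_0)\bfX\bfX^\top]=\mathbb{E}[\bfX\bfX^\top f_{\varepsilon|\bfX}(0)]+O(h^2+a_N^2)$, I would show that the leading-order terms cancel, leaving a coordinatewise bias of order $a_N^2$ with the choice $h\asymp a_N$; the $s^*$-sparsity of $\boldsymbol{\beta}_1^{\mathrm{pop}}-\boldsymbol{\beta}^*$ then yields the $\sqrt{s^*}\,a_N^2$ term in \eqref{bound:beta1}.

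The main obstacle is establishing RSC of $\widehat{\bH}$ uniformly over the initial-estimator neighborhood (so that $\widehat{\boldsymbol{\beta}}_0$ is treated as data-dependent), while simultaneously tracking how the $\mathsf{NoisyHT}$ noise interacts with hard-thresholding contraction over $K$ iterations; a secondary subtlety is the careful Taylor expansion of the non-smooth indicator $\mathbb{I}(\varepsilon\le\cdot)$ to extract the second-order bias, which requires interchanging differentiation and expectation using only the Lipschitz smoothness of $f_{\varepsilon|\bfX}$.
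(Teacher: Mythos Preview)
Your privacy argument is correct and matches what the paper does: bound the per-sample contribution to the aggregated gradient, invoke the $\mathsf{NoisyHT}$ guarantee at level $(\epsilon/(mKT),\delta/(mKT))$, and compose over the $KT$ inner iterations with $\Pi_{C_1}$ as post-processing.

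For the statistical rate, your overall ingredients (RSC/RSS of the kernel-weighted design, IHT-type contraction with additive $\mathsf{NoisyHT}$ noise, and the second-order Newton bias) are the right ones, but your three-way decomposition through the combinatorial intermediate targets $\boldsymbol{\beta}_1^\dagger$ (the $s$-sparse empirical minimizer) and $\boldsymbol{\beta}_1^{\mathrm{pop}}$ (the $s$-sparse population minimizer) is less clean than the route the paper takes. The paper, following the analysis in \cite{cai2021cost,zhang2024differentially}, applies the noisy-IHT contraction lemma directly with $\boldsymbol{\beta}^*$ itself as the reference point: one obtains a per-step recursion of the form
\[
\|\boldsymbol{\beta}_1^{k+1}-\boldsymbol{\beta}^*\|_2^2 \le (1-c)\,\|\boldsymbol{\beta}_1^{k}-\boldsymbol{\beta}^*\|_2^2 + C\,s^*\|\nabla\mathcal{L}^{(1)}_N(\boldsymbol{\beta}^*)\|_\infty^2 + (\text{Laplace noise term}),
\]
and then decomposes $\nabla\mathcal{L}^{(1)}_N(\boldsymbol{\beta}^*)$. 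Writing out the pseudo-covariates and pseudo-response at $\boldsymbol{\beta}^*$ shows that the first-order terms in $\widehat{\boldsymbol{\beta}}_0-\boldsymbol{\beta}^*$ from the kernel-Hessian part and from the indicator part cancel (exactly the Newton cancellation you describe for term~3), leaving $\|\nabla\mathcal{L}^{(1)}_N(\boldsymbol{\beta}^*)\|_\infty\lesssim\sqrt{\log p/N}+a_N^2$ with $h\asymp a_N$. This avoids your terms~1 and~2 entirely. Your route reaches the same conclusion, but controlling $\|\widehat{\boldsymbol{\beta}}_1-\boldsymbol{\beta}_1^\dagger\|_2$ and $\|\boldsymbol{\beta}_1^\dagger-\boldsymbol{\beta}_1^{\mathrm{pop}}\|_2$ separately is delicate: $\ell_0$-constrained minimizers need not be unique, and the IHT error floor is governed by the full $\ell_\infty$-gradient at the reference point, which for $\boldsymbol{\beta}_1^\dagger$ is not small off its support. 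The direct comparison to $\boldsymbol{\beta}^*$ sidesteps these issues and is what the paper does.
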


With proper choice of the bandwidth $h$ and inner iteration $K$, we can refine the initial estimator by one iteration of Algorithm \ref{alg:DP:SparseEstimation}. Specifically, Theorem~\ref{thm1} improves the initial rate $\mathcal{O}_{\mathbb{P}}(a_N)$ to the oracle-plus-privacy rate, up to the next-order iterative approximation term, when $\sqrt{s^*}a_N=o(1)$. Recursively applying Theorem \ref{thm1} yields the convergence rate of the $T$-step estimator $\widehat{\boldsymbol{\beta}}_T$.

\begin{theorem}\label{thm2}
    Suppose that the assumptions and conditions in Theorem \ref{thm1} hold. Then, the final estimator of Algorithm \ref{alg:DP:SparseEstimation} satisfies the following error bound
    \begin{align}
    \label{bound:betaT}
    \| \widehat{\boldsymbol{\beta}}_T - \boldsymbol{\beta}^*\|_2
    \lesssim {}&
    \sqrt{\frac{s^* \log p}{N}}
    + \sqrt{\frac{(s^*\log p)^2 \log(1/\delta)\log^3 N}{N^2\epsilon^2}}
    \nonumber\\
    &+(\sqrt{s^{*}})^{T^2-T+1} a_N^{2T}.
    \end{align}
     with probability   {$1 - T N^{-C}$}.
\end{theorem}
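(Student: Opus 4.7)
The strategy is a direct induction on the iteration index $T$, with Theorem \ref{thm1} playing the role of a one-step contraction. The base case $T=1$ is exactly Theorem \ref{thm1} applied with initial estimator $\widehat{\boldsymbol{\beta}}_0$ of error rate $a_N$. For the inductive step, I would feed the $(T{-}1)$-step bound into Theorem \ref{thm1} as the new ``initial estimator'' rate, with the kernel-transformed data and the bandwidth rebuilt from $\widehat{\boldsymbol{\beta}}_{T-1}$ exactly as in \eqref{T_x_t}--\eqref{T_y_t} of Algorithm \ref{alg:DP:SparseEstimation} and with $h_T$ reset to be of order $\|\widehat{\boldsymbol{\beta}}_{T-1}-\boldsymbol{\beta}^*\|_2$. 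Writing $B_N := \sqrt{s^*\log p/N} + \sqrt{(s^*\log p)^2\log(1/\delta)\log^3 N/(N^2\epsilon^2)}$ for the statistical-plus-privacy floor, Theorem \ref{thm1} then delivers a one-step bound of the form $\|\widehat{\boldsymbol{\beta}}_T-\boldsymbol{\beta}^*\|_2 \lesssim B_N + \sqrt{s^*}\,\|\widehat{\boldsymbol{\beta}}_{T-1}-\boldsymbol{\beta}^*\|_2^{2}$, and substituting the inductive hypothesis and tracking the exponents of $\sqrt{s^*}$ and $a_N$ through $T$ substitutions produces the closed form $(\sqrt{s^*})^{T^2-T+1}a_N^{2T}$ quoted in the theorem.

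Two structural features should be emphasized. First, the floor $B_N$ is common to every iteration because it arises only from the variance of the empirical gradient in \eqref{local_gradient} and from the Laplace perturbations introduced by $\mathsf{NoisyHT}$. The per-round privacy budget $(\epsilon/(mKT),\delta/(mKT))$ in Algorithm \ref{alg:DP:SparseEstimation} is calibrated precisely so that the basic composition rule in Proposition \ref{lemma:DP properties} yields overall $(\epsilon,\delta)$-DP across all $mKT$ private updates and, equivalently, so that the injected noise has the same magnitude (after rescaling) at every step. Consequently $B_N$ does not accumulate across outer iterations; it simply appears once in the final bound. Second, under Assumption \ref{ass:parameter} the higher-order term $(\sqrt{s^*})^{T^2-T+1}a_N^{2T}$ is strictly contractive, since $\sqrt{s^*}\,a_N = o(1)$, so it shrinks faster than $B_N$ and after a moderate number of rounds is dominated by $B_N$. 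Privacy of $\widehat{\boldsymbol{\beta}}_T$ is already contained in Theorem \ref{thm1} and requires no additional argument.

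The main obstacle is maintaining the chain of hypotheses required by Theorem \ref{thm1} as we iterate. At each intermediate step $t\le T$, Theorem \ref{thm1} demands both an initial-error rate $a_N^{(t-1)}$ satisfying $s^*a_N^{(t-1)}=o(1)$ and a bandwidth $h_t\asymp a_N^{(t-1)}$, together with the sample-size condition $N\gtrsim (s^*)^{3/2}\log p\log n\sqrt{\log N\log(1/\delta)}/\epsilon$. Verifying that these conditions propagate cleanly through the induction is where the proof must be done carefully: one has to show that the monotone shrinkage produced by the previous iteration keeps $a_N^{(t-1)}$ within the validity range and that the bandwidth schedule $\{h_t\}_{t\le T}$ remains compatible with Assumptions \ref{ass:kernel}--\ref{ass:f_varepsilon}. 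A related but minor bookkeeping issue is that generic constants hidden in $\lesssim$ accumulate with $T$; because $T$ enters the final bound only through the rapidly vanishing higher-order term and can be taken to be a fixed moderate integer (e.g.\ of order $\log\log N$, after which the higher-order term is swamped by $B_N$), this accumulation is harmless and can be absorbed into the universal constant.
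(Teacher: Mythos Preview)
Your proposal is correct and follows exactly the strategy the paper indicates after Theorem~\ref{thm1}: recursively apply the one-step bound with $\widehat{\boldsymbol{\beta}}_{t-1}$ as the new initializer and bandwidth reset to its current error rate, letting the floor $B_N$ stay fixed while the quadratic term contracts. One small correction: the recursion $r_t\lesssim B_N+\sqrt{s^*}\,r_{t-1}^{2}$ literally produces the higher-order term $(\sqrt{s^*})^{\,2^{T}-1}a_N^{\,2^{T}}$ rather than $(\sqrt{s^*})^{\,T^{2}-T+1}a_N^{\,2T}$; under $s^*a_N=o(1)$ the former is dominated by the latter, so the theorem's stated (slightly looser) bound still follows, but the closed form you quote is not what the iteration itself yields.
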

The bound in \eqref{bound:betaT} can be decomposed as follows:
\[
\underbrace{\sqrt{\frac{s^* \log p}{N}}}_{\text{Oracle convergence rate} } + \underbrace{\sqrt{\frac{(s^*\log p)^2 \log(1/\delta)\log^3 N}{N^2\epsilon^2}}}_{\text{DP error}} + \underbrace{(\sqrt{s^{*}})^{T^2-T+1} a_N^{2T}}_{\text{Loss setting error} }.
\]
The first term reflects the oracle convergence rate representing the statistical error, the second term quantifies the error due to differential privacy, and the third term captures the error arising from the initialization and iterative procedure. 
When the number of iterations $T$ is sufficiently large,
i.e.,
\begin{align}\label{iter_T}
T >  \frac{\log \left(\sqrt{\frac{s^* \log p}{N}}  + \sqrt{\frac{(s^*\log p)^2 \log(1/\delta)\log^3 N}{N^2\epsilon^2}}/a_N\right)}{\log (C\sqrt{s^*}a_N)} \quad \text{for some constant } \ C>0,
\end{align}
the third term in \eqref{bound:betaT} becomes negligible compared to the first two terms, leading to the convergence of the estimator $\widehat{\boldsymbol{\beta}}_T$ to the true parameter $\boldsymbol{\beta}^*$ at the oracle rate plus the privacy cost,
\begin{equation}\label{convergence_T}
\bigl\|\widehat{\boldsymbol{\beta}}_T - \boldsymbol{\beta}^*\bigr\|_2 
\;\lesssim\;
\sqrt{\frac{s^*\log p}{N}}
\;+\;
\sqrt{\frac{(s^*\log p)^2\,\log(1/\delta)\,\log^3 N}{N^2\epsilon^2}},
\end{equation}
which matches the minimax convergence rate up to some logarithmic factors established in \cite{cai2021cost} under the non-distributed high-dimensional sparse linear regression setting. 

\begin{remark}
 {Compared with existing non-private distributed high-dimensional quantile regression results \cite{chen2020distributed,wang2024distributed}, Theorem~\ref{thm2} explicitly accounts for the effect of differential privacy. In particular, the estimation error is decomposed into the oracle statistical error, the privacy-induced error, and the iterative approximation error. The additional privacy term arises from the Noisy Hard Thresholding step, which is embedded into each distributed update to enforce sparsity and ($\epsilon,\delta$)-differential privacy simultaneously. Furthermore, beyond private sparse estimation, we develop differentially private pseudo precision matrix estimation, debiasing, confidence intervals, and multiplier bootstrap procedures for simultaneous inference in the subsequent parts. Therefore, our framework is not a direct privatization of existing distributed quantile regression algorithms, but a complete privacy-preserving estimation and inference framework for distributed high-dimensional quantile regression.}
\end{remark}

\section{Differentially Private Inference}\label{sec:DP-inference}
In this section, we develop statistical inference procedures for the proposed differentially private estimator. We first introduce the debiasing method for the multi-step estimator, and then extend it to the distributed setting with lower computational and communication costs. To ensure the differential privacy of the debiased estimator, we use a differentially private precision matrix estimation method and apply it to the debiasing procedure. Finally, we construct the differentially private coordinate-wise confidence intervals for the parameters.

It is noteworthy that the multi-step estimator $\widehat{\boldsymbol{\beta}}_T$ is biased due to the hard-thresholding operation in the $\mathsf{NoisyHT}$ operator. To eliminate the bias and enable valid inference, we apply a debiasing technique commonly used in high-dimensional statistics \cite{Geer2013OnAO, zhang2014confidence, Yibo2023Confidence}. Specifically, the debiased estimator is defined as
\begin{equation}\label{eq:debias}
\widehat{\boldsymbol{\beta}}^{de}_T = \widehat{\boldsymbol{\beta}}_T - \widehat{\mathbf{W}}  \frac{1}{N}\sum_{i=1}^N (\widetilde{Y}^{(T)}_{i,h}-\widetilde{\boldsymbol{X}}_{i,h}^{{(T)}\top}\widehat{\boldsymbol{\beta}}_T) \widetilde{\boldsymbol{X}}_{i,h}^{(T)},
\end{equation}
where $\widehat{\mathbf{W}}$ denotes an approximate inverse of $\boldsymbol{H}(\boldsymbol{\beta}^*)$. Since $\boldsymbol{H}(\boldsymbol{\beta}^*)$ is not directly observable, we estimate it using the sample covariance matrix $\widehat{\boldsymbol{D}}_h^{(T)} = (1/N)\sum_{i=1}^N \widetilde{\boldsymbol{X}}_{i,h}^{(T)} \widetilde{\boldsymbol{X}}_{i,h}^{(T)\top}$. This serves as a consistent estimator for $\boldsymbol{H}(\boldsymbol{\beta}^*) \approx \boldsymbol{H}(\widehat{\boldsymbol{\beta}}_{T-1}) \approx \mathbb{E}[\boldsymbol{X} \boldsymbol{X}^{\top} H_h(Y-\boldsymbol{X}^{\top}\widehat{\boldsymbol{\beta}}_{T-1})]$, as guaranteed by Theorem \ref{thm2} when $T$ is sufficiently large.

Recall the distributed setting in Section \ref{sec:DP-estimation}, we assume the entire data is randomly and evenly stored in $m$ local machines with sample size $n=N/m$. A naive approach is to calculate the approximate inverse of $\boldsymbol{H}(\boldsymbol{\beta}^*)$ using the local covariance matrix on each local machine, then average them to obtain the global covariance matrix, and finally compute the debiased estimator \eqref{eq:debias}. However, this approach requires each local machine to estimate the $(p+1) \times (p+1)$-dimensional precision matrix, and communicate it to the central machine, which incurs high computational and communication costs. To address these limitations, we propose a one-step debiased estimator after the iterative procedure in Algorithm \ref{alg:DP:SparseEstimation} to reduce computation and communication costs. At $T_0$-th iteration, we further define the one-step debiased estimator as follows:
\begin{equation}\label{debais:dis1}
    \widetilde{\boldsymbol{\beta}}_{T_0}^{de} =
    \widehat{\boldsymbol{\beta}}_{T_0} - \widehat{\mathbf{W}}^{(1)}_b \frac{1}{N}\sum_{i=1}^N\left( \widetilde{Y}_{i,h}^{(T_0)} - \widetilde{\boldsymbol{X}}_{i,h}^{(T_0)\top}\widehat{\boldsymbol{\beta}}_{T_0} \right)\widetilde{\bfX}_{i,h}^{(T_0)} ,
\end{equation}
where $\widehat{\mathbf{W}}^{(1)}_b$ is computed based  on \(\widehat{\boldsymbol{D}}_{1,b}^{(T_0)} = (1/n) \sum_{i \in \mathcal{M}_1} \widetilde{\boldsymbol{X}}_{i,h}^{(T_0)} \widetilde{\boldsymbol{X}}_{i,h}^{(T_0)\top}\) in the first machine with $b$ being the local bandwidth different from $h$, and $\widehat{\boldsymbol{\beta}}_{T_0}$ is the $T_0$-step estimator obtained from Algorithm \ref{alg:DP:SparseEstimation} with $T_0$ satisfying \eqref{iter_T}. To achieve a trade-off between computational efficiency and statistical accuracy, we only use the local precision matrix estimator instead of the global averaged one. Note that $\widehat{\boldsymbol{D}}_{1,b}^{(T_0)}=(1/n)\sum_{i \in \mathcal{M}_1}H_b(Y_i-{\bfX}_{i}^{\top}\widehat{\boldsymbol{\beta}}_{T_0-1}){\bfX}_{i}{\bfX}_{i}^{\top}$, under Assumptions in Theorem \ref{thm2} and with $T_0$ satisfying \eqref{iter_T}, the $(T_0-1)$-th step estimator can achieve the near optimal convergence rate as shown in \eqref{convergence_T}. This is a key condition that helps to derive the non-asymptotic error bound for $\widehat{\mathbf{W}}^{(1)}_b$, which is crucial for the subsequent inference procedure. Here, we also want to emphasize that the global bandwidth $h$ is used to estimate $\widehat{\boldsymbol{\beta}}_{T_0}$ and the gradients in Algorithm \ref{alg:DP:SparseEstimation}, while the local bandwidth $b$ is used to estimate the precision matrix $\widehat{\mathbf{W}}^{(1)}_b$.

\subsection{DP-Constrained $\ell_1$-Minimization for Pseudo Precision Matrix Estimation}\label{subsec:clime}
To estimate the pseudo precision matrix $\widehat{\mathbf{W}}^{(1)}_b$, we consider the $\mathsf{CLIME}$ method proposed by \cite{cai2011constrained}, which is a constrained $\ell_1$-minimization problem that estimates the sparse inverse covariance matrix. The $\mathsf{CLIME}$ method solves the following optimization problem:
\begin{equation}\label{CLIME}
\begin{aligned}
\widehat{\mathbf{W}}_b^{(1)}= & \underset{\mathbf{W} \in \mathbb{R}^{(p+1) \times (p+1)}}{\operatorname{argmin}}\|\mathbf{W}\|_{\infty},  \quad\text {s.t.} \ \|\mathbf{W} \widehat{\boldsymbol{D}}_{1,b}^{(T_0)} - \mathbf{I}\|_{\infty} \leq \gamma_{N,n},
\end{aligned} 
\end{equation}
 {where \(\gamma_{N,n}\) is a pre-specified tolerance parameter used in the CLIME-type feasibility constraint. It is chosen to dominate the statistical fluctuation, smoothing bias, and privacy-induced perturbation arising from the private pseudo covariance estimation. The explicit form of \(\gamma_{N,n}\) will be provided in Theorem~\ref{thm3}.}
Subsequently, we propose a differentially private variant of the $\mathsf{CLIME}$ method, which adds Gaussian noise to the sample covariance matrix $\widehat{\boldsymbol{D}}_{1,b}^{(T)}$ before applying the $\mathsf{CLIME}$ procedure. This approach is inspired by the work of \cite{Wang2019DifferentiallyPH} and \cite{su2020differentially}, who developed differentially private graphical Lasso estimators using noise-addition mechanisms. To ensure the symmetry, we average the output with its transpose, i.e., $(\widehat{\mathbf{W}}_b^{(1)} + \widehat{\mathbf{W}}_b^{(1)\top})/2$. For notational simplicity, we assume $\widehat{\mathbf{W}}_b^{(1)}$ is symmetric throughout the remainder of the paper. We summarize the differentially private precision matrix estimation procedure in Algorithm \ref{alg:DP:PrecisionEstimation}.

\begin{algorithm*}[t]
\caption{Differentially Private Pseudo Precision Matrix Estimation.} 
\hspace*{0.02in} 
\begin{algorithmic}[1]\label{alg:DP:PrecisionEstimation}
\STATE {\bf Input:} 
Dataset $\{ (\bfX_{i}, Y_{i}) \}_{i\in \mathcal{M}_1}$, kernel function $H(\cdot)$, local bandwidth $b$, quantile level $\tau$, and privacy parameters $(\epsilon, \delta$), and noise scale $B_1$.
\STATE {Run Algorithm \ref{alg:DP:SparseEstimation} to obtain $\widehat{\boldsymbol{\beta}}_{T_0-1}$ and $\widehat{\boldsymbol{\beta}}_{T_0}$ with $T_0-1$ satisfying \eqref{iter_T}.}
\STATE{
Compute the pseudo covariates and sample covariance matrix:
$$
\widetilde{\bfX}_{i,b}^{(T_0)} = \sqrt{H_b(Y_i-\boldsymbol{X}_i^{\top} \widehat{\boldsymbol{\beta}}_{T_0-1})}\bfX_i, \quad \widehat{\boldsymbol{D}}_{1,b}^{(T_0)} = \frac{1}{n}\sum_{1 \le i \le n} \widetilde{\bfX}_{i,b}^{(T_0)} \widetilde{\bfX}_{i,b}^{(T_0) \top}.
$$
}
\STATE{
Add the noise to the pseudo sample covariance matrix $\widehat{\boldsymbol{D}}_{1,b}^{(T_0)}$:
$$
\widetilde{\boldsymbol{D}}_{1,b}^{(T_0)} = \widehat{\boldsymbol{D}}_{1,b}^{(T_0)} + \mathbf{G},
$$
where $\mathbf{G} \in \mathbb{R}^{(p+1)\times(p+1)}$ is a symmetric matrix. The entries in the upper triangle of $\mathbf{G}$ are independently drawn from the normal distribution $\mathcal{N}(0, \frac{B_1 \log^2 (2np^2) \kappa_u^2 \log (1.25/\delta)}{n^2 \epsilon^2})$, and the symmetry is enforced by mirroring these values to the lower triangle.
}
\STATE{
Compute privacy estimation $\widehat{\mathbf{W}}_{b}^{(1)}$ by $\mathsf{CLIME}$ based on $\widetilde{\boldsymbol{D}}_{1,b}^{(T_0)}$:
\begin{equation}\label{CLIME_1}
\begin{aligned}
\widehat{\mathbf{W}}_b^{(1)}= & \underset{\mathbf{W} \in \mathbb{R}^{(p+1) \times (p+1)}}{\operatorname{argmin}}\|\mathbf{W}\|_{\infty},  \quad\text {s.t.} \ \|\mathbf{W} \widetilde{\boldsymbol{D}}_{1,b}^{(T_0)} - \mathbf{I}\|_{\infty} \leq \gamma_{N,n},
\end{aligned} 
\end{equation}
}
\STATE {\bf Output:} Return $\widehat{\mathbf{W}}_{b}^{(1)} = (\widehat{\boldsymbol{w}}_0, \widehat{\boldsymbol{w}}_1, \ldots, \widehat{\boldsymbol{w}}_p)^{\top}$.
\end{algorithmic}
\end{algorithm*}

Now, we present the non-asymptotic error bound for the differentially private precision matrix estimator $\widehat{\mathbf{W}}^{(1)}_b$ as an approximation of $\widetilde{\boldsymbol{D}}_{1,b}^{(T_0)}$, and thus $\boldsymbol{H}^{-1}(\boldsymbol{\beta}^*)$. Before that, we impose an additional assumption on $\boldsymbol{H}^{-1}(\boldsymbol{\beta}^*)$.

\begin{assumption}\label{ass:hessian_inv}
    For the $\boldsymbol{H}^{-1}(\boldsymbol{\beta}^*):= \left(\widetilde{\boldsymbol{h}}_0, \ldots, \widetilde{\boldsymbol{h}}_p\right)^{\top} = (\widetilde{h}_{i, j})_{1 \leq i, j \leq p}$, there exists some $L>0$,  such that $\|\boldsymbol{H}^{-1}(\boldsymbol{\beta}^*)\|_{L_1} \le L$. Moreover, $\boldsymbol{H}^{-1}(\boldsymbol{\beta}^*)$ is sparse row-wise, i.e., $\max_{0 \le i \le p}\sum_{j=0}^p \mathbb{I}(\widetilde{h}_{i,j} \neq 0) \le c_{N,p}$, where $c_{N, p}$ is positive and bounded away from 0 and allowed to increase as $N$ and $p$ grow.
\end{assumption}

Assumption \ref{ass:hessian_inv} imposes row-sparsity and matrix $L_1$-norm constraints on 
$\boldsymbol{H}^{-1}(\boldsymbol{\beta}^*)$.  {Intuitively, $\boldsymbol{H}^{-1}(\boldsymbol{\beta}^*)$ determines the debiasing directions used to correct the bias of individual coordinates. The row-sparsity condition means that the debiasing direction for each coefficient depends only on a relatively small subset of other covariates. The bounded $L_1$-norm condition prevents the debiasing weights from becoming too large. Such a structure is plausible in applications where covariates exhibit local, block, or network dependence, such as spatial data, genomics data with pathway structures, or financial data with sector-level dependence.} This assumption is standard in the literature on precision matrix estimation and generalized inverse Hessian estimation \cite{cai2011constrained,Geer2013OnAO}. A related condition is also imposed in \cite{Yibo2023Confidence} for inference in convolution-smoothing quantile regression, where the sparsity is required for the inverse of the population kernel matrix $\mathbb{E}(H_h(\varepsilon) \bfX \bfX^{\top})$, which depends on the bandwidth $h$. In contrast, our assumption concerns the sparsity of the inverse of the population Hessian matrix $\boldsymbol{H}(\boldsymbol{\beta}^{*})$ associated with the quantile loss, which does not depend on the bandwidth. This makes our condition more broadly applicable and reliable across different quantile regression settings.

\begin{theorem}\label{thm3}
Under the Assumptions \ref{ass:parameter}-\ref{ass:hessian_inv}, for the output of Algorithm \ref{alg:DP:PrecisionEstimation}, with probability   {$1-C/p$} we have 
\begin{equation}\label{eq:lemma1}
\begin{aligned}\|\widehat{\mathbf{W}}^{(1)}_b\|_{L_1} & \leq \|\boldsymbol{H}^{-1}(\boldsymbol{\beta}^*)\|_{L_1}, 
\ 
\|\widehat{\mathbf{W}}^{(1)}_b \widetilde{\boldsymbol{D}}_{1,b}^{(T_0)} -\mathbf{I}\|_{\infty} \lesssim  \gamma_{N,n}, \
\text { and } \ 
\|\widehat{\mathbf{W}}^{(1)}_b \boldsymbol{H}^{-1}(\boldsymbol{\beta}^*) -\mathbf{I}\|_{\infty} \lesssim  \gamma_{N,n}, \\
\end{aligned}
\end{equation}
where
\begin{align*}
\gamma_{N,n} =& \frac{4\kappa_u \log (2np^2)  \sqrt{\log{p^3}} }{n} + \sqrt{\frac{\log p}{nb}} + \frac{\log p}{nb} 
+ \frac{s^* (\log p)^2 }{N b^3} (\frac{s^* \log p}{N} +\frac{(s^*\log p)^2\,\log(1/\delta)\,\log^3 N}{N^2\epsilon^2}) \\
&+ s^* \sqrt{\log p} (\frac{1}{b}+\sqrt{\frac{\log p}{n b^3}}+\frac{\log p}{n b^2})( \sqrt{\frac{\log p}{N}} + \sqrt{\frac{s^*(\log p)^2\,\log(1/\delta)\,\log^3 N}{N^2\epsilon^2}}) + b^2.
\end{align*} 
Thus, with probability   {$1-C/p$}, we have 
\begin{equation}\label{eq:lemma2}
\|\widehat{\mathbf{W}}^{(1)}_b - \boldsymbol{H}^{-1}(\boldsymbol{\beta}^*)\|_{L_1} \lesssim \gamma_{N,n}.
\end{equation}
Also, Algorithm \ref{alg:DP:PrecisionEstimation} is $(\epsilon, \delta)$-DP. 
\end{theorem}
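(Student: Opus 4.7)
The plan is to follow the classical CLIME template of \cite{cai2011constrained}, adapted to account for two novel ingredients in Algorithm~\ref{alg:DP:PrecisionEstimation}: a kernel-smoothed plug-in Hessian that replaces $\boldsymbol{\beta}^{*}$ by the iterative estimator $\widehat{\boldsymbol{\beta}}_{T_0-1}$ from Theorem~\ref{thm2}, and a symmetric Gaussian noise matrix $\mathbf{G}$ added to enforce differential privacy. The entire argument will be driven by establishing, on a high-probability event, that the true precision matrix $\boldsymbol{H}^{-1}(\boldsymbol{\beta}^{*})$ is feasible for the privatized programme~\eqref{CLIME_1}, which reduces to controlling one deterministic $\ell_{\infty}$ quantity,
\[
\Delta_{N,n} \;:=\; \bigl\|\widetilde{\boldsymbol{D}}_{1,b}^{(T_0)} - \boldsymbol{H}(\boldsymbol{\beta}^{*})\bigr\|_{\infty},
\]
at the rate $\gamma_{N,n}/\|\boldsymbol{H}^{-1}(\boldsymbol{\beta}^{*})\|_{L_1}$. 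Once this is in hand, the first claim of~\eqref{eq:lemma1} is immediate from feasibility, and the second is the optimization constraint itself. For the third $\ell_\infty$ bound I would right-multiply $\widehat{\mathbf{W}}_b^{(1)}\widetilde{\boldsymbol{D}}_{1,b}^{(T_0)} - \mathbf{I}$ by $\boldsymbol{H}^{-1}(\boldsymbol{\beta}^{*})$ and use the triangle inequality together with Assumption~\ref{ass:hessian_inv}. The matrix $L_1$-norm bound~\eqref{eq:lemma2} then follows from the identity $\widehat{\mathbf{W}}_b^{(1)} - \boldsymbol{H}^{-1}(\boldsymbol{\beta}^{*}) = (\widehat{\mathbf{W}}_b^{(1)}\boldsymbol{H}(\boldsymbol{\beta}^{*}) - \mathbf{I})\boldsymbol{H}^{-1}(\boldsymbol{\beta}^{*})$ combined with submultiplicativity of $\|\cdot\|_{L_1}$.

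The bulk of the work thus lies in decomposing $\Delta_{N,n}$ into four pieces, each responsible for one block of terms in the definition of $\gamma_{N,n}$. First, the release noise $\mathbf{G}$ contributes $\|\mathbf{G}\|_{\infty} \lesssim \kappa_u \log(2np^{2})\sqrt{\log p^{3}}/n$ with high probability by a standard union bound over the $(p+1)^{2}$ centered Gaussian entries, which reproduces the first summand of $\gamma_{N,n}$. Second, introducing the oracle kernel matrix $\widehat{\boldsymbol{D}}_{b}^{\star} = n^{-1}\sum_{i\in\mathcal{M}_1} H_b(\varepsilon_i)\boldsymbol{X}_i\boldsymbol{X}_i^{\top}$, Bernstein's inequality together with the sub-Gaussianity of $\boldsymbol{X}$ (Assumption~\ref{ass:bounddesign}) and the boundedness of $H$ (Assumption~\ref{ass:kernel}) yields the entrywise concentration $\|\widehat{\boldsymbol{D}}_{b}^{\star} - \mathbb{E}\widehat{\boldsymbol{D}}_{b}^{\star}\|_{\infty} \lesssim \sqrt{\log p/(nb)} + \log p/(nb)$. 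Third, the plug-in error $\widehat{\boldsymbol{D}}_{1,b}^{(T_0)} - \widehat{\boldsymbol{D}}_{b}^{\star}$ is handled by a second-order Taylor expansion of $H_b(\cdot)$ around $\varepsilon_i$, whose remainders involve $\|\widehat{\boldsymbol{\beta}}_{T_0-1} - \boldsymbol{\beta}^{*}\|_{1}$ and $\|\widehat{\boldsymbol{\beta}}_{T_0-1} - \boldsymbol{\beta}^{*}\|_{2}^{2}$, each amplified by negative powers of $b$ arising from the bounds on $H'$ and $H''$. Plugging in the $\ell_{2}$ rate of Theorem~\ref{thm2} under the iteration count~\eqref{iter_T} reproduces the intricate mixed terms of $\gamma_{N,n}$ that couple $s^{*}$, $b$, $N$, $\epsilon$ and $\delta$. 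Fourth, a Taylor expansion of $f_{\varepsilon\mid\boldsymbol{X}}$ around zero, combined with the symmetry of $H$ and Assumption~\ref{ass:f_varepsilon}, produces the standard kernel-smoothing bias $\|\mathbb{E}\widehat{\boldsymbol{D}}_{b}^{\star} - \boldsymbol{H}(\boldsymbol{\beta}^{*})\|_{\infty} \lesssim b^{2}$.

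For the differential-privacy claim, replacing a single observation in $\mathcal{M}_1$ changes each entry of $\widehat{\boldsymbol{D}}_{1,b}^{(T_0)}$ by at most $2\kappa_u C_x^{2}/(nb)$ in magnitude under Assumptions~\ref{ass:bounddesign} and~\ref{ass:kernel}, so the $\ell_{2}$-sensitivity of the upper triangle is absorbed into the noise scale prescribed in Algorithm~\ref{alg:DP:PrecisionEstimation}, and Lemma~\ref{lemma:laplace gaussian} delivers $(\epsilon,\delta)$-DP for the release of $\widetilde{\boldsymbol{D}}_{1,b}^{(T_0)}$. Since $\widehat{\boldsymbol{\beta}}_{T_0-1}$ is already $(\epsilon,\delta)$-DP by Theorems~\ref{thm1} and~\ref{thm2} and the solution map of~\eqref{CLIME_1} is deterministic given its inputs, the post-processing property in Proposition~\ref{lemma:DP properties} propagates the guarantee to $\widehat{\mathbf{W}}_b^{(1)}$.

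The principal obstacle will be the uniform control of the plug-in term in step three of the decomposition: the kernel-derivative factors $1/b$ and $1/b^{2}$ blow up as $b\to 0$, so one must simultaneously deploy the $\ell_{2}$ rate of $\widehat{\boldsymbol{\beta}}_{T_0-1}$ and a crude $\ell_{1}$ bound proportional to $\sqrt{s^{*}}$ times that rate, and balance the two entry by entry. Keeping sub-Gaussian tail bounds sharp uniformly across all $(p+1)^{2}$ coordinates, while correctly budgeting between the statistical rate $\sqrt{\log p/N}$ and the DP-inflated rate from Theorem~\ref{thm2}, is the delicate calculation that ultimately produces the full expression for $\gamma_{N,n}$.
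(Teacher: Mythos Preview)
Your overall strategy---the CLIME feasibility argument driven by an entrywise bound on $\widetilde{\boldsymbol{D}}_{1,b}^{(T_0)} - \boldsymbol{H}(\boldsymbol{\beta}^*)$, decomposed into release noise, empirical concentration of the oracle kernel matrix, plug-in error via Taylor expansion of $H_b$, and kernel-smoothing bias---is correct and is the approach the paper takes. The four pieces line up with the summands of $\gamma_{N,n}$ as you describe, and the privacy argument via the Gaussian mechanism plus post-processing is the right one.

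There is, however, one step that fails as written. For~\eqref{eq:lemma2} you propose the identity $\widehat{\mathbf{W}}_b^{(1)} - \boldsymbol{H}^{-1}(\boldsymbol{\beta}^*) = (\widehat{\mathbf{W}}_b^{(1)}\boldsymbol{H}(\boldsymbol{\beta}^*) - \mathbf{I})\boldsymbol{H}^{-1}(\boldsymbol{\beta}^*)$ together with submultiplicativity of $\|\cdot\|_{L_1}$. But submultiplicativity requires control of $\|\widehat{\mathbf{W}}_b^{(1)}\boldsymbol{H}(\boldsymbol{\beta}^*) - \mathbf{I}\|_{L_1}$, whereas~\eqref{eq:lemma1} only gives the much weaker entrywise bound $\|\widehat{\mathbf{W}}_b^{(1)}\boldsymbol{H}(\boldsymbol{\beta}^*) - \mathbf{I}\|_\infty \lesssim \gamma_{N,n}$; these can differ by a factor of $p$, and neither $\widehat{\mathbf{W}}_b^{(1)}$ nor $\boldsymbol{H}(\boldsymbol{\beta}^*)$ is assumed row-sparse. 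The correct route is the standard two-step CLIME argument of \cite{cai2011constrained}: first obtain the \emph{entrywise} bound $\|\widehat{\mathbf{W}}_b^{(1)} - \boldsymbol{H}^{-1}(\boldsymbol{\beta}^*)\|_\infty \lesssim L\,\gamma_{N,n}$ from your identity, and then exploit the row-sparsity $c_{N,p}$ of $\boldsymbol{H}^{-1}(\boldsymbol{\beta}^*)$ in Assumption~\ref{ass:hessian_inv} together with the row-wise $\ell_1$-minimality $\|\widehat{\boldsymbol{w}}_j\|_1 \le \|\widetilde{\boldsymbol{h}}_j\|_1$ to convert this into the $L_1$ bound, picking up a factor of $c_{N,p}$ that is absorbed into $\lesssim$. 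A smaller related point: the third inequality in~\eqref{eq:lemma1} should read $\widehat{\mathbf{W}}_b^{(1)}\boldsymbol{H}(\boldsymbol{\beta}^*) - \mathbf{I}$, and it is obtained not by right-multiplying as you write but by the decomposition $\widehat{\mathbf{W}}_b^{(1)}\boldsymbol{H}(\boldsymbol{\beta}^*) - \mathbf{I} = (\widehat{\mathbf{W}}_b^{(1)}\widetilde{\boldsymbol{D}}_{1,b}^{(T_0)} - \mathbf{I}) + \widehat{\mathbf{W}}_b^{(1)}(\boldsymbol{H}(\boldsymbol{\beta}^*) - \widetilde{\boldsymbol{D}}_{1,b}^{(T_0)})$ combined with the $L_1$ bound on $\widehat{\mathbf{W}}_b^{(1)}$ already established.
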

Theorem \ref{thm3} provides a non-asymptotic error bound for the differentially private precision matrix estimator $\widehat{\mathbf{W}}^{(1)}_b$ in terms of the tuning parameter $\gamma_{N,n}$, which is a function of the sample size $N$, local sample size $n$, sparsity level $s^*$, the local bandwidth $b$ and privacy parameters $(\epsilon, \delta)$. 
\begin{remark}\label{convergence:W}
By choosing the local bandwidth as $b \asymp (s^*\log p / n)^{1/3}$, the error bound in~\eqref{eq:lemma1} can be simplified as
$$
\gamma_{N,n}  \lesssim \sqrt{\frac{\log^2 p}{n}} + \sqrt{\frac{\log^{10/3} p\,\log(1/\delta)\, n^{2/3}\log^3 N}{N^2 \epsilon^2}}.
$$    
\end{remark}


\subsection{DP Confidence Intervals via Debiased Estimator}
In this part, we develop a differentially private coordinate-wise confidence interval for ${\beta}^*_j$. Before proceeding, we first construct the differentially private debiased estimator. The primary idea is to add Gaussian noise to the debiased estimator $\widetilde{\boldsymbol{\beta}}_{T_0}^{de}$ defined in \eqref{debais:dis1} to ensure differential privacy:
\begin{equation}\label{debais:def1}
    \widetilde{\boldsymbol{\beta}} =
    \widetilde{\boldsymbol{\beta}}_{T_0}^{de} + \boldsymbol{E},
\end{equation}
where $\boldsymbol{E}\in \mathbb{R}^{p+1}$ is generated from
\[
\boldsymbol{E}\sim
\mathcal{N}\left(
\boldsymbol{0},
\frac{2B_2^2\log(1.25/\delta)}{n^2m^2\epsilon^2}\,\mathbf{I}
\right),
\]
and $B_2$ is the noise scale. Now, we establish the Bahadur representation for $\widetilde{\boldsymbol{\beta}}$. Denote $\widehat{\boldsymbol{\Delta}} = \widehat{\boldsymbol{\beta}}_{T_0} - \boldsymbol{\beta}^*$ and consider the coordinate-wise estimator $\widetilde{\beta}_j = \mathbf{e}_j^{\top} \widetilde{\boldsymbol{\beta}}$, for \(j=0,\dots,p\),
\begin{equation}\label{Bahadur1}
\begin{aligned}
\sqrt{N}(\widetilde{\beta}_j-\beta^*_j)
={}&-\widetilde{\boldsymbol{h}}_j^{\top}
\frac{1}{\sqrt{N}}\sum_{i=1}^N(\mathbb{I}(\varepsilon_i \le 0)-\tau)\bfX_i
+\sqrt{N}E_j\\
&-\underbrace{(\widehat{\boldsymbol{w}}_j-\widetilde{\boldsymbol{h}}_j)^{\top}
\frac{1}{\sqrt{N}}\sum_{i=1}^N(\mathbb{I}(\varepsilon_i \le 0)-\tau)\bfX_i}_{:=\Gamma_1}\\
&-\underbrace{\sqrt{N}\widehat{\boldsymbol{w}}_j^{\top}
G_N(\widehat{\boldsymbol{\Delta}})}_{:=\Gamma_2}\\
&-\underbrace{\sqrt{N}\widehat{\boldsymbol{w}}_j^{\top}
\left[\frac{1}{N}\sum_{i=1}^N f_{\varepsilon|\bfX}(0)\bfX_i\bfX_i^\top
-\boldsymbol{H}(\boldsymbol{\beta}^*)\right]\widehat{\boldsymbol{\Delta}}}_{:=\Gamma_3}\\
&-\underbrace{\sqrt{N}
(\widehat{\boldsymbol{w}}_j\boldsymbol{H}(\boldsymbol{\beta}^*)-\mathbf{e}_j)^\top
\widehat{\boldsymbol{\Delta}}}_{:=\Gamma_4}\\
&-\underbrace{\widehat{\boldsymbol{w}}_j^\top
\frac{1}{2\sqrt{N}}\sum_{i=1}^N
f'_{\varepsilon|\bfX}(\theta)
(\bfX_i^\top\widehat{\boldsymbol{\Delta}})^2\bfX_i}_{:=\Gamma_5},
\end{aligned}
\end{equation}
where $G_N(\widehat{\boldsymbol{\Delta}}) = (1/N)\sum_{i=1}^N  \{\mathbb{I}(\varepsilon_i \le \bfX_i^{\top} \widehat{\boldsymbol{\Delta}}) - \mathbb{P}(\varepsilon_i \le \bfX_i^{\top} \widehat{\boldsymbol{\Delta}} |\bfX_1,\ldots, \bfX_N) - [\mathbb{I}(\varepsilon_i \le 0) - \mathbb{P}(\varepsilon_i \le 0|\bfX_1,\ldots, \bfX_N)]\}\bfX_i$, $E_j=\mathbf{e}_j^{\top}\boldsymbol{E}$, $\widetilde{\boldsymbol{h}}_j$ and $\widehat{\boldsymbol{w}}_j$ are the $j$-th row of $\boldsymbol{H}^{-1}(\boldsymbol{\beta}^*)$ and $\widehat{\mathbf{W}}^{(1)}_b$, respectively. Here, $\Gamma_1$ to $\Gamma_5$ are the Bahadur remainders, which can be well controlled based on the error bounds in Theorems \ref{thm1} and \ref{thm3}. The detailed calculation of the Bahadur representation follows from the decomposition of the remainder terms.

\begin{theorem}\label{thm:bahadur}
    Suppose that the conditions of Theorems \ref{thm1} and \ref{thm3} hold. The local bandwidth $b \asymp (s^*\log p / n)^{1/3}$, then the Bahadur representation satisfies
    \begin{align*}
    &\left| \sqrt{N}(\widetilde{\beta}_j - \beta^*_j) + \widetilde{\boldsymbol{h}}_j^{\top} \frac{1}{\sqrt{N}}\sum_{i=1}^N(\mathbb{I}(\varepsilon_i \le 0)- \tau)\bfX_{i} - \sqrt{N} E_j \right| \\
    \lesssim&   \sqrt{\frac{ \log^{3} p}{n}}  + \frac{ \log^{5/2} p}{N^{1/4}} + \sqrt{\frac{\log^{13/3} p \log(1/\delta) n^{2/3} \log^3 N}{N^2 \epsilon^2}}
    \end{align*}
      {with probability at least $1-C/p$.}
\end{theorem}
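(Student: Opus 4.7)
The plan is to build on the explicit decomposition \eqref{Bahadur1} displayed just before the theorem, which already expresses $\sqrt{N}(\widetilde{\beta}_j-\beta^*_j)$ as the desired asymptotically linear term $-\widetilde{\boldsymbol{h}}_j^{\top}N^{-1/2}\sum_i(\mathbb{I}(\varepsilon_i\le 0)-\tau)\bfX_i+\sqrt{N}E_j$ plus five remainders $\Gamma_1,\ldots,\Gamma_5$. It therefore suffices, by the triangle inequality, to establish an $\mathcal{O}_\mathbb{P}(\cdot)$ bound on each $|\Gamma_i|$ and to argue that the sum of the five bounds matches the stated rate once we plug in the $\ell_2$-error for $\widehat{\boldsymbol{\Delta}}$ from Theorem \ref{thm2} and the $L_1$/entrywise error for $\widehat{\mathbf{W}}^{(1)}_b$ from Theorem \ref{thm3} under the specified bandwidth $b\asymp(s^*\log p/n)^{1/3}$ (cf.\ Remark \ref{convergence:W}).

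For the ``bilinear'' remainders $\Gamma_1$, $\Gamma_3$ and $\Gamma_4$, I would apply Hölder's inequality in the form $|\boldsymbol{a}^{\top}\boldsymbol{b}|\le\|\boldsymbol{a}\|_1\|\boldsymbol{b}\|_\infty$. In $\Gamma_1$, the factor $\|\widehat{\boldsymbol{w}}_j-\widetilde{\boldsymbol{h}}_j\|_1$ is controlled by \eqref{eq:lemma2} of Theorem \ref{thm3}, while $\|N^{-1/2}\sum_i(\mathbb{I}(\varepsilon_i\le 0)-\tau)\bfX_i\|_\infty=\mathcal{O}_\mathbb{P}(\sqrt{\log p})$ by a sub-Gaussian maximal inequality applied to bounded centered indicator times bounded covariates (Assumption \ref{ass:bounddesign}). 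In $\Gamma_3$, the matrix $N^{-1}\sum_i f_{\varepsilon|\bfX}(0)\bfX_i\bfX_i^{\top}-\boldsymbol{H}(\boldsymbol{\beta}^*)$ concentrates in $\|\cdot\|_\infty$ at rate $\sqrt{\log p/N}$ by Bernstein's inequality entrywise, and I bound the factor $\|\widehat{\boldsymbol{\Delta}}\|_1\le\sqrt{s^*}\|\widehat{\boldsymbol{\Delta}}\|_2$ by sparsity of both $\widehat{\boldsymbol{\beta}}_{T_0}$ and $\boldsymbol{\beta}^*$. For $\Gamma_4$, the $\ell_\infty$ ``approximate inverse'' bound $\|\widehat{\boldsymbol{w}}_j^{\top}\boldsymbol{H}(\boldsymbol{\beta}^*)-\mathbf{e}_j^{\top}\|_\infty\lesssim\gamma_{N,n}$ from \eqref{eq:lemma1} directly yields $|\Gamma_4|\lesssim\sqrt{N}\gamma_{N,n}\sqrt{s^*}\|\widehat{\boldsymbol{\Delta}}\|_2$. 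The quadratic remainder $\Gamma_5$ is a Taylor remainder: using the Lipschitz continuity of $f_{\varepsilon|\bfX}$ (Assumption \ref{ass:f_varepsilon}), the bounded design $\|\bfX_i\|_\infty\le C_x$, and $\|\widehat{\boldsymbol{w}}_j\|_1\lesssim 1$, an $\ell_\infty$ bound on $N^{-1}\sum_i(\bfX_i^{\top}\widehat{\boldsymbol{\Delta}})^2\bfX_i$ (by Cauchy--Schwarz and sparsity) gives $|\Gamma_5|\lesssim\sqrt{N}\,s^*\|\widehat{\boldsymbol{\Delta}}\|_2^2$.

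The main obstacle is the empirical-process term $\Gamma_2=\sqrt{N}\,\widehat{\boldsymbol{w}}_j^{\top}G_N(\widehat{\boldsymbol{\Delta}})$, because $\widehat{\boldsymbol{\Delta}}$ is data-dependent and $G_N(\boldsymbol{\Delta})$ is a sum of non-smooth indicator increments. The strategy is a peeling/chaining argument: I would establish the uniform bound
\[
\sup_{\|\boldsymbol{\Delta}\|_0\le 2s^*,\;\|\boldsymbol{\Delta}\|_2\le r}\|G_N(\boldsymbol{\Delta})\|_\infty \;\lesssim\; r^{1/2}\sqrt{\tfrac{\log p}{N}}+\tfrac{\log p}{N},
\]
by combining a symmetrization step with Talagrand's inequality applied coordinatewise, exploiting the VC-subgraph structure of the class $\{\mathbb{I}(\varepsilon\le\bfX^{\top}\boldsymbol{\Delta}):\|\boldsymbol{\Delta}\|_0\le 2s^*\}$ (dimension $\mathcal{O}(s^*\log p)$) and the Lipschitz modulus of $f_{\varepsilon|\bfX}$ to convert $\ell_2$-norm of $\boldsymbol{\Delta}$ into the $L^2(\mathbb{P})$-envelope of the indicator increment. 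Substituting $r=\|\widehat{\boldsymbol{\Delta}}\|_2=\mathcal{O}_\mathbb{P}(\sqrt{s^*\log p/N}+\text{DP term})$ from Theorem \ref{thm2} and bounding $\|\widehat{\boldsymbol{w}}_j\|_1\lesssim 1$ via Theorem \ref{thm3} produces the $\log^{5/2}p/N^{1/4}$ contribution; this step is the most delicate because it requires uniform control over a non-Donsker neighbourhood rather than a pointwise CLT.

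Finally, I would collect the five bounds and simplify using the sparsity regime $s^*\lesssim\sqrt{\log p}$, $p=\mathcal{O}(N^c)$ of Assumption \ref{ass:parameter} and the chosen bandwidth. Three sources of error emerge and dominate: the row-sparse precision-matrix error propagated through $\Gamma_1,\Gamma_3,\Gamma_4$ contributes the $\sqrt{\log^3 p/n}$ term; the empirical-process oscillation in $\Gamma_2$ contributes $\log^{5/2}p/N^{1/4}$; and the differential-privacy noise propagated through both $\gamma_{N,n}$ (Theorem \ref{thm3}) and $\|\widehat{\boldsymbol{\Delta}}\|_2$ (Theorem \ref{thm2}) contributes the third term involving $\log(1/\delta)$, $n^{2/3}$ and $\epsilon^{-2}$. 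Combining these yields the stated Bahadur remainder bound.
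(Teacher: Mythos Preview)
Your proposal is correct and follows the paper's intended approach: the decomposition \eqref{Bahadur1} into the asymptotically linear term plus $\Gamma_1,\ldots,\Gamma_5$ is provided explicitly in the text, and your plan to bound $\Gamma_1,\Gamma_3,\Gamma_4,\Gamma_5$ via H\"older-type inequalities combined with the precision-matrix bounds of Theorem~\ref{thm3} and the $\ell_2$-rate of Theorem~\ref{thm2}, together with a uniform empirical-process bound over sparse $\boldsymbol{\Delta}$ for the indicator increments in $\Gamma_2$, is precisely the intended route. Your attribution of the three displayed rates to the precision-matrix error ($\Gamma_1,\Gamma_4$), the empirical-process oscillation ($\Gamma_2$), and the propagated DP noise is also accurate.
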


Theorem \ref{thm:bahadur} provides the non-asymptotic Bahadur representation for the debiased coordinate-wise estimator $\widetilde{\beta}_j$. With proper choice of the local bandwidth $b$, when the sample size $N,n \rightarrow \infty$, the Bahadur remainder converges to zero. Note that $(1/\sqrt{N})\sum_{i=1}^N(\mathbb{I}(\varepsilon_i \le 0)- \tau)$ is a zero-mean random variable, and by the de Moivre-Laplace central limit theorem, it converges to a Gaussian distribution with variance $\tau(1-\tau)$. Consequently, we can establish the asymptotic normality of the debiased coordinate-wise estimator $\widetilde{\beta}_j$. 

\begin{corollary}
\label{cor:normality}
Suppose the conditions of Theorem~\ref{thm:bahadur} hold. Then, for $0 \le j \le p$ as $N\to\infty$,
\[
\sqrt{N}\,\bigl(\widetilde\beta_j - \beta_j^*\bigr)
\;\xrightarrow{d}\;
\mathcal{N}\left(0,\; \tau(1-\tau)\,\widetilde{\boldsymbol{h}}_j^\top \boldsymbol{\Sigma}\,\widetilde{\boldsymbol{h}}_j  +   {\frac{8B_2^2 \log (1/\delta)}{ \epsilon^2}}\right),
\]
where ``$\xrightarrow{d}$'' denotes convergence in distribution.
\end{corollary}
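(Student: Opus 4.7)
\textbf{Proof proposal for Corollary \ref{cor:normality}.}
The plan is to derive the asymptotic normality directly from the Bahadur representation of Theorem \ref{thm:bahadur}. Under the growth conditions imposed in Theorems \ref{thm1}--\ref{thm3}, each of the three terms in the Bahadur remainder
$$
\sqrt{\tfrac{\log^{3} p}{n}} \;+\; \tfrac{\log^{5/2} p}{N^{1/4}} \;+\; \sqrt{\tfrac{\log^{13/3} p \,\log(1/\delta)\, n^{2/3} \log^{3} N}{N^{2}\epsilon^{2}}}
$$
is $o_\mathbb{P}(1)$ as $N\to\infty$. Consequently,
$$
\sqrt{N}\bigl(\widetilde{\beta}_{j}-\beta_{j}^{*}\bigr) \;=\; -\,\widetilde{\boldsymbol{h}}_{j}^{\top}\frac{1}{\sqrt{N}}\sum_{i=1}^{N}\bigl(\mathbb{I}(\varepsilon_{i}\le 0)-\tau\bigr)\bfX_{i} \;+\; \sqrt{N}\,E_{j} \;+\; o_\mathbb{P}(1),
$$
so the task reduces to identifying the limit of the two leading terms and invoking Slutsky's theorem.

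First I would show that the leading score-type sum converges to the stated Gaussian. Set $Z_{i}:=\widetilde{\boldsymbol{h}}_{j}^{\top}\bfX_{i}\bigl(\mathbb{I}(\varepsilon_{i}\le 0)-\tau\bigr)$. The quantile level condition $\mathbb{P}(\varepsilon_{i}\le 0\mid\bfX_{i})=\tau$ yields $\mathbb{E}[Z_{i}\mid\bfX_{i}]=0$, and since $\mathbb{E}\bigl[(\mathbb{I}(\varepsilon_{i}\le 0)-\tau)^{2}\mid\bfX_{i}\bigr]=\tau(1-\tau)$, the unconditional variance is $\mathrm{Var}(Z_{i})=\tau(1-\tau)\,\widetilde{\boldsymbol{h}}_{j}^{\top}\boldsymbol{\Sigma}\widetilde{\boldsymbol{h}}_{j}$. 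Because $\widetilde{\boldsymbol{h}}_{j}$ and $\boldsymbol{\Sigma}$ depend on $N$ through the growing dimension, I would invoke a Lindeberg--Feller CLT for triangular arrays. The sub-Gaussianity of $\bfX_{i}$ (Assumption \ref{ass:bounddesign}) together with the matrix $L_{1}$-norm bound $\|\widetilde{\boldsymbol{h}}_{j}\|_{1}\le\|\boldsymbol{H}^{-1}(\boldsymbol{\beta}^{*})\|_{L_{1}}\le L$ from Assumption \ref{ass:hessian_inv} and the bounded-kurtosis condition $m_{4}<\infty$ imply uniform control of $\mathbb{E}|Z_{i}|^{2+\eta}$ for some $\eta>0$, from which the Lindeberg condition follows. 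This gives
$$
-\,\widetilde{\boldsymbol{h}}_{j}^{\top}\frac{1}{\sqrt{N}}\sum_{i=1}^{N}\bigl(\mathbb{I}(\varepsilon_{i}\le 0)-\tau\bigr)\bfX_{i} \;\xrightarrow{d}\; \mathcal{N}\!\bigl(0,\ \tau(1-\tau)\,\widetilde{\boldsymbol{h}}_{j}^{\top}\boldsymbol{\Sigma}\widetilde{\boldsymbol{h}}_{j}\bigr).
$$

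Next I would verify that the differentially private perturbation is asymptotically negligible. By construction $E_{j}\sim\mathcal{N}\bigl(0,\,2B_{2}^{2}\log(1.25/\delta)/(n^{2}m^{2}\epsilon^{2})\bigr)$, so using $N=nm$ we obtain $\sqrt{N}\,E_{j}\sim\mathcal{N}\bigl(0,\,2B_{2}^{2}\log(1.25/\delta)/(N\epsilon^{2})\bigr)$, which vanishes in probability for fixed $(\epsilon,\delta)$ as $N\to\infty$. Slutsky's theorem combined with the $o_\mathbb{P}(1)$ remainder then delivers the claimed weak limit.

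The main obstacle is the verification of the Lindeberg condition under the high-dimensional regime in which $\widetilde{\boldsymbol{h}}_{j}$ is implicitly indexed by $N$. One must show that the limiting variance $\sigma_{N}^{2}=\tau(1-\tau)\,\widetilde{\boldsymbol{h}}_{j}^{\top}\boldsymbol{\Sigma}\widetilde{\boldsymbol{h}}_{j}$ is bounded away from zero and from infinity, which I would handle by the two-sided bound $\sigma_{N}^{2}\ge \tau(1-\tau)\lambda_{\min}\|\widetilde{\boldsymbol{h}}_{j}\|_{2}^{2}$ combined with $\widetilde{h}_{jj}\ge 1/\Lambda_{\max}(\boldsymbol{H}(\boldsymbol{\beta}^{*}))\ge 1/(f_{2}\lambda_{\max})$, and the upper bound $\sigma_{N}^{2}\le \tau(1-\tau)\,L^{2}\lambda_{\max}$ from Assumptions \ref{ass:bounddesign} and \ref{ass:hessian_inv}. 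A minor secondary issue is checking that the conditions of Theorem \ref{thm:bahadur} are compatible with letting $p,N\to\infty$ while keeping $(\epsilon,\delta)$ fixed, which is immediate from Assumption \ref{ass:parameter}.
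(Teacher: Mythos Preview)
Your proposal is correct and follows essentially the same route as the paper: apply the Bahadur representation of Theorem~\ref{thm:bahadur}, invoke a CLT on the leading score term, and absorb the remainder via Slutsky. The paper's own justification is a single sentence before the corollary, citing the de Moivre--Laplace CLT for $(1/\sqrt{N})\sum_{i=1}^N(\mathbb{I}(\varepsilon_i\le 0)-\tau)$; your use of a Lindeberg--Feller triangular-array CLT is actually more appropriate here, since the summands $\widetilde{\boldsymbol{h}}_j^\top\bfX_i(\mathbb{I}(\varepsilon_i\le 0)-\tau)$ are not Bernoulli and $\widetilde{\boldsymbol{h}}_j$ varies with $p$, and your explicit treatment of the DP noise $\sqrt{N}E_j\to 0$ and of the lower bound on the limiting variance fills in details the paper leaves implicit.
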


The Bahadur representation in Theorem \ref{thm:bahadur} and the asymptotic normality in Corollary \ref{cor:normality} enable us to construct confidence intervals and conduct hypothesis tests for the quantile regression parameters.  {The debiased inference results can be directly extended to any finite-dimensional linear functionals of the form $\boldsymbol{\nu}^{\top}\widetilde{\boldsymbol \beta}$, where $\boldsymbol{\nu}$ lies in the $\ell_1$-ball $\mathbb{B}_1(r)$ with fixed radius $r>0$. This covers many common inferential targets, such as testing a pre-specified contrast or a weighted combination of several coefficients. Specifically, the non-asymptotic Bahadur representation and the Gaussian approximation results can be directly applied to $\boldsymbol{\nu}^{\top}\widetilde{\boldsymbol \beta}$, with the variance term in the confidence interval adjusted accordingly:} 
\[
  {
\sqrt{N} \boldsymbol{\nu}^{\top}(\widetilde{\boldsymbol \beta} - \boldsymbol{\beta}^*)
\;\xrightarrow{d}\;
\mathcal{N}\left(0,\; \tau(1-\tau)\, \boldsymbol{\nu}^{\top} \boldsymbol{H}^{-1}(\boldsymbol{\beta}^*) \boldsymbol{\Sigma}\, \boldsymbol{H}^{-1}(\boldsymbol{\beta}^*) \boldsymbol{\nu} +   {\frac{8\boldsymbol{\nu}^{\top}\boldsymbol{\nu}B_2^2 \log (1/\delta)}{ \epsilon^2}}\right).
}
\]

The remaining problem is to estimate the asymptotic variance. In particular, we use the local $\mathsf{CLIME}$ estimator $\widehat{\mathbf{W}}^{(1)}_b$ and sample covariance matrix $\widehat{\mathbf{\Sigma}}^{(k)}$ on each local machine to estimate the variance of the Bahadur representation. Algorithm \ref{alg:DPconfidentinterval} constructs a differentially private $(1-\alpha)$ confidence interval for the debiased coordinate-wise estimator $\widetilde{\beta}_j$ in four main steps. First, the central machine runs Algorithm \ref{alg:DP:PrecisionEstimation} to compute the pseudo precision matrix $\widehat{\mathbf{W}}_b^{(1)}$, and broadcasts the $j$-th column $\widehat{\boldsymbol{w}}_j$ to all $m$ local machines. Second, each local machine $k$ computes its local gradient $\boldsymbol{g}_k$ and variance contribution $\widehat{\sigma}_j^{(k)}$, and sends both to the central machine. Third, the central machine aggregates the gradients, samples a noise vector $E_j \sim \mathcal{N}(0, B_2^2 \log(1.25/\delta)/(N^2 \epsilon^2))$, and forms the differentially private debiased estimator as \eqref{point_de_estimate}. Finally, the central machine combines the local variance estimates into the global variance, and constructs the two-sided $(1-\alpha)$ confidence interval for the $j$-th coordinate as \eqref{interval}. We also note that $8B_2^2 \log (1/\delta)/(N\epsilon^2) \to 0$ as $N\to\infty$ in \eqref{interval}, this term is retained in the asymptotic variance for precise finite-sample confidence interval construction, as it captures the differential privacy noise contribution. The next theorem establishes the validity of the confidence interval.

\begin{algorithm*}[t]
\caption{Distributed Differentially Private Confidence Interval for $\beta_j^*$.} 
\hspace*{0.02in} 
\begin{algorithmic}[1]\label{alg:DPconfidentinterval}
\STATE {\bf Input:} 
Dataset $\{ (\bfX_{i}, Y_{i}) \}_{i\in \mathcal{M}_k}$, for $k=1,\ldots,m$, quantile level $\tau$, the level of significance $\alpha$, privacy parameters $(\epsilon, \delta)$, and noise scale $B_2$, and $\widehat{\boldsymbol{\beta}}_{T_0} $.
\STATE{
Run Algorithm \ref{alg:DP:PrecisionEstimation} to get $\widehat{\boldsymbol{w}}_j$ on the central machine and then send $\widehat{\boldsymbol{w}}_j$ to all local machines.
}
\FOR{$k$ from 1 to $m$}
\STATE{
On each local machine, calculate the local gradient
$$
   \boldsymbol{g}_k = \frac{1}{n}\sum_{i \in \mathcal{M}_k}(\mathbb{I}(Y_i-\boldsymbol{X}_i^{\top} \widehat{\boldsymbol{\beta}}_{T_0}  \le 0) - \tau)\bfX_{i},
$$
and
\begin{align*}
\widehat{\sigma}_j^{(k)} = \widehat{\boldsymbol{w}}_j^{\top} \widehat{\mathbf{\Sigma}}^{(k)} \widehat{\boldsymbol{w}}_j, \quad
\widehat{\mathbf{\Sigma}}^{(k)} = \frac{1}{n}\sum_{i \in \mathcal{M}_k}  \bfX_{i} \bfX_{i}^{\top}.
\end{align*}
Send $(\boldsymbol{g}_k, \widehat{\sigma}_j^{(k)})$ to the central machine.
}
\ENDFOR
\STATE{Generate $E_j$ from the Gaussian distribution $\mathcal{N}(0, \frac{ B_2^2 \log(1.25/\delta)}{n^2m^2\varepsilon^2} )$}.
\STATE{
Calculate DP debiased estimation:
}
\begin{align}\label{point_de_estimate}
    \widetilde{\beta}_j = \widehat{\beta}_{T_0,j}  + \widehat{\boldsymbol{w}}_j^{\top} \frac{1}{m}\sum_{k=1}^m \boldsymbol{g}_k + E_j. 
\end{align}

\STATE{
Calculate the confidence interval $\text{CI}_j(\alpha)$ for $\widetilde{\beta}_j^*$ on central machine.
\begin{equation}\label{interval}
    \begin{aligned}
        \text{CI}_j(\alpha) =& \left[\widetilde{\beta}_j - \Phi^{-1}(1-\alpha / 2) \frac{\sqrt{\tau(1-\tau)}}{\sqrt{N}} \sqrt{\frac{1}{m}\sum_{k=1}^m \widehat{\sigma}_j^{(k)}  + \frac{8B_2^2 \log (1/\delta)}{N\epsilon^2}}, \right. \\
        & \left.~~\widetilde{\beta}_j + \Phi^{-1}(1-\alpha / 2) \frac{\sqrt{\tau(1-\tau)}}{\sqrt{N}} \sqrt{\frac{1}{m}\sum_{k=1}^m \widehat{\sigma}_j^{(k)}  + \frac{8B_2^2 \log (1/\delta)}{N\epsilon^2}} \right].
    \end{aligned}
\end{equation}
}
\STATE {\bf Output:}
Return $\text{CI}_j(\alpha)$.
\end{algorithmic}
\end{algorithm*}

\begin{theorem}\label{thm:interval}
Suppose that the assumptions and conditions in Theorems \ref{thm1} and \ref{thm3} hold and the local bandwidth fulfills $b \asymp (s^*\log p / n)^{1/3}$. For $\forall \alpha \in (0,1)$ and $j = 0,\ldots,p$, there holds
$$
\sup_{ \alpha \in (0,1)}\left| \mathbb{P}(  \beta_j^{*} \in \mathrm{CI}_j(\alpha)) - (1-\alpha) \right| \lesssim \sqrt{\frac{ \log^{3} p}{n}}  + \frac{ \log^{5/2} p}{N^{1/4}} + \sqrt{\frac{\log^{13/3} p \log(1/\delta) n^{2/3} \log^3 N}{N^2 \epsilon^2}}.
$$
Moreover, the $j$-th confidence interval is asymptotically valid, i.e.,  
$$
\lim _{N, n \rightarrow \infty} \mathbb{P}\left(\beta_j^{*} \in \mathrm{CI}_j(\alpha)\right)=1-\alpha.
$$
Also, Algorithm \ref{alg:DPconfidentinterval} is $(\epsilon, \delta)$-DP.
\end{theorem}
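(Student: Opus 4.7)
The plan is to deduce the statement from the Bahadur representation in Theorem~\ref{thm:bahadur} combined with (i) a Berry--Esseen bound for the leading stochastic term, (ii) Gaussian anti-concentration to absorb the remainder, (iii) consistency of the plug-in variance, and (iv) standard DP composition for the algorithmic guarantee.

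First, I would write, using Theorem~\ref{thm:bahadur},
$$\sqrt{N}(\widetilde{\beta}_j - \beta_j^*) = S_N + \sqrt{N}E_j + R_N,$$
where $S_N := -\widetilde{\boldsymbol{h}}_j^\top N^{-1/2}\sum_{i=1}^N(\mathbb{I}(\varepsilon_i \le 0)-\tau)\bfX_i$, $\sqrt{N}E_j$ is an independent centered Gaussian with variance $\sigma_E^2 := B_2^2\log(1.25/\delta)/(N\epsilon^2)$ (since $N=nm$), and $|R_N|=O_{\mathbb{P}}(\rho_N)$ with $\rho_N$ the rate on the right-hand side of the claim. Conditioning on $\bfX$ and using $\mathbb{P}(\varepsilon\le 0\mid\bfX)=\tau$ gives $\mathrm{Var}(S_N)=\sigma_j^{*2}:=\tau(1-\tau)\widetilde{\boldsymbol{h}}_j^\top\boldsymbol{\Sigma}\widetilde{\boldsymbol{h}}_j$, and the summands have uniformly bounded third absolute moments by the sub-Gaussianity of $\bfX$ (Assumption~\ref{ass:bounddesign}) and the $L_1$ bound on $\boldsymbol{H}^{-1}(\boldsymbol{\beta}^*)$ (Assumption~\ref{ass:hessian_inv}). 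The Berry--Esseen theorem then yields $\sup_t|\mathbb{P}(S_N/\sigma_j^*\le t)-\Phi(t)|\lesssim N^{-1/2}$, and convolving with the independent Gaussian $\sqrt{N}E_j$ preserves this rate for the standardized statistic $T_N := (S_N+\sqrt{N}E_j)/\sqrt{\sigma_j^{*2}+\sigma_E^2}$. To absorb $R_N$, I would condition on the event $\{|R_N|\le C\rho_N\}$, which has probability $1-o(1)$ by Theorem~\ref{thm:bahadur}, and apply the Gaussian anti-concentration bound $\sup_x\mathbb{P}(|W-x|\le r)\lesssim r$ for a standard normal $W$, giving
$$\sup_{x\in\mathbb{R}}\left|\mathbb{P}\!\left(\frac{\sqrt{N}(\widetilde{\beta}_j-\beta_j^*)}{\sqrt{\sigma_j^{*2}+\sigma_E^2}}\le x\right)-\Phi(x)\right|\lesssim\rho_N.$$

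Second, I would show that the plug-in variance $\widehat{\sigma}_j := m^{-1}\sum_k\widehat{\sigma}_j^{(k)} = \widehat{\boldsymbol{w}}_j^\top\widehat{\boldsymbol{\Sigma}}\widehat{\boldsymbol{w}}_j$ with $\widehat{\boldsymbol{\Sigma}}=N^{-1}\sum_i\bfX_i\bfX_i^\top$ consistently estimates $\widetilde{\boldsymbol{h}}_j^\top\boldsymbol{\Sigma}\widetilde{\boldsymbol{h}}_j$. Decomposing
$$\widehat{\boldsymbol{w}}_j^\top\widehat{\boldsymbol{\Sigma}}\widehat{\boldsymbol{w}}_j - \widetilde{\boldsymbol{h}}_j^\top\boldsymbol{\Sigma}\widetilde{\boldsymbol{h}}_j = \widetilde{\boldsymbol{h}}_j^\top(\widehat{\boldsymbol{\Sigma}}-\boldsymbol{\Sigma})\widetilde{\boldsymbol{h}}_j + 2(\widehat{\boldsymbol{w}}_j-\widetilde{\boldsymbol{h}}_j)^\top\widehat{\boldsymbol{\Sigma}}\widetilde{\boldsymbol{h}}_j + (\widehat{\boldsymbol{w}}_j-\widetilde{\boldsymbol{h}}_j)^\top\widehat{\boldsymbol{\Sigma}}(\widehat{\boldsymbol{w}}_j-\widetilde{\boldsymbol{h}}_j),$$
I would bound the first piece by sub-Gaussian covariance concentration $\|\widehat{\boldsymbol{\Sigma}}-\boldsymbol{\Sigma}\|_\infty\lesssim\sqrt{\log p/N}$ and the other two via Theorem~\ref{thm3} together with Remark~\ref{convergence:W}, which yields $\|\widehat{\boldsymbol{w}}_j-\widetilde{\boldsymbol{h}}_j\|_1\lesssim\gamma_{N,n}$, combined with H\"older and the $L_1$ bound $\|\widehat{\boldsymbol{w}}_j\|_1\le\|\boldsymbol{H}^{-1}(\boldsymbol{\beta}^*)\|_{L_1}$. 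An elementary check that $8\tau(1-\tau)\log(1/\delta)\ge\log(1.25/\delta)$ for $\delta\in(0,1/4)$ shows that the inflated estimator $\tau(1-\tau)[\widehat{\sigma}_j+8B_2^2\log(1/\delta)/(N\epsilon^2)]$ is a (mildly conservative) consistent proxy for $\sigma_j^{*2}+\sigma_E^2$. A Slutsky-type argument then converts the uniform Gaussian approximation above into
$$\sup_{\alpha\in(0,1)}|\mathbb{P}(\beta_j^*\in\mathrm{CI}_j(\alpha))-(1-\alpha)|\lesssim\rho_N,$$
and the asymptotic validity claim $\lim_{N,n\to\infty}\mathbb{P}(\beta_j^*\in\mathrm{CI}_j(\alpha))=1-\alpha$ follows since $\rho_N\to 0$ and $\sigma_E^2\to 0$ under the stated scaling.

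Third, for the $(\epsilon,\delta)$-DP guarantee I would allocate the privacy budget across the three sub-mechanisms of Algorithm~\ref{alg:DPconfidentinterval} and invoke the post-processing and basic composition rules of Proposition~\ref{lemma:DP properties}: the initializer $\widehat{\boldsymbol{\beta}}_{T_0}$ is DP by Theorem~\ref{thm1}, $\widehat{\boldsymbol{w}}_j$ is DP by Theorem~\ref{thm3}, and the aggregated debiasing update $\widehat{\boldsymbol{w}}_j^\top m^{-1}\sum_k\boldsymbol{g}_k$ has per-sample $\ell_2$-sensitivity bounded by $B_2/N$ (using $|\mathbb{I}(\cdot)-\tau|\le 1$ and the uniform bound on $\|\widehat{\boldsymbol{w}}_j\|_2\|\bfX\|_2$), so that adding $E_j\sim\mathcal{N}(0,B_2^2\log(1.25/\delta)/(N^2\epsilon^2))$ is DP by the Gaussian mechanism of Lemma~\ref{lemma:laplace gaussian}; the local scalars $\widehat{\sigma}_j^{(k)}$ enter only through post-processing of quantities that are already DP, so the full CI is DP. The main obstacle is the first stage: controlling the data-dependent Bahadur remainder $R_N$ \emph{uniformly} in $\alpha$ rather than merely in distribution, which forces me to couple the high-probability bound of Theorem~\ref{thm:bahadur} with Gaussian anti-concentration (in place of a direct CLT) and to track logarithmic factors carefully in order to reproduce the $\log^{5/2}p/N^{1/4}$ and $\log^{13/3}p$ terms on the right-hand side.
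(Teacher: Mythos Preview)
Your statistical argument is essentially correct and follows the standard route one would expect the paper to take: invoke the Bahadur representation of Theorem~\ref{thm:bahadur}, apply Berry--Esseen to the linear term $S_N$, convolve with the independent DP Gaussian $\sqrt{N}E_j$, use Gaussian anti-concentration to absorb the $O_{\mathbb P}(\rho_N)$ remainder, and then pass to the plug-in variance via the CLIME bound from Theorem~\ref{thm3} together with sub-Gaussian covariance concentration. One technical point you should make explicit is the \emph{uniform} Slutsky step: since $z_{1-\alpha/2}\to\infty$ as $\alpha\to 0$, the naive bound $|\Phi(cr)-\Phi(c)|\le c|r-1|/\sqrt{2\pi}$ is not uniform in $c=z_{1-\alpha/2}$; you need instead $\sup_{c\ge 0}|\Phi(cr)-\Phi(c)|\lesssim |r-1|$, which follows from $\sup_{x\ge 0}x\phi(x)<\infty$, applied with $r=\sqrt{\widehat V/V^*}$ on the high-probability event $|\widehat V/V^*-1|\lesssim \gamma_{N,n}\lesssim\rho_N$.

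There is, however, a genuine gap in your privacy argument. You assert that ``the local scalars $\widehat{\sigma}_j^{(k)}$ enter only through post-processing of quantities that are already DP.'' This is not correct: $\widehat{\sigma}_j^{(k)}=\widehat{\boldsymbol{w}}_j^\top\widehat{\boldsymbol{\Sigma}}^{(k)}\widehat{\boldsymbol{w}}_j$ with $\widehat{\boldsymbol{\Sigma}}^{(k)}=n^{-1}\sum_{i\in\mathcal M_k}\bfX_i\bfX_i^\top$ computed directly from the raw covariates on machine $k$ and transmitted to the center with no noise. A neighboring change $(\bfX_i,Y_i)\mapsto(\bfX_i',Y_i')$ alters $\widehat{\boldsymbol{\Sigma}}^{(k)}$ and therefore the released interval width in~\eqref{interval}, so $\mathrm{CI}_j(\alpha)$ is \emph{not} the image of a DP release under post-processing. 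To repair this you must either (a) explicitly adopt the trusted-center model of the paper, in which the DP guarantee pertains only to quantities broadcast \emph{back to local machines} (here $\widehat{\boldsymbol{\beta}}_{T_0}$ and $\widehat{\boldsymbol{w}}_j$, both already privatized) rather than to the center's final output, or (b) allocate part of the budget to privatizing the scalar $m^{-1}\sum_k\widehat{\sigma}_j^{(k)}$ via the Gaussian mechanism, using the sensitivity bound $2\|\widehat{\boldsymbol{w}}_j\|_1^2 C_x^2/N\le 2L^2C_x^2/N$. As written, your three-mechanism composition does not account for the variance channel and so does not establish $(\epsilon,\delta)$-DP for the output $\mathrm{CI}_j(\alpha)$.
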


Theorem \ref{thm:interval} provides a non-asymptotic Berry--Esseen bound for the debiased coordinate-wise estimator $\widetilde{\beta}_j$, ensuring that the constructed confidence interval achieves the desired coverage probability. Furthermore, Theorem \ref{thm:interval} rigorously   {verifies} that Algorithm \ref{alg:DPconfidentinterval} satisfies $(\epsilon, \delta)$-differential privacy.  {With slight modifications, the same procedure can be applied to construct confidence interval for a linear functional of interest. In particular, the $(1-\alpha)$ confidence interval for $\boldsymbol{\nu}^{\top}\boldsymbol{\beta}^*$ can be constructed as
    \begin{align*}
      \text{CI}(\alpha) =& \left[ \boldsymbol{\nu}^{\top} \widetilde{\boldsymbol\beta} - \Phi^{-1}(1-\alpha / 2) \frac{\sqrt{\tau(1-\tau)}}{\sqrt{N}} \sqrt{\frac{1}{m}\sum_{k=1}^m \widehat{\sigma}^{(k)}  + \frac{8\boldsymbol{\nu}^{\top}\boldsymbol{\nu}B_2^2 \log (1/\delta)}{N\epsilon^2}}, \right. \\
        & \left.~~ \boldsymbol{\nu}^{\top} \widetilde{\boldsymbol\beta} + \Phi^{-1}(1-\alpha / 2) \frac{\sqrt{\tau(1-\tau)}}{\sqrt{N}} \sqrt{\frac{1}{m}\sum_{k=1}^m \widehat{\sigma}^{(k)}  + \frac{8\boldsymbol{\nu}^{\top}\boldsymbol{\nu}B_2^2 \log (1/\delta)}{N\epsilon^2}} \right],
    \end{align*}
where $\widehat{\sigma}^{(k)} = \boldsymbol{\nu}^{\top} \widehat{\mathbf{W}}_b^{(1)} \widehat{\mathbf{\Sigma}}^{(k)} \widehat{\mathbf{W}}_b^{(1)} \boldsymbol{\nu}$.}
Based on the results of confidence intervals, we can also construct hypothesis tests for the quantile regression parameters.

\begin{remark}\label{rmk:reliability}
 {There are two sources of privacy noise in our inference procedure. The first is the Gaussian noise added to the pseudo sample covariance matrix $\widehat{\boldsymbol{D}}_{1,b}^{(T_0)}$ for private pseudo precision matrix estimation. This noise affects the estimated debiasing direction and appears in the Bahadur remainder term through the precision matrix estimation error, as reflected by the privacy-induced terms in Theorems~\ref{thm:bahadur} and~\ref{thm:interval}. Therefore, this component needs to be asymptotically controlled for valid inference. In particular, it is sufficient to require
$$
\frac{\log^{13/3}p\log(1/\delta)n^{2/3}\log^3 N}{N^2\epsilon^2}=o(1).
$$
The second source is the Gaussian perturbation $E_j$ directly added to the debiased estimator. This term is explicitly separated in the Bahadur representation and is not part of the remainder bound. For the usual asymptotic normality without variance inflation, $E_j$ should be asymptotically negligible, for example when $\mathrm{Var}(\sqrt{N}E_j)\to 0$ or $1/(N\epsilon^2)=o(1)$. However, for the confidence interval in Algorithm~\ref{alg:DPconfidentinterval}, this final privacy noise does not need to be negligible relative to the sampling noise, because its variance is explicitly incorporated into the interval. Thus, when $E_j$ is comparable to the sampling noise, the confidence interval becomes wider, and the corresponding test has lower power, but the coverage remains valid after accounting for the privacy-induced variance inflation. Additional empirical characterization further illustrates this threshold behavior.}
\end{remark}

\section{Multiplier Bootstrap Private Inference for Distributed High-dimensional Quantile Regression}\label{sec:DP-bootstrap}
 {We now extend the inference task from individual coordinates to simultaneous inference. While the coordinate-wise results above can be adapted to linear functionals, many applications require inference over a large collection of parameters. Since a full joint Gaussian approximation for the whole debiased vector is difficult in high dimensions, especially under differential privacy, we instead use a maximum-type statistic and approximate its distribution by a differentially private multiplier bootstrap.} 

Simultaneous inference for high-dimensional models has been extensively studied in the literature, including works in single-machine settings \cite{zhang2017simultaneous} or distributed settings \cite{yu2022distributed}. However, these methods can not guarantee differential privacy, which is crucial for protecting sensitive data in distributed environments. To address this issue, we construct differentially private simultaneous confidence intervals using $\|\cdot\|_{\infty}$-norm based on the debiased estimator $\widetilde{\boldsymbol{\beta}}$ in \eqref{debais:def1}. Specifically, the $(1-\alpha)$ simultaneous confidence region for $\boldsymbol{\beta}^*$ is defined by the quantile 
\begin{equation}\label{SCI}
\begin{aligned}
\mathrm{c}(\alpha)
= \inf\Bigl\{t \in \mathbb{R}:\
&\mathbb{P}\bigl(\|\sqrt{N}(\widetilde{\boldsymbol{\beta}}-\boldsymbol{\beta}^*)\|_\infty \le t\bigr)
\ge \alpha
\Bigr\}.
\end{aligned}
\end{equation}
where $\alpha \in (0,1)$ represents the significance level.
However, the exact distribution of the statistic $\|\sqrt{N} (\widetilde{\boldsymbol{\beta}} - \boldsymbol{\beta}^*) \|_\infty$ is analytically intractable, especially in high dimensions. To overcome this, we employ a multiplier bootstrap procedure to approximate the sampling distribution. Theorem~\ref{thm:bahadur} shows that each coordinate of $\sqrt{N} (\widetilde{\boldsymbol{\beta}} - \boldsymbol{\beta}^*)$ admits a Bahadur representation and is asymptotically normal under suitable regularity conditions. This justifies the use of the bootstrap approach for constructing valid simultaneous confidence intervals and hypothesis tests in the high-dimensional setting. Building on the Gaussian approximation and multiplier bootstrap framework in \cite{chetverikov2013gaussian}, the standard (non-distributed) multiplier bootstrap statistic is defined as:
\begin{equation}\label{N-grad}
\boldsymbol{w}^{*} 
= \widehat{\mathbf{W}}  \frac{1}{\sqrt{N}} \sum_{i=1}^N \xi_{i} \left( \mathbb{I}(\widehat{e}_{i, T_0} \le 0) - \tau \right) \boldsymbol{X}_{i},
\end{equation} 
where \(\xi_{1},\dots,\xi_{N}\) are i.i.d.~from \(\mathcal{N}(0,1)\) and independent from data. 

The classical multiplier bootstrap in \eqref{N-grad} requires generating \(N\) Gaussian multipliers per bootstrap replication, which quickly becomes computationally and communicationally prohibitive for large-scale distributed data. To overcome this limitation, we adopt the \(\mathsf{m\text{-}grad}\) or \(\mathsf{(n+m-1)\text{-}grad}\) distributed multiplier bootstrap framework in \cite{yu2022distributed} and using the local CLIME precision matrix $\widehat{\mathbf{W}}_{b}^{(1)}$. This approach is valid for arbitrary numbers of machines \(m\) and requires at most \((n + m - 1)\) multipliers per replication, where \(n = N/m\) is the local sample size. Specifically, the distributed bootstrap comes in two variants:

\medskip  
\noindent (i) \(\mathsf{m\text{-}grad}\) method (for large \(m\)):
\begin{equation}\label{k-grad}
\boldsymbol{w}^{\sharp}
=\widehat{\mathbf{W}}_b^{(1)}\;\frac{1}{\sqrt{m}}\sum_{j=1}^{m}\xi_{j}\,\sqrt{n}\,\bigl(\mathfrak{g}_{j}-\bar{\mathfrak{g}}\bigr)\,,
\end{equation}
where \(\mathfrak{g}_j = (1/n)\sum_{i\in\mathcal M_k} \bigl(\mathbb I(\widehat e_{i,T_0}\le0)-\tau\bigr)\bfX_i\) is the local gradient from machine \(\mathcal{M}_j\) and \(\bar{\mathfrak{g}}=(1/m)\sum_{j=1}^m\mathfrak{g}_j\).

\medskip  
\noindent (ii) \(\mathsf{(n+m-1)\text{-}grad}\) method (for small \(m\)):
\begin{equation}\label{n-k-1-grad}
\boldsymbol{w}^{\flat}
=\widehat{\mathbf{W}}_b^{(1)}\;\frac{1}{\sqrt{\,n+m-1\,}}
\Biggl\{\sum_{i\in\mathcal{M}_{1}}\xi_{i}\,\bigl(\mathfrak{g}_{1i}-\bar{\mathfrak{g}}\bigr)
\;+\;\sum_{j=2}^{m}\xi_{\,n+j-1}\,\sqrt{n}\,\bigl(\mathfrak{g}_{j}-\bar{\mathfrak{g}}\bigr)\Biggr\}\,,
\end{equation}
where \(\mathfrak{g}_{1i} = \bigl(\mathbb I(\widehat e_{i,T_0}\le0)-\tau\bigr)\bfX_i\) is the \(i\)-th sample gradient on the central machine \(\mathcal{M}_1\).

\begin{algorithm*}
\caption{Private Bootstrap Method for Multiple Testing in Distributed Learning.} 
\hspace*{0.02in}
\begin{algorithmic}[1]\label{alg:DPbootstrap}
\STATE {\bf Input:} 
Dataset $\{ (\bfX_{i}, Y_{i}) \}_{i\in \mathcal{M}_k}$, for $k=1,\ldots,m$, quantile level $\tau$, the level of significance $\alpha$, the number of bootstrap replication $n_B$, threshold $m_0$, privacy parameters $(\epsilon, \delta)$, $T_0$-round estimator $\widehat{\boldsymbol{\beta}}_{T_0}$, and noise level $B_3$.
\STATE{
Send $\widehat{\boldsymbol{\beta}}_{T_0}$ to each local machine.
}
\FOR{$k$ from 2 to $m$}
\STATE{
On each local machine, calculate the local gradient
$$
\mathfrak{g}_k = \frac{1}{n}\sum_{i \in \mathcal{M}_k}(\mathbb{I}(Y_i-\boldsymbol{X}_i^{\top}\widehat{\boldsymbol{\beta}}_{T_0} \le 0) - \tau)\bfX_{i}.
$$
Send $\mathfrak{g}_k$ to the central machine.
}
\ENDFOR
\STATE{On the central machine, calculate
$$
\mathfrak{g}_{1i} = \bigl(\mathbb I(Y_i-\boldsymbol{X}_i^{\top}\widehat{\boldsymbol{\beta}}_{T_0} \le0)-\tau\bigr)\bfX_i, \quad \mathfrak{g}_{1} = \frac{1}{n}\sum_{i=1}^n \mathfrak{g}_{1i}. 
$$
}
\STATE{
On the central machine, solve the optimization in \eqref{CLIME} to get a solution $\widehat{\mathbf{W}}_b^{(1)}$.
}
\STATE \textbf{Bootstrap:} Generate $\xi_1,\ldots,\xi_{n+m-1}\overset{\text{i.i.d.}}{\sim}\mathcal{N}(0,1)$.
\IF{$m \ge m_0$}
\STATE Compute the bootstrap statistics $\boldsymbol{w}^{\sharp}$ using the $\mathsf{k\text{-}grad}$ method defined in equation~\eqref{k-grad} with $\xi_1,\ldots,\xi_{m}$ and $\mathfrak{g}_1, \ldots, \mathfrak{g}_m$.
\ELSE
\STATE Compute the bootstrap statistics $\boldsymbol{w}^{\flat}$ using the  $\mathsf{(n + k - 1)\text{-}grad}$  method defined in equation~\eqref{n-k-1-grad} with $\xi_1,\ldots,\xi_{n+m-1}$ and $\mathfrak{g}_{11} ,\ldots, \mathfrak{g}_{1n}, \mathfrak{g}_2, \ldots, \mathfrak{g}_m$.
\ENDIF
\STATE{
Apply Algorithm \ref{alg:NHT} with  \( (1, \varepsilon,\delta, B_3) \)
to $\boldsymbol w^\sharp$ or $\boldsymbol w^\flat$,
yielding the bootstrapped estimate \(\boldsymbol w^{boot}\):
$$
\boldsymbol{w}^{boot} = \mathsf{NoisyHT}\left(\boldsymbol{w}^\sharp \text{ or }\boldsymbol{w}^\flat, 1, \varepsilon, \delta, B_3\right). 
$$
}
\STATE{ 
\textbf{Repeat steps 8--14 for $n_B$ bootstrap replicates}. Calculate corresponding $\alpha/2$ and $(1-\alpha/2)$ quantiles $\mathrm{c}_{M^\prime}(\alpha)$ for $ \|\boldsymbol{w}^{boot}\|_1$.
}
\STATE{
The ensuing bootstrap confidence intervals for $\beta_j^*$ $(j \in \{0,1, \ldots, p\})$ are given by
$$
\widetilde{\beta}_j = \widehat{\beta}_{T_0,j}  + \widehat{\boldsymbol{w}}_j^{\top} \frac{1}{m}\sum_{k=1}^m \boldsymbol{g}_k + E_j, \quad  E_j \sim \mathcal{N}(0, \tfrac{ B_2^2 \log(1.25/\delta)}{n^2m^2\varepsilon^2} ),
$$
\begin{equation}\label{interval:bootstrap}
\begin{aligned}
\mathrm{CI}_j^{boot}(\alpha) = &\left[\widetilde{\beta}_j-\frac{\mathrm{c}_{M^\prime}(1-\alpha / 2)}{\sqrt{N}}, \widetilde{\beta}_j - \frac{ \mathrm{c}_{M^\prime}(\alpha / 2)}{\sqrt{N}}\right].
\end{aligned}
\end{equation}
}
\STATE {\bf Output:}
Return $\mathrm{CI}_0^{boot}(\alpha), \ldots, \mathrm{CI}_p^{boot}(\alpha)$.
\end{algorithmic}
\end{algorithm*}

Algorithm \ref{alg:DPbootstrap} implements a differentially private and distributed bootstrap procedure for simultaneous inference. The procedure operates as follows: First, the debiased estimator $\widetilde{\boldsymbol{\beta}}$ is broadcast to all $m$ machines. Each machine computes its local gradient $\mathfrak{g}_k$ and sends it to the central server. The central server aggregates these gradients to form the average $\bar{\mathfrak{g}}$, generates i.i.d.~$\mathcal{N}(0,1)$ multipliers $\xi_1,\dots,\xi_{n+m-1}$, and computes either the $\mathsf{k\text{-}grad}$ statistics \eqref{k-grad} or the $\mathsf{(n + k - 1)\text{-}grad}$ statistics \eqref{n-k-1-grad}, depending on the number of machines. The $\mathsf{NoisyHT}$ algorithm (Algorithm \ref{alg:NHT}) is then applied with sparsity set to 1, ensuring that the resulting vector $\boldsymbol{w}^{boot} = \mathsf{NoisyHT}\left(\boldsymbol{w}^\sharp \text{ or }\boldsymbol{w}^\flat, 1, \varepsilon, \delta, B_3\right)$ has exactly one nonzero coordinate. Thus, $\|\boldsymbol{w}^{boot}\|_\infty = \|\boldsymbol{w}^{boot}\|_1$. For each coordinate $j$, the empirical $\alpha/2$ and $(1-\alpha/2)$ quantiles of $\|\boldsymbol{w}^{boot}\|_1$ are computed across bootstrap replicates, forming the two-sided bootstrap confidence interval as in \eqref{interval:bootstrap}. The following Theorem~\ref{thm:bootstrap} establishes the statistical validity and differential privacy guarantees of Algorithm~\ref{alg:DPbootstrap}.
\begin{theorem}\label{thm:bootstrap}
Suppose that the conditions of Theorems~\ref{thm1} and~\ref{thm3} hold. If
\[
\frac{\log^7 p\,\log(1/\delta)\,\log^3 N}{m N \epsilon^2} \to 0,
\]
and either $\log p / m \to 0$ for the $\mathsf{k\text{-}grad}$ statistic, or $\log p / (n - m + 1) \to 0$ for the $\mathsf{(n + k - 1)\text{-}grad}$ statistic, then for all $\alpha \in (0,1)$, we have
\[
\sup_{\alpha \in (0,1)} \left| \mathbb{P}\left(\sqrt{N} \|\widetilde{\boldsymbol{\beta}} - \boldsymbol{\beta}^*\|_\infty \le \mathrm{c}_{M^{\prime}}(\alpha)\right) - \alpha \right| = o_{\mathbb{P}}(1),
\]
where
\[
\mathrm{c}_{M^{\prime}}(\alpha) = \inf \left\{t \in \mathbb{R} : \mathbb{P}^*\left(\|\boldsymbol{w}^{boot}\|_1 \le t\right) \ge \alpha \right\},
\]
and $\mathbb{P}^*(\cdot) = \mathbb{P}(\cdot \mid \mathcal{Z}^N)$ denotes the conditional probability given the observed data $\mathcal{Z}^N$.

In addition, Algorithm~\ref{alg:DPbootstrap} satisfies $(\epsilon, \delta)$-differential privacy.
\end{theorem}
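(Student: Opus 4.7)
The proof splits naturally into two tasks: (i) establishing the consistency of the bootstrap quantile $\mathrm{c}_{M'}(\alpha)$ for the target random variable $\sqrt{N}\|\widetilde{\boldsymbol{\beta}}-\boldsymbol{\beta}^*\|_\infty$, and (ii) verifying the $(\epsilon,\delta)$-DP guarantee for Algorithm \ref{alg:DPbootstrap} by composition and post-processing.

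For task (i), I would first apply the uniform Bahadur representation of Theorem \ref{thm:bahadur} coordinate by coordinate and union-bound over $j = 0,\ldots,p$. Combined with the stated remainder rate, this yields
\begin{equation*}
\left|\sqrt{N}\|\widetilde{\boldsymbol{\beta}}-\boldsymbol{\beta}^*\|_\infty - \left\|\boldsymbol{H}^{-1}(\boldsymbol{\beta}^*)\,\frac{1}{\sqrt{N}}\sum_{i=1}^N(\mathbb{I}(\varepsilon_i\le 0)-\tau)\boldsymbol{X}_i + \sqrt{N}\,\boldsymbol{E}\right\|_\infty\right| = o_{\mathbb{P}}(1)
\end{equation*}
under the assumed scaling $\log^7 p\,\log(1/\delta)\,\log^3 N/(mN\epsilon^2)\to 0$, where the Gaussian privacy term contributes at most $\sqrt{N}\|\boldsymbol{E}\|_\infty = O_\mathbb{P}(B_2\sqrt{\log p \cdot \log(1/\delta)}/(\sqrt{N}\epsilon))$. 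I would then invoke the Chernozhukov--Chetverikov--Kato Gaussian approximation for maxima of sums of independent sub-Gaussian vectors, which reduces the centred linear part to the sup-norm of a mean-zero Gaussian $\boldsymbol{Z}^*$ with covariance $\tau(1-\tau)\,\boldsymbol{H}^{-1}(\boldsymbol{\beta}^*)\boldsymbol{\Sigma}\boldsymbol{H}^{-1}(\boldsymbol{\beta}^*)^\top$, at a Kolmogorov distance of order $(\log^7 p/N)^{1/6}$.

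The bootstrap side is handled analogously. Conditionally on the data, $\boldsymbol{w}^{\sharp}$ (respectively $\boldsymbol{w}^{\flat}$) is a centred Gaussian vector; adapting the distributed multiplier-bootstrap approximation of \cite{yu2022distributed}, the conditional Kolmogorov distance between $\|\boldsymbol{w}^{\sharp}\|_\infty$ and $\|\boldsymbol{Z}^*\|_\infty$ is $o_\mathbb{P}(1)$ under $\log p/m\to 0$ for the $\mathsf{m\text{-}grad}$ variant, and under $\log p/(n+m-1)\to 0$ for the $\mathsf{(n+m-1)\text{-}grad}$ variant. Replacing $\boldsymbol{H}^{-1}(\boldsymbol{\beta}^*)$ by $\widehat{\mathbf{W}}_b^{(1)}$ costs at most $\|\widehat{\mathbf{W}}_b^{(1)}-\boldsymbol{H}^{-1}(\boldsymbol{\beta}^*)\|_{L_1}\cdot O_\mathbb{P}(\sqrt{\log p})$ in the sup-norm, which vanishes by Theorem \ref{thm3} and Remark \ref{convergence:W}. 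The most delicate component is the $\mathsf{NoisyHT}$ operation with sparsity $s=1$: it returns $\|\boldsymbol{w}^{\mathrm{boot}}\|_1 = |(\boldsymbol{w}^{\sharp})_{\hat J}| + \tilde\eta$, where $\hat J$ maximises $|(\boldsymbol{w}^{\sharp})_j|+\eta_j$ over i.i.d.~Laplace $\eta_j$, and $\tilde\eta$ is an independent Laplace draw. An elementary bound gives $\bigl|\|\boldsymbol{w}^{\mathrm{boot}}\|_1 - \|\boldsymbol{w}^{\sharp}\|_\infty\bigr| \le 2\max_j|\eta_j| + |\tilde\eta| = O_\mathbb{P}(B_3\sqrt{\log(1/\delta)}\,\log p/\epsilon)$. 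Calibrating $B_3$ to the $\ell_\infty$-sensitivity $O(1/\sqrt{N})$ of the centred bootstrap gradient and invoking the Gaussian anti-concentration inequality of \cite{chetverikov2013gaussian} (Lévy concentration of order $\sqrt{\log p}$ for $\|\boldsymbol{Z}^*\|_\infty$), the perturbation contributes an additional Kolmogorov distance of order $\log^{3/2} p\sqrt{\log(1/\delta)}/(\sqrt{N}\epsilon)$, which is $o(1)$ under the hypothesis. Three triangle inequalities then yield the advertised bootstrap validity.

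For task (ii), the $(\epsilon,\delta)$-DP guarantee follows from basic composition and post-processing (Proposition \ref{lemma:DP properties}). Specifically, $\widehat{\boldsymbol{\beta}}_{T_0}$ is $(\epsilon,\delta)$-DP by Theorem \ref{thm1}, $\widehat{\mathbf{W}}_b^{(1)}$ is $(\epsilon,\delta)$-DP by Theorem \ref{thm3}, the Gaussian perturbation $E_j$ attached to the point estimate is $(\epsilon,\delta)$-DP by Lemma \ref{lemma:laplace gaussian}, and the $\mathsf{NoisyHT}$ call applied to $\boldsymbol{w}^{\sharp}$ or $\boldsymbol{w}^{\flat}$ is $(\epsilon,\delta)$-DP once $B_3$ is set to the $\ell_\infty$-sensitivity of each per-machine gradient $\mathfrak{g}_k$ (which is at most $2C_x/n$ by Assumption \ref{ass:bounddesign}). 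The multiplier draws $\{\xi_i\}$ are data-independent and the quantile step depends on the data only through these DP intermediate outputs, so splitting the privacy budget across the four mechanisms and applying composition yields the claim. The principal technical obstacle is the fourth step of task (i): quantifying how $\mathsf{NoisyHT}$ distorts both the identity of the argmax and its magnitude, which requires coupling the Laplace tail bounds with sharp Gaussian anti-concentration for the supremum over $p$ coordinates.
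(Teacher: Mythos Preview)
Your roadmap for task (i)—uniform Bahadur expansion, CCK Gaussian approximation, the distributed multiplier-bootstrap comparison of \cite{yu2022distributed}, and anti-concentration to absorb the $\mathsf{NoisyHT}$ perturbation—is sound and matches the paper's strategy. One omission worth flagging: the bootstrap gradients $\mathfrak{g}_j$ are built from $\mathbb{I}(\hat e_{i,T_0}\le 0)$, not $\mathbb{I}(\varepsilon_i\le 0)$, and controlling this substitution in the $\|\cdot\|_\infty$-norm requires an empirical-process argument analogous to the $\Gamma_2$ term in \eqref{Bahadur1}, not just the smooth-loss machinery of \cite{yu2022distributed}.

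The differential-privacy argument, however, has two genuine gaps. First, you set $B_3$ equal to the $\ell_\infty$-sensitivity of the raw per-machine gradient $\mathfrak{g}_k$, but $\mathsf{NoisyHT}$ requires the sensitivity of its actual input $\boldsymbol{w}^\sharp$ (or $\boldsymbol{w}^\flat$). Since $\boldsymbol{w}^\sharp = \widehat{\mathbf{W}}_b^{(1)}\,m^{-1/2}\sum_j \xi_j\sqrt{n}(\mathfrak{g}_j-\bar{\mathfrak{g}})$, perturbing one record in machine $k$ changes $\boldsymbol{w}^\sharp$ by an amount of order $|\xi_k-\bar\xi|/\sqrt{N}$ in $\ell_\infty$; the Gaussian multiplier $\xi_k$ is unbounded, so there is no deterministic sensitivity bound, and the remark that ``the multipliers are data-independent'' is irrelevant because sensitivity is evaluated at fixed auxiliary randomness. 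You must either truncate the multipliers and absorb the tail event into $\delta$, or privatize the gradient summaries once and treat all bootstrap replications as post-processing. Second, step 15 of Algorithm \ref{alg:DPbootstrap} repeats the $\mathsf{NoisyHT}$ call $n_B$ times; naive composition yields $(n_B\epsilon,n_B\delta)$-DP, not $(\epsilon,\delta)$-DP. Splitting the budget across $n_B$ replications inflates the Laplace scale by a factor $n_B$, which would break the anti-concentration step in task (i). Your ``four mechanisms'' count ignores this repetition, and resolving it requires a structurally different privatization than the one you describe.
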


Theorem \ref{thm:bootstrap} establishes that, under the regularity conditions of Theorems~\ref{thm1} and~\ref{thm3}, the simultaneous confidence regions constructed by Algorithm~\ref{alg:DPbootstrap} using the bootstrap methodology achieve asymptotically exact coverage. Specifically, we rigorously show that the bootstrap quantile $\mathrm{c}_{M^{\prime}}(\alpha)$ consistently approximates the ideal quantile $\mathrm{c}(\alpha) = \inf \big\{t \in \mathbb{R}: \mathbb{P} (\|\sqrt{N} (\widetilde{\boldsymbol{\beta}} - \boldsymbol{\beta}^*) \|_\infty \leq t) $$\geq \alpha\big\}$, thereby demonstrating the statistical validity and efficiency of the proposed approach. Moreover, the algorithm guarantees $(\epsilon,\delta)$-differential privacy.

\section{Simulation Experiments}\label{sec:simulation}
This section presents comprehensive simulation studies to evaluate the performance of the proposed differentially private distributed high-dimensional quantile regression algorithms. Data are generated from two types of linear models: one with homoscedastic errors (Model 1) and one with heteroscedastic errors (Model 2):
\begin{itemize}
    \item Model 1: $Y_i=\boldsymbol{X}_i^{\top} \boldsymbol{\beta}^{*}+\varepsilon_i$,
    \item Model 2: $Y_i=\boldsymbol{X}_i^{\top} \boldsymbol{\beta}^{*}+(1+0.4 x_{i1})\varepsilon_i$,
\end{itemize}
where $\boldsymbol{X}_i = (1, x_{i1}, \ldots, x_{ip})^{\top}$ denotes a $(p+1)$-dimensional covariate vector. The covariates $(x_{i1}, \ldots, x_{ip})^{\top}$ are independently drawn from a multivariate normal distribution $\mathcal{N}(\boldsymbol{0},\boldsymbol{\Sigma})$, where the covariance matrix is specified as $\boldsymbol{\Sigma}_{ij} = 0.5^{|i-j|}$ for $1 \leq i, j \leq p$. The true parameter vector is set as $\boldsymbol{\beta}^{*} = (1,1,2,3,4,5,\boldsymbol{0}_{p-5})^\top$ with dimension $p=500$. The global bandwidth is fixed at $h_{opt} = 0.5 \cdot (\log p/N)^{1/3}$ and the quantile level is $\tau = 0.5$. For differential privacy, we fix $\delta = 1/N$ and vary $\epsilon$, where smaller $\epsilon$ values correspond to stronger privacy protection.

We consider the following three types of noise distributions for $\varepsilon_i$:
\begin{enumerate}
    \item Normal distribution: $\varepsilon_i \sim \mathcal{N}(0, 1)$;
    \item Student's $t$ distribution with 3 degrees of freedom: $\varepsilon_i \sim t(3)$;
    \item Cauchy distribution: $\varepsilon_i \sim \mathrm{Cauchy}(0,1)$.
\end{enumerate}
Initial values $\widehat{\boldsymbol{\beta}}_0$ and $\mathbf{W}_0$ are computed on the central machine $\mathcal{M}_1$ using the local data only. Unless otherwise specified, we use $K=T=10$ and report averages over 100 simulation runs, with standard deviations shown in parentheses. The values of $N$, $n$, $m$, and $\epsilon$ that change across experiments are stated in the surrounding text rather than repeated in the captions.
\subsection{Estimation Simulation}
\begin{figure}[t]
    \centering
    \begin{minipage}{\textwidth}
        \centering
        {\bfseries The case of homoscedastic errors} \\
        \includegraphics[width=0.32\textwidth]{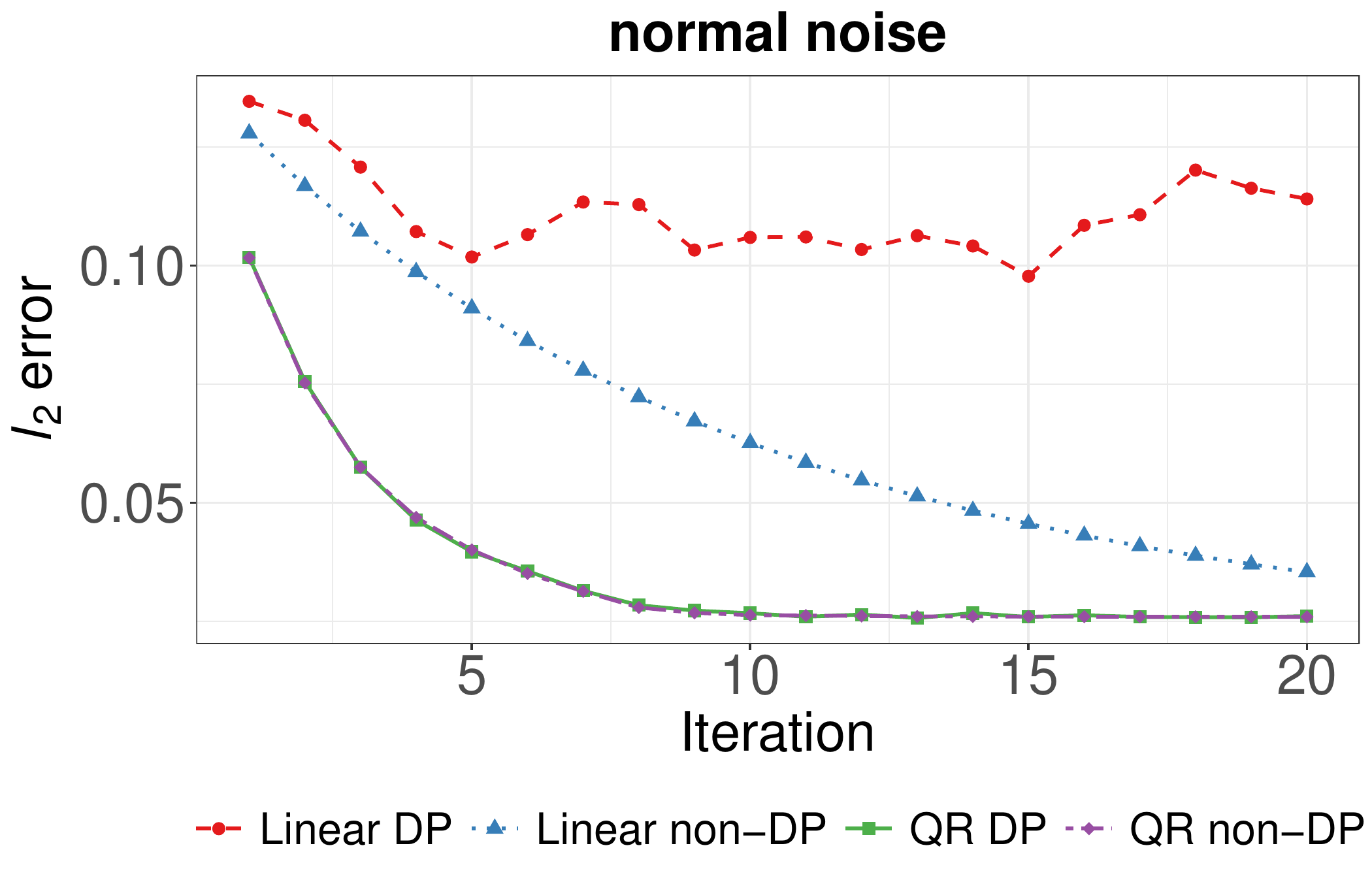}
        \includegraphics[width=0.32\textwidth]{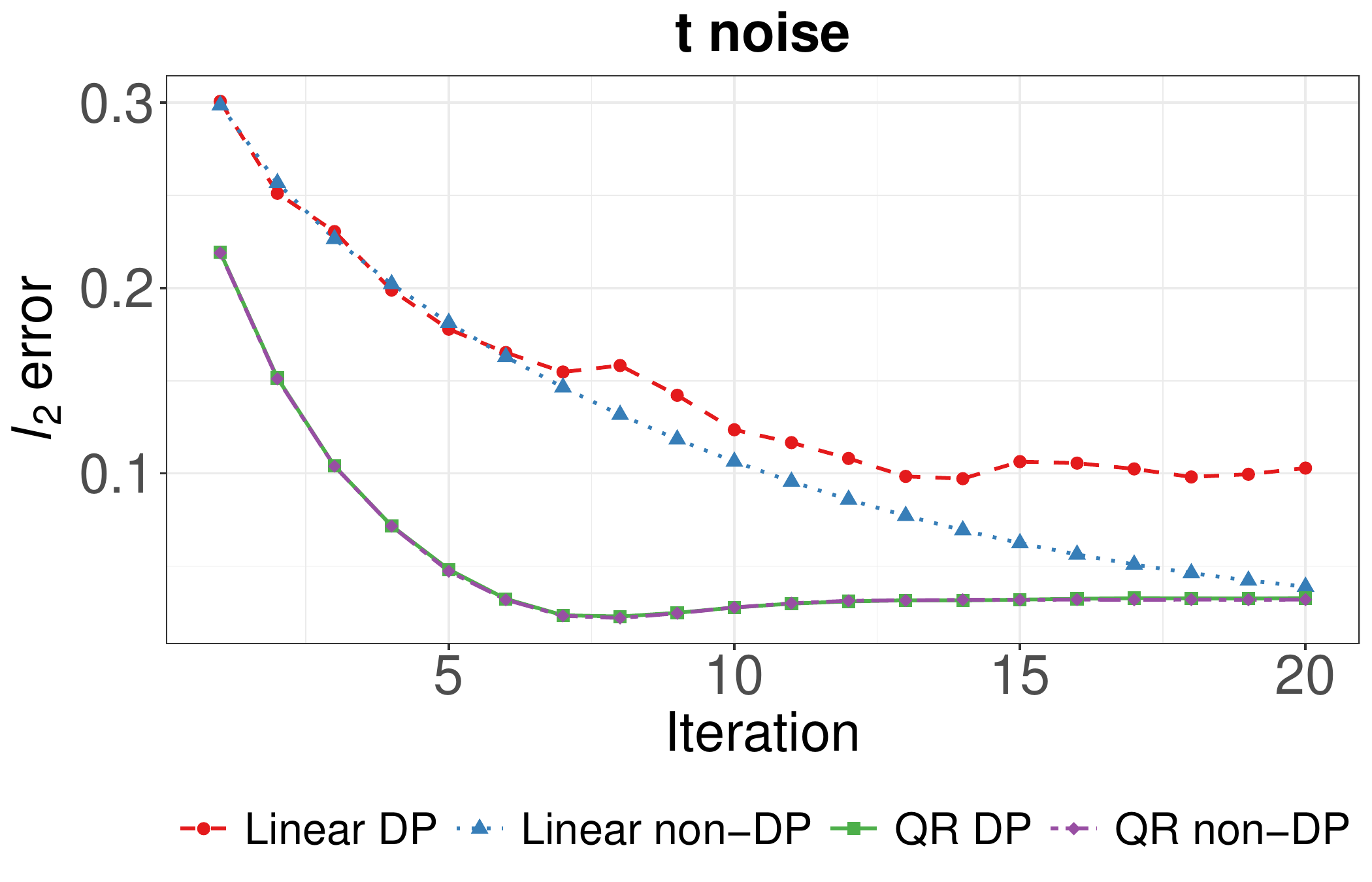}
        \includegraphics[width=0.32\textwidth]{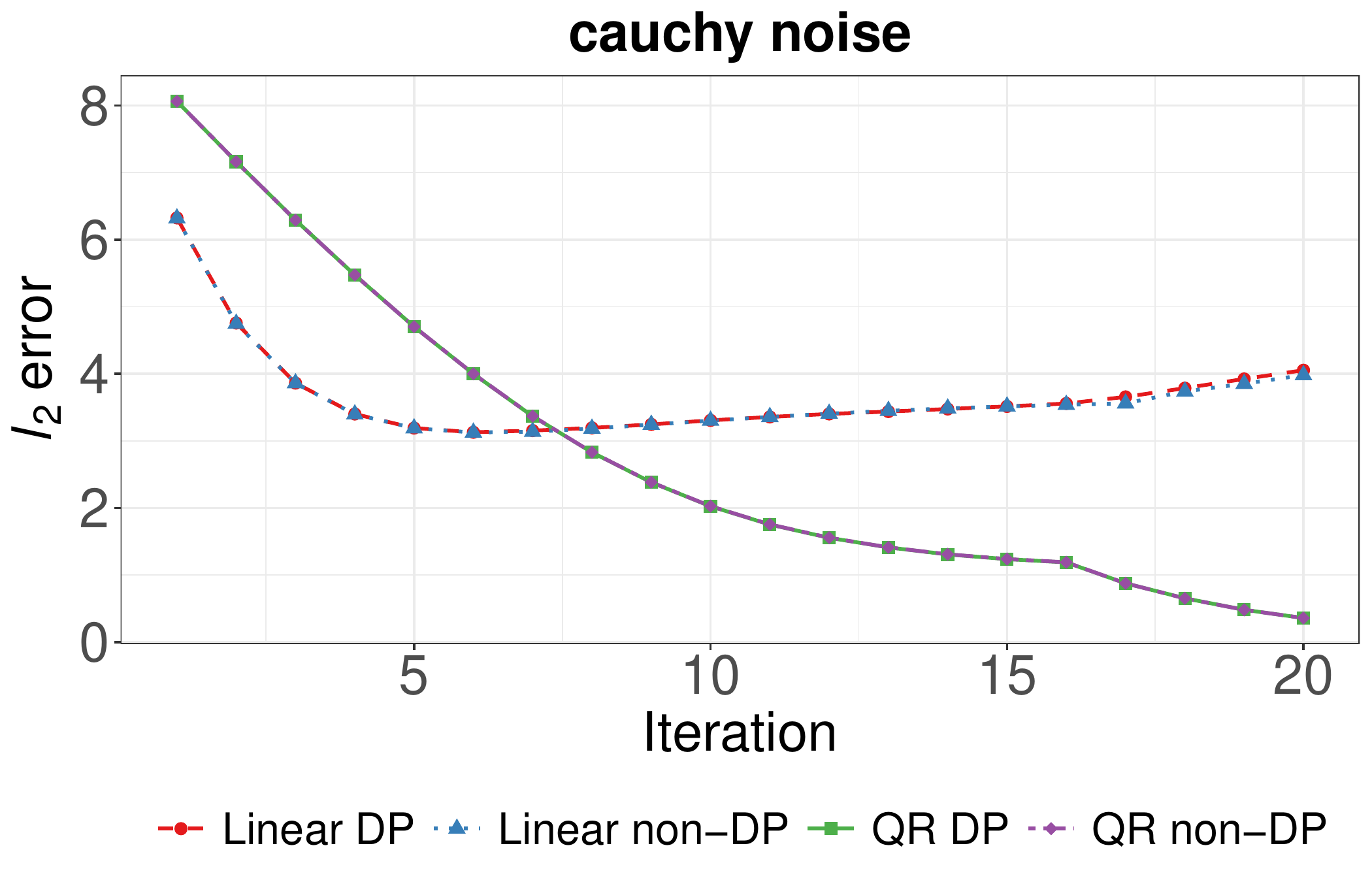}
    \end{minipage}
    \hfill
    \begin{minipage}{\textwidth}
        \centering
        {\bfseries The case of heteroscedastic errors} \\
        \includegraphics[width=0.32\textwidth]{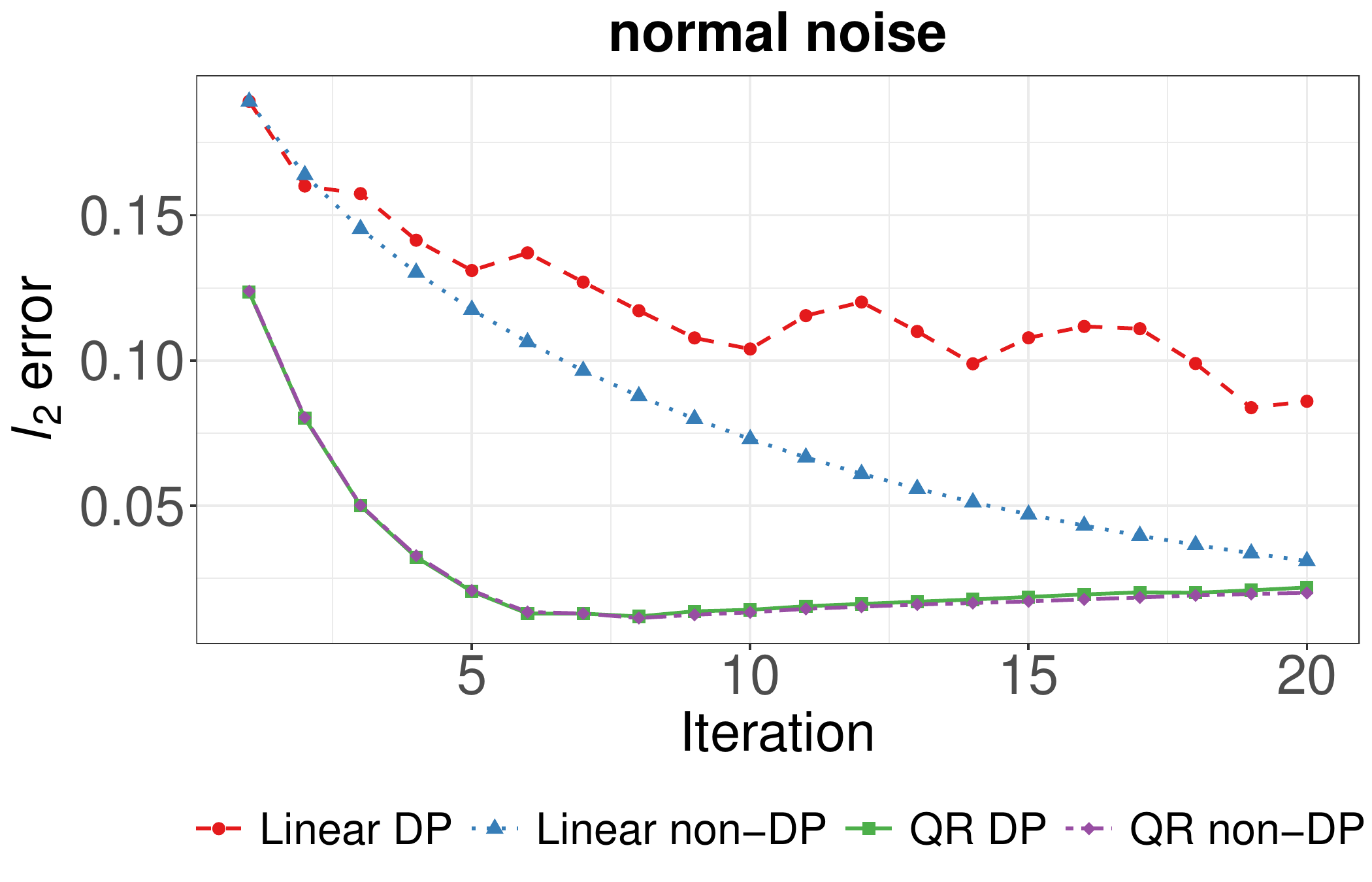}
        \includegraphics[width=0.32\textwidth]{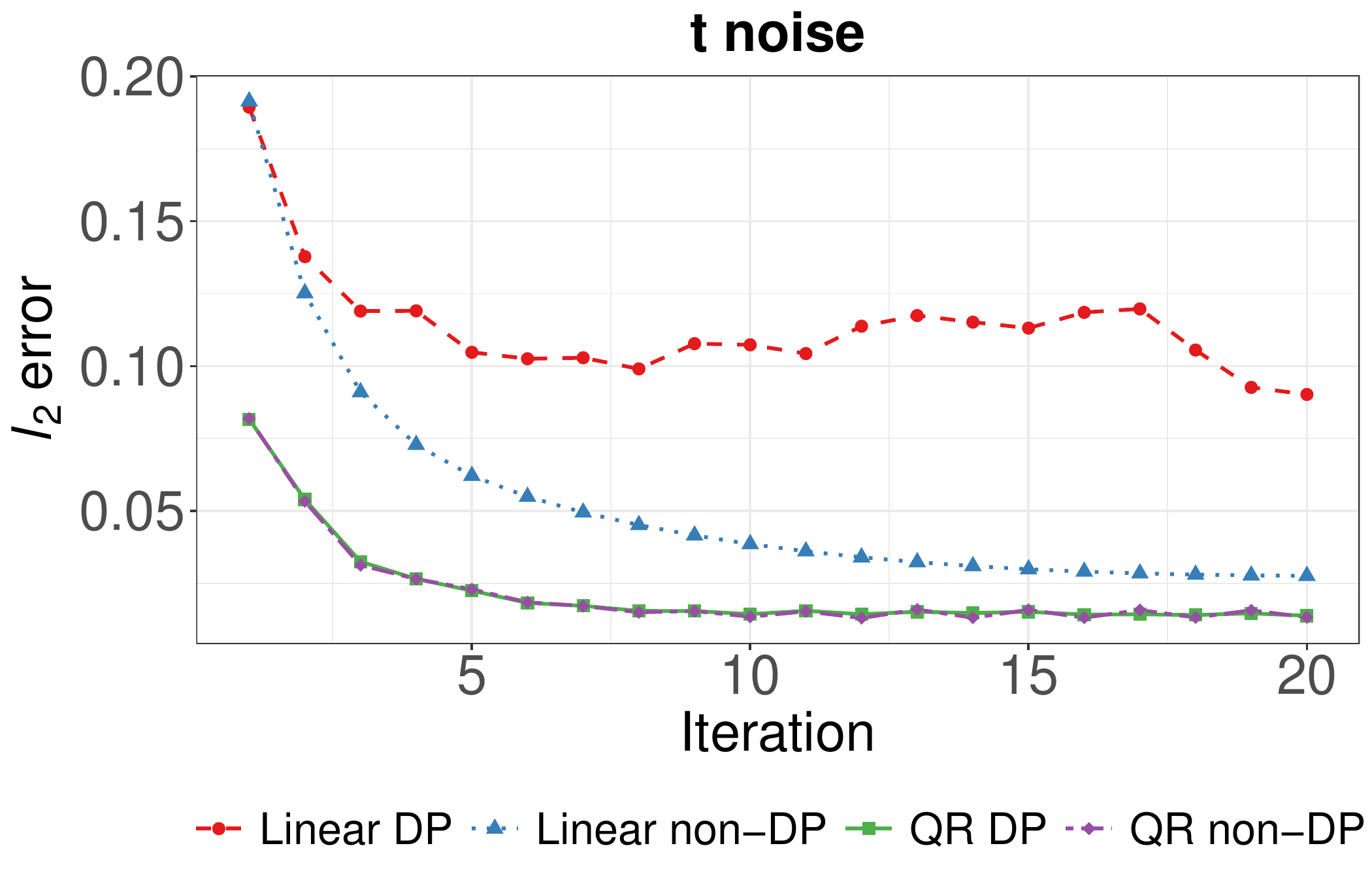}
        \includegraphics[width=0.32\textwidth]{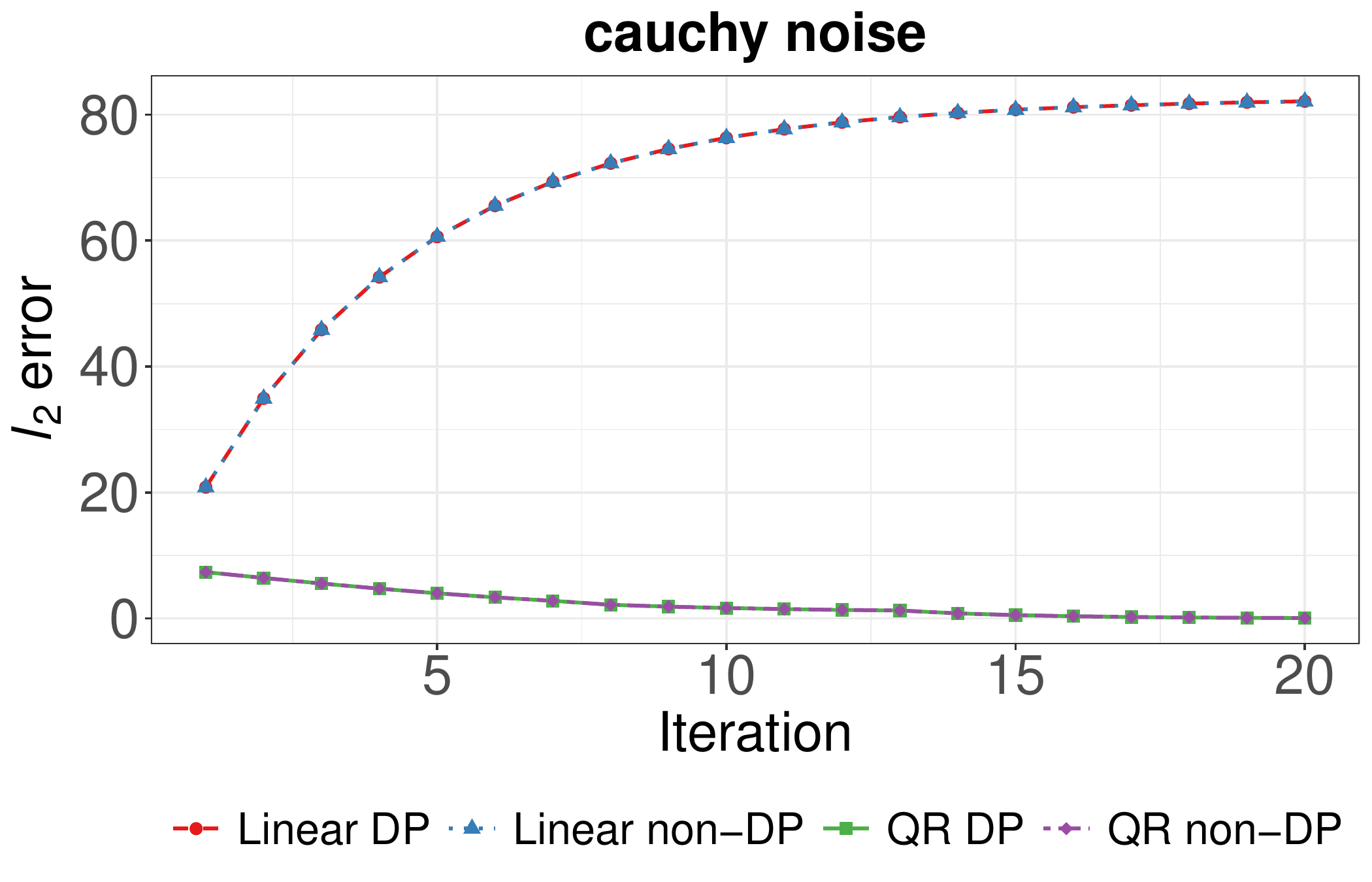}
    \end{minipage}

    \caption{Comparison of $\ell_2$ estimation error over iterations for QR and linear-regression baselines.}
    \label{fig:estimation_iteration}
\end{figure}
Figure~\ref{fig:estimation_iteration} compares our QR method ($T=10, K=10$) with the distributed linear regression approach of \cite{zhang2024differentially}. Each iteration $t$ consists of 10 gradient descent steps. Results are shown for three noise types. In all scenarios, the QR estimator achieves lower $\ell_2$ error than the linear regression baseline, regardless of whether DP is enforced.
Under Cauchy noise, the linear regression estimators fail to converge due to the infinite moments of the Cauchy distribution. Both DP and non-DP versions exhibit diverging errors, highlighting their unreliability in heavy-tailed settings. In contrast, the QR estimators remain stable and continue to converge. A consistent pattern emerges regarding privacy: DP versions of all methods exhibit larger long-term errors than their non-private counterparts. This observation aligns with the theoretical results in Section~\ref{sec:DP-estimation}, where the additional error from DP is characterized as the inherent cost of privacy protection.

\begin{figure}[t]
    \centering
    \begin{minipage}[c]{1\textwidth}
    \centering    
    {\bfseries The case of homoscedastic errors} \\
    \includegraphics[width=0.32\textwidth]{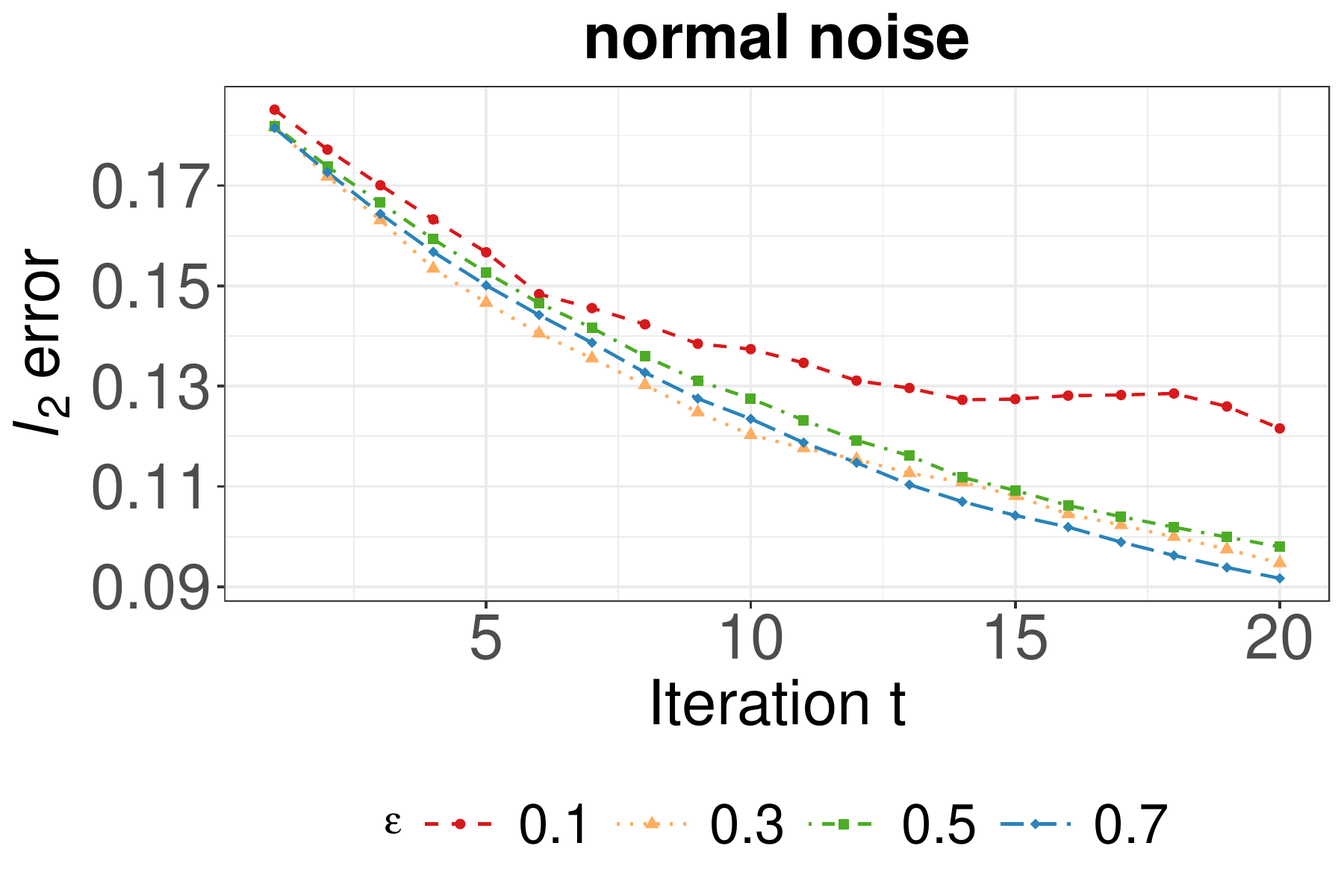}
    \includegraphics[width=0.32\textwidth]{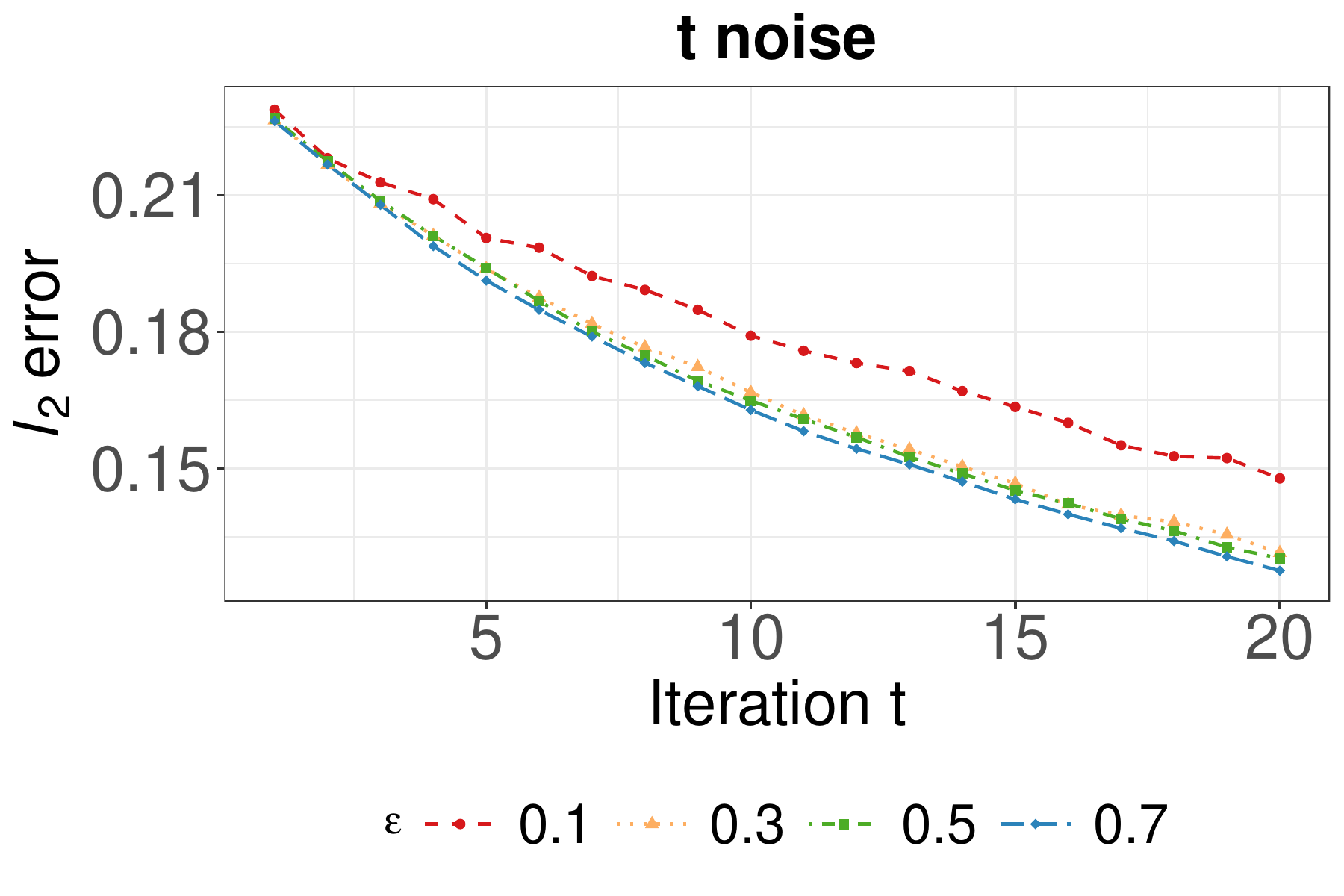}
    \includegraphics[width=0.32\textwidth]{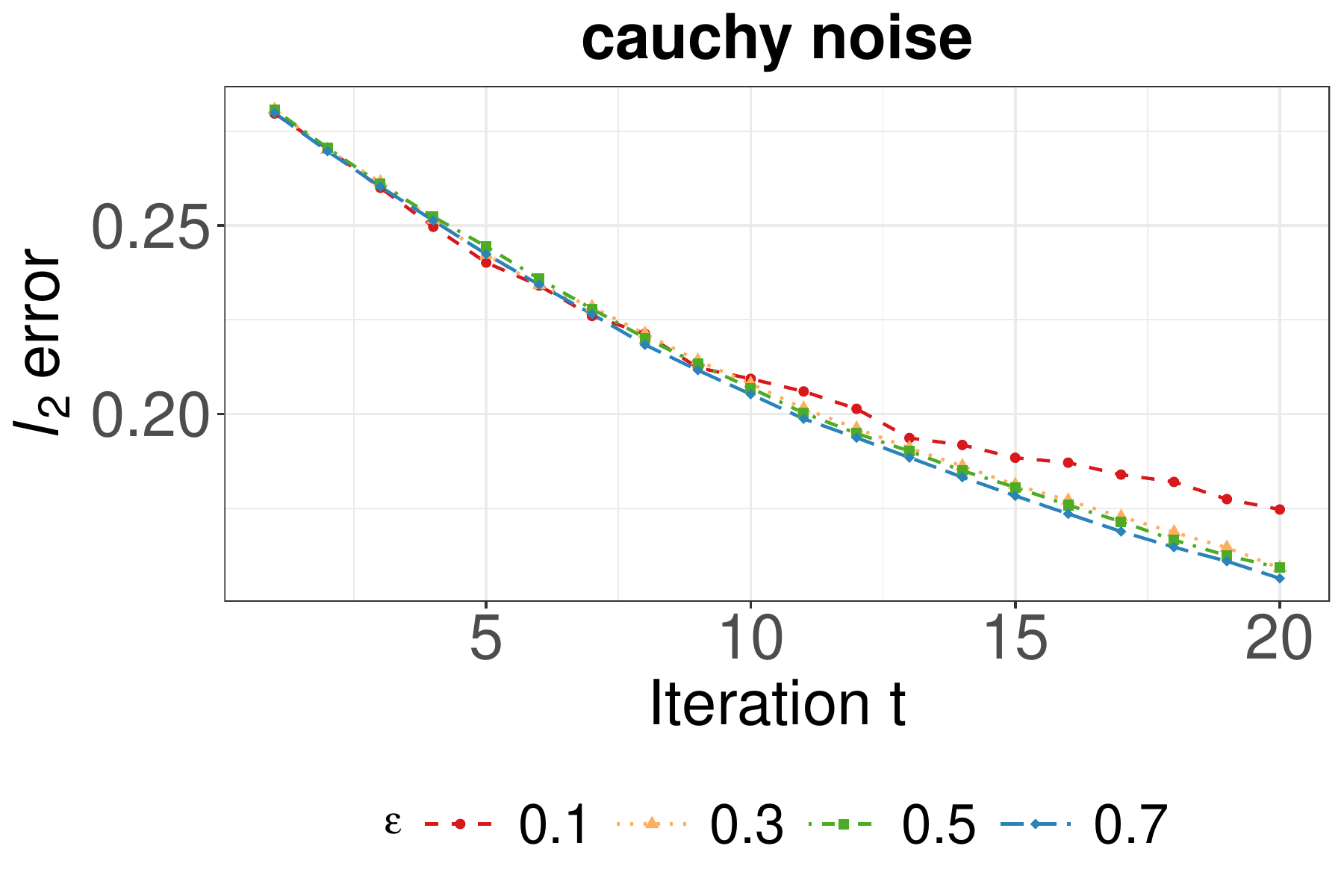}
    \end{minipage}
    \hfill
    \begin{minipage}[c]{1\textwidth}
    \centering
    {\bfseries The case of heteroscedastic errors} \\
    \includegraphics[width=0.32\textwidth]{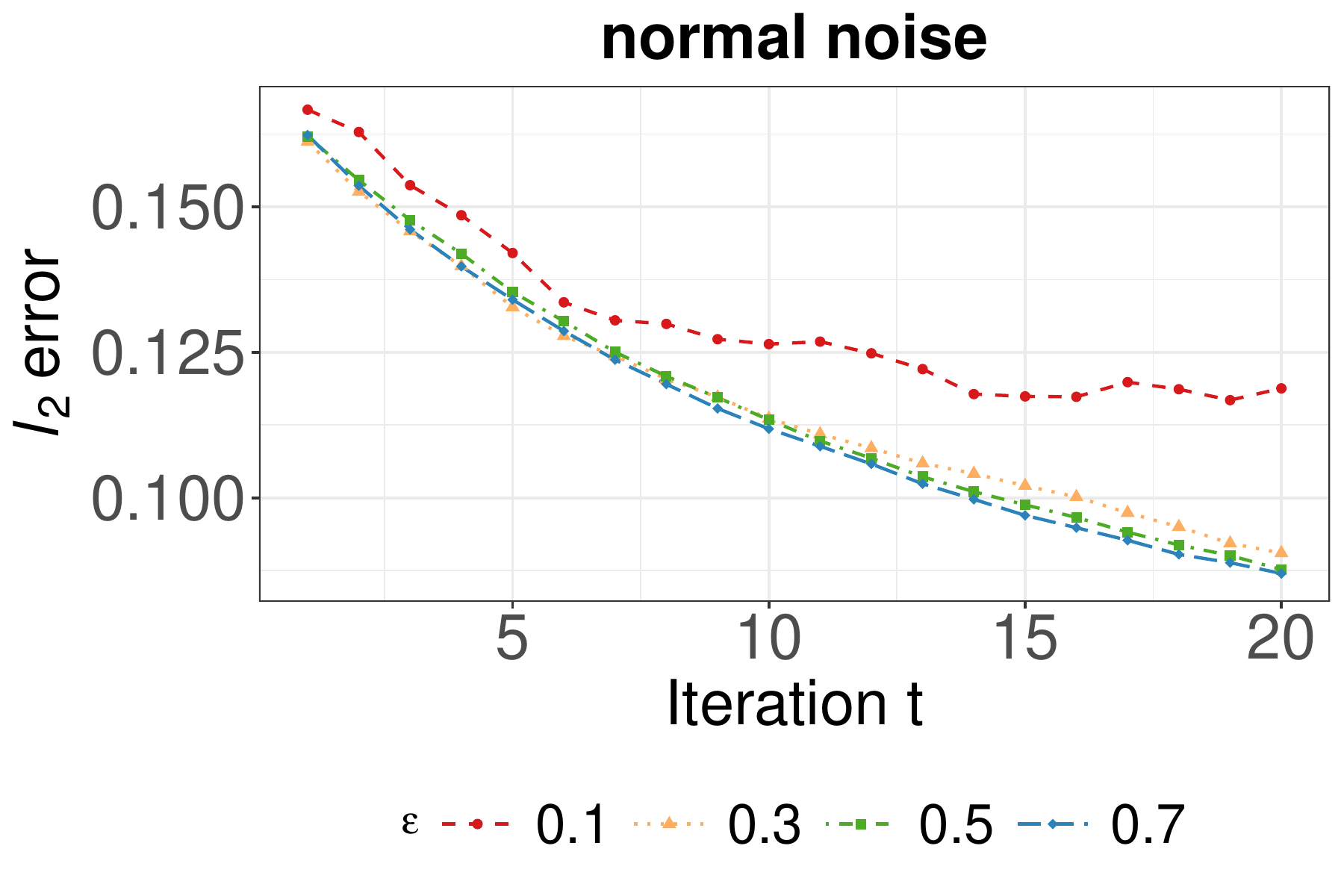}
    \includegraphics[width=0.32\textwidth]{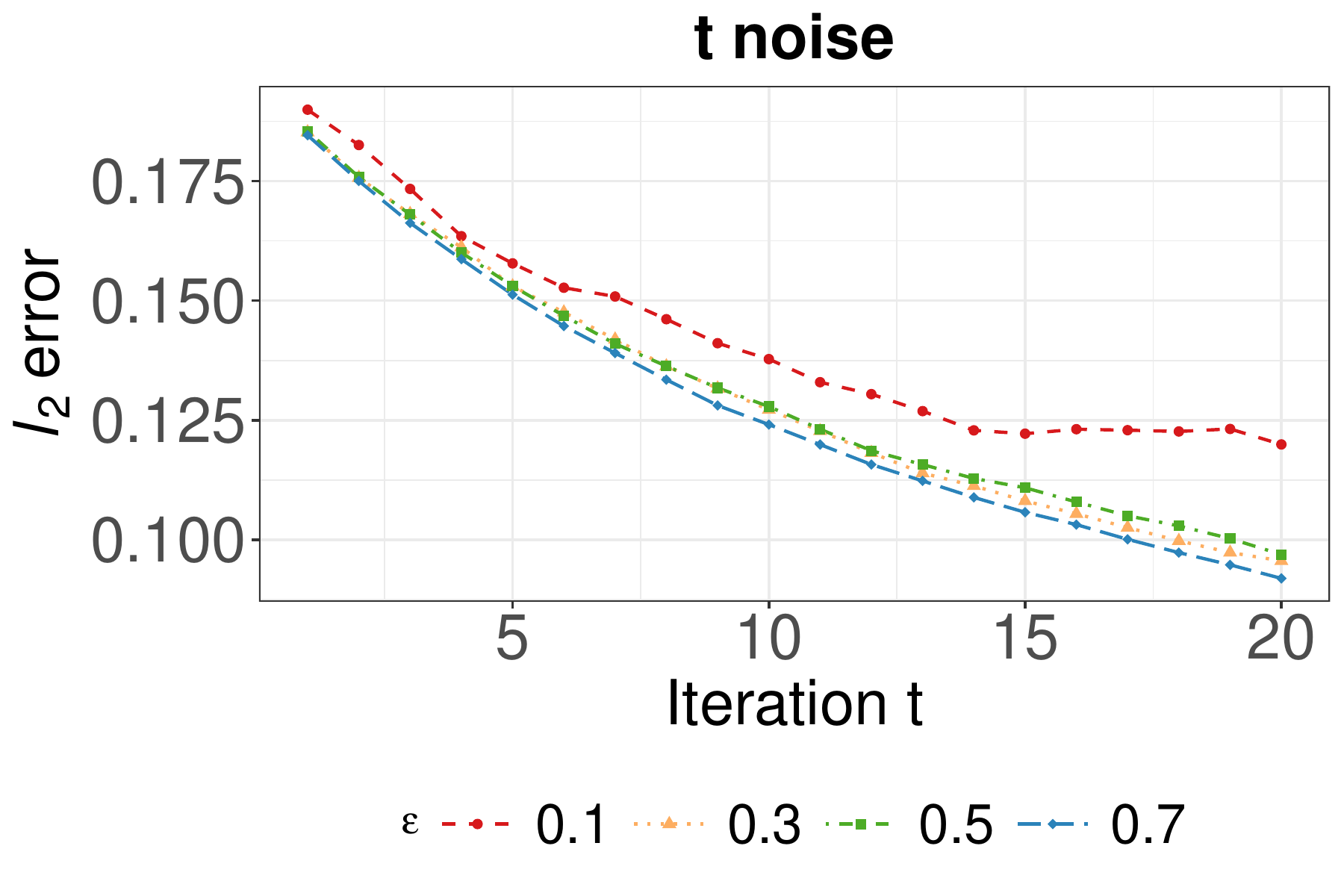}
    \includegraphics[width=0.32\textwidth]{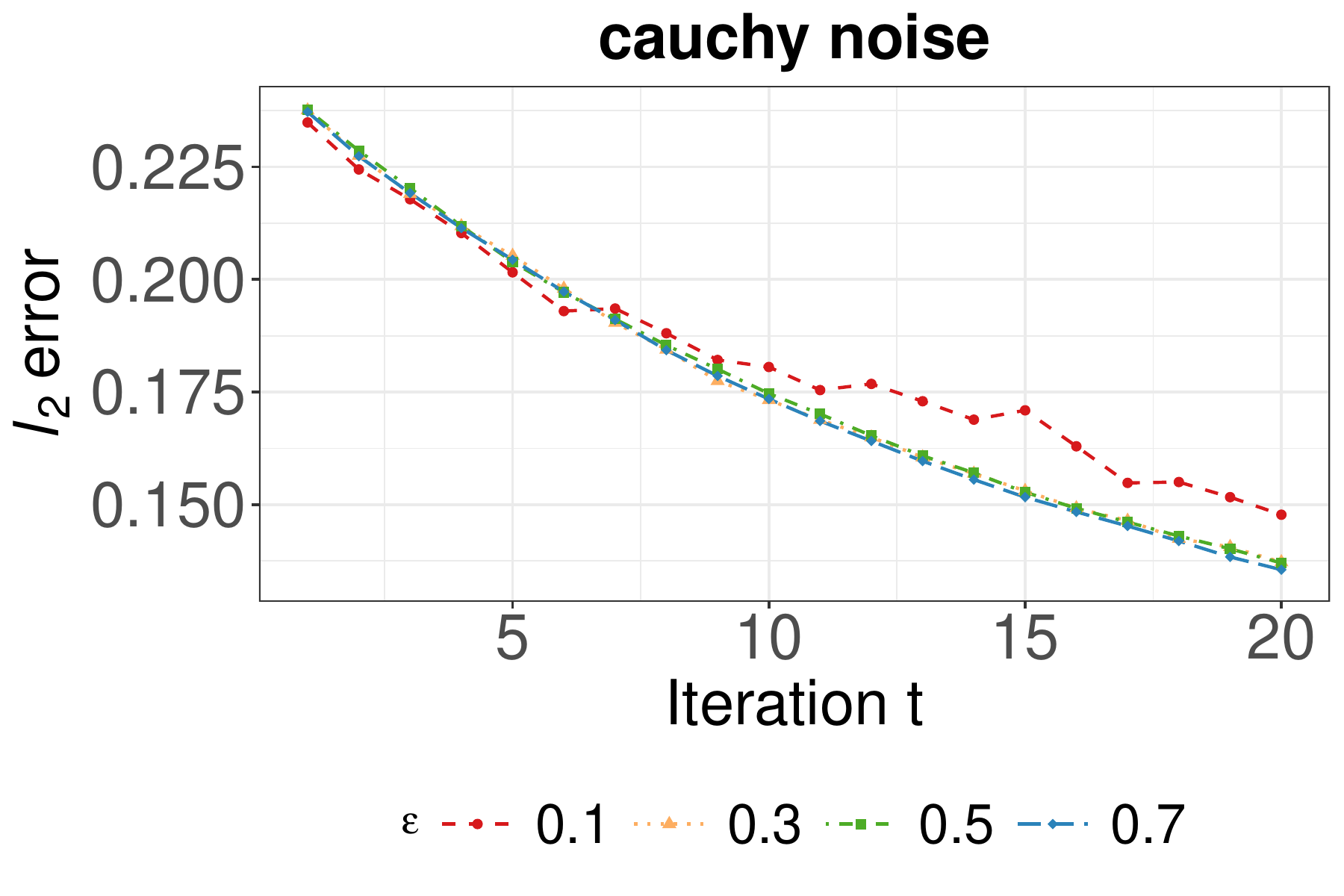}
    \end{minipage}
    \caption{Effect of the privacy budget on the $\ell_2$ estimation error across noise distributions.}
    \label{fig:privacy_budget_error}
\end{figure}

To better understand how varying levels of privacy protection influence convergence, we examine the estimator's behavior as the privacy budget $\epsilon$ changes. Figure~\ref{fig:privacy_budget_error} uses the same model, noise, and iteration settings as the preceding estimation experiment; the line styles and methods are identified in the legend. Smaller values of $\epsilon$ consistently lead to larger long-term estimation errors, matching the privacy term in Theorem~\ref{thm2} and illustrating the privacy--accuracy trade-off.

For the tabulated estimation experiments, Tables~\ref{tab:estimation_N_hom}--\ref{tab:estimation_N_het} fix the local sample size at $n=500$ and vary the total sample size $N$, whereas Tables~\ref{tab:estimation_n_hom}--\ref{tab:estimation_n_het} fix $N=20000$ and vary the local sample size $n$.

\begin{table}[H]
    \scriptsize
    \centering
    \caption{Estimation error as the total sample size varies under homoscedastic errors.}
    \resizebox{\textwidth}{!}{
    \begin{tabular}{c|ccc|ccc|ccc}
    \hline
    \multicolumn{1}{c|}{Noise}                                           
    & \multicolumn{3}{c|}{Normal}                   
    & \multicolumn{3}{c|}{$t(3)$}                   
    & \multicolumn{3}{c}{Cauchy}
    \\ \hline
    \multicolumn{1}{c|}{$N$}                                           
    & 5000         & 10000        & 20000        
    & 5000         & 10000        & 20000        
    & 5000         & 10000        & 20000
    \\ \hline
    $\epsilon = 0.1$ 
    & 0.389(0.183) & 0.344(0.120) & 0.419(0.208)
    & 0.407(0.201) & 0.421(0.221) & 0.447(0.258) 
    & 0.451(0.243) & 0.434(0.208) & 0.520(0.282) \\
    $\epsilon = 0.2$ 
    & 0.193(0.063) & 0.172(0.059) & 0.191(0.071)
    & 0.194(0.057) & 0.186(0.056) & 0.187(0.075) 
    & 0.210(0.059) & 0.206(0.079) & 0.208(0.069) \\
    $\epsilon = 0.5$ 
    & 0.086(0.031)  &0.086(0.033)  &0.074(0.023)   
    & 0.091(0.032)  &0.085(0.024)  &0.083(0.038)   
    & 0.107(0.037)  &0.102(0.037)  &0.102(0.035)   \\
    $\epsilon = 0.7$ 
    &0.071(0.027)  &0.064(0.022)  &0.065(0.026)   
    &0.083(0.029)  &0.070(0.023)  &0.070(0.023)   
    &0.089(0.035)  &0.081(0.025)  &0.077(0.026)   \\
    $\epsilon = 1$ 
    & 0.069(0.002)  &0.053(0.017) &0.049(0.016)  
    & 0.071(0.025)  &0.061(0.025) &0.053(0.020)  
    & 0.079(0.031)  &0.073(0.027) &0.066(0.027) \\\hline
    \end{tabular}
    }
    \label{tab:estimation_N_hom}
\end{table}

\begin{table}[H]
    \scriptsize
    \centering
    \caption{Estimation error as the total sample size varies under heteroscedastic errors.}
    \resizebox{\textwidth}{!}{
    \begin{tabular}{c|ccc|ccc|ccc}
    \hline
    \multicolumn{1}{c|}{Noise}                                           
    & \multicolumn{3}{c|}{Normal}                   
    & \multicolumn{3}{c|}{$t(3)$}                   
    & \multicolumn{3}{c}{Cauchy}                   
    \\ \hline
    \multicolumn{1}{c|}{$N$}                                           
    & 5000         & 10000        & 20000        
    & 5000         & 10000        & 20000        
    & 5000         & 10000        & 20000        
    \\ \hline
    $\epsilon = 0.1$ 
    & 0.333(0.127) & 0.361(0.171) & 0.341(0.167)
    & 0.407(0.268) & 0.407(0.268) & 0.447(0.259)
    & 0.441(0.283) & 0.431(0.290) & 0.468(0.260) \\
    $\epsilon = 0.2$ 
    & 0.166(0.065) & 0.107(0.062) & 0.177(0.071)
    & 0.170(0.060) & 0.164(0.062) & 0.160(0.058) 
    & 0.160(0.055) & 0.168(0.065) & 0.184(0.069) \\
    $\epsilon = 0.5$ 
    &0.078(0.027)   &0.072(0.025) &0.073(0.024)   
    &0.079(0.030)   &0.072(0.028) &0.072(0.021)    
    &0.091(0.038)   &0.085(0.031) &0.075(0.024)  \\
    $\epsilon = 0.7$ 
    &0.059(0.020)   &0.052(0.016) &0.050(0.021)   
    &0.063(0.021)   &0.057(0.021) &0.056(0.018)   
    &0.074(0.029)   &0.063(0.024) &0.066(0.024)   \\
    $\epsilon = 1$ 
    &0.049(0.017) & 0.044(0.015) & 0.043(0.016)  
    &0.059(0.018) & 0.045(0.018) & 0.043(0.013)  
    &0.065(0.027) & 0.057(0.018) & 0.057(0.025) 
    \\ \hline
    \end{tabular}
    }
    \label{tab:estimation_N_het}
\end{table}

\begin{table}[H]
    \scriptsize
    \centering
    \caption{Estimation error as the local sample size varies under homoscedastic errors.}
    \resizebox{\textwidth}{!}{
    \begin{tabular}{c|ccc|ccc|ccc}
    \hline
    \multicolumn{1}{c|}{Noise}
    & \multicolumn{3}{c|}{Normal}                   
    & \multicolumn{3}{c|}{$t(3)$}                   
    & \multicolumn{3}{c}{Cauchy}                   
    \\ \hline
    \multicolumn{1}{c|}{$n$}                                           
    & 500         & 1000        & 2000       
    & 500         & 1000        & 2000        
    & 500         & 1000        & 2000        \\ \hline
    $\epsilon = 0.1$ 
    & 0.475(0.326) & 0.193(0.084) & 0.094(0.031)
    & 0.537(0.293) & 0.178(0.068) & 0.105(0.030)
    & 0.445(0.210) & 0.219(0.073) & 0.096(0.034) \\
    $\epsilon = 0.2$ 
    & 0.192(0.053) & 0.085(0.031) & 0.047(0.014)
    & 0.187(0.061) & 0.113(0.037) & 0.046(0.014)
    & 0.195(0.065) & 0.107(0.045) & 0.061(0.020) \\
    $\epsilon = 0.5$ 
    & 0.072(0.023) & 0.045(0.019) & 0.030(0.011)
    & 0.081(0.042) & 0.051(0.018) & 0.033(0.015)
    & 0.093(0.028) & 0.055(0.015) & 0.044(0.017) \\
    $\epsilon = 0.7$ 
    & 0.054(0.017) & 0.042(0.015) & 0.028(0.012)
    & 0.065(0.022) & 0.046(0.016) & 0.030(0.010)
    & 0.088(0.028) & 0.049(0.016) & 0.037(0.012) \\
    $\epsilon = 1$ 
    & 0.048(0.019) & 0.039(0.014) & 0.026(0.007)
    & 0.050(0.022) & 0.040(0.014) & 0.030(0.009)
    & 0.076(0.028) & 0.048(0.016) & 0.035(0.011) 
    \\ \hline
    \end{tabular}
    }
    \label{tab:estimation_n_hom}
\end{table}

\begin{table}[H]
    \scriptsize
    \centering
    \caption{Estimation error as the local sample size varies under heteroscedastic errors.}
    \resizebox{\textwidth}{!}{
    \begin{tabular}{c|ccc|ccc|ccc}
    \hline
    \multicolumn{1}{c|}{Noise}                                           
    & \multicolumn{3}{c|}{Normal}                   
    & \multicolumn{3}{c|}{$t(3)$}                   
    & \multicolumn{3}{c}{Cauchy}                   
    \\ \hline
    \multicolumn{1}{c|}{$n$}                                           
    & 500         & 1000        & 2000        
    & 500         & 1000        & 2000        
    & 500         & 1000        & 2000        \\ \hline
    $\epsilon = 0.1$ 
    & 0.372(0.192) & 0.177(0.069) & 0.075(0.028)
    & 0.388(0.188) & 0.164(0.047) & 0.087(0.039)
    & 0.471(0.233) & 0.187(0.064) & 0.101(0.046) \\
    $\epsilon = 0.2$ 
    & 0.166(0.055) & 0.077(0.031) & 0.040(0.011)
    & 0.165(0.057) & 0.082(0.035) & 0.048(0.016)
    & 0.198(0.058) & 0.102(0.070) & 0.054(0.019) \\
    $\epsilon = 0.5$ 
    & 0.057(0.015) & 0.036(0.013) & 0.026(0.010)
    & 0.083(0.030) & 0.036(0.012) & 0.029(0.008)
    & 0.079(0.028) & 0.044(0.020) & 0.031(0.009) \\
    $\epsilon = 0.7$ 
    & 0.055(0.017) & 0.032(0.012) & 0.022(0.010)
    & 0.050(0.020) & 0.034(0.068) & 0.026(0.010)
    & 0.060(0.012) & 0.038(0.014) & 0.030(0.010) \\
    $\epsilon = 1$ 
    & 0.040(0.011) & 0.024(0.006) & 0.020(0.005)
    & 0.043(0.016) & 0.028(0.008) & 0.023(0.008)
    & 0.055(0.016) & 0.034(0.011) & 0.026(0.008) 
    \\ \hline
    \end{tabular}
    }
    \label{tab:estimation_n_het}
\end{table}

Tables~\ref{tab:estimation_N_hom}--\ref{tab:estimation_n_het} comprehensively quantify the privacy-accuracy-efficiency trade-offs under fixed computational budgets ($K = T = 10$) and varying privacy budgets $\epsilon$. Three principal empirical patterns emerge from these analyses. 

First, across all noise distributions and data models---including both homoscedastic (Model 1) and heteroscedastic (Model 2) error structures---strengthening privacy protection systematically increases the $\ell_2$ error, with mean error rising as $\epsilon$ decreases from 1.0 to 0.1. This monotonic relationship provides empirical validation for the DP error term in our theoretical framework.

Second, as shown in Tables~\ref{tab:estimation_N_hom}--\ref{tab:estimation_N_het}, the interaction between privacy constraints and sample efficiency reveals a fundamental dichotomy. Under relaxed privacy regimes (e.g., $\epsilon \geq 0.5$), the $\ell_2$ error decreases as the total sample size $N$ increases, indicating that the oracle convergence rate dominates when privacy costs are moderate. Conversely, under strict privacy (e.g., $\epsilon < 0.5$), error reduction plateaus despite increasing $N$, demonstrating that privacy-induced error becomes the limiting factor in highly constrained settings. This pattern persists even in the presence of heteroscedasticity.

Third, as shown in Tables~\ref{tab:estimation_n_hom}--\ref{tab:estimation_n_het}, increasing the local sample size $n$ while fixing the global sample size $N = 20000$ consistently reduces the $\ell_2$ error across all privacy levels. Since we use $K = T = 10$ and take the single-machine estimate as the starting point, this trend reflects the influence of the initial estimator, as captured by the third term in \eqref{bound:betaT}.

\subsection{Inference Simulation}
In this section, we conducted simulation studies to evaluate the inference performance of our proposed differentially private distributed quantile regression methods. Specifically, we focus on two key aspects: (1) assessing the normality of the standardized test statistics, and (2) evaluating the empirical coverage and width of simultaneous confidence intervals constructed via two bootstrap-based procedures. The data generation process and privacy parameter settings are the same as in the estimation experiments. We fix the number of outer iterations at \(T = 10\), set the local bandwidth to $b = 0.5 \cdot (\log p / n)^{1/3}$, and use the median quantile level \(\tau = 0.5\). For the normality diagnostics in Figures~\ref{fig:hom_combined_histograms_tau0.5} and~\ref{fig:het_combined_histograms_tau0.5}, we set $\epsilon=1$, $n=500$, and $N=5000$. For the bootstrap procedures, we use $N=20{,}000$, $n=500$, $n_B=2000$ replications, and significance level \(\alpha = 5\%\).

\setlength{\fboxrule}{0.8pt}  
\setlength{\fboxsep}{1.6pt}   
\begin{figure}[!t]
        \centering
        {\bfseries The case of homoscedastic errors: Histograms} \\
        \includegraphics[width=0.29\textwidth]{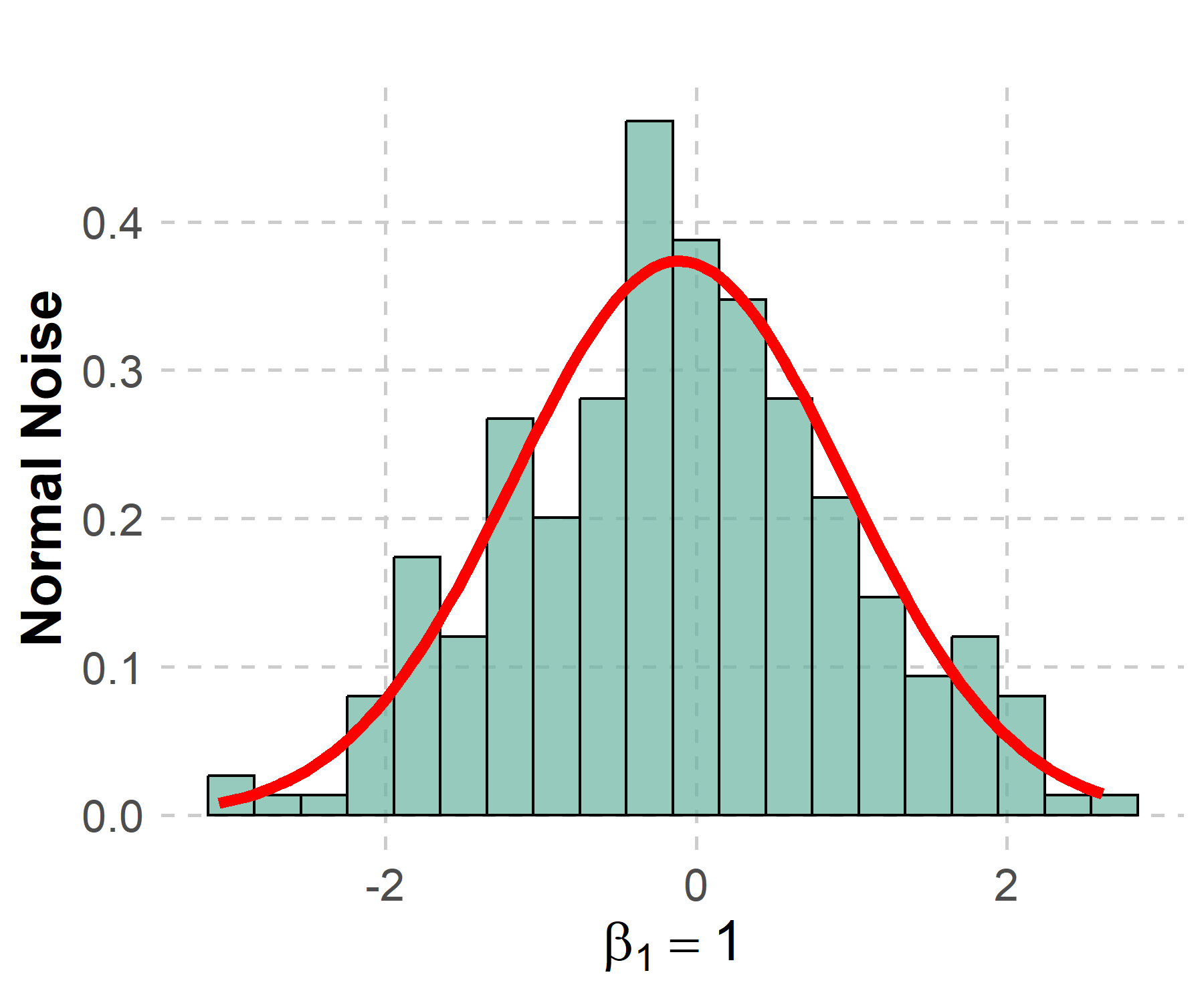}
        \includegraphics[width=0.29\textwidth]{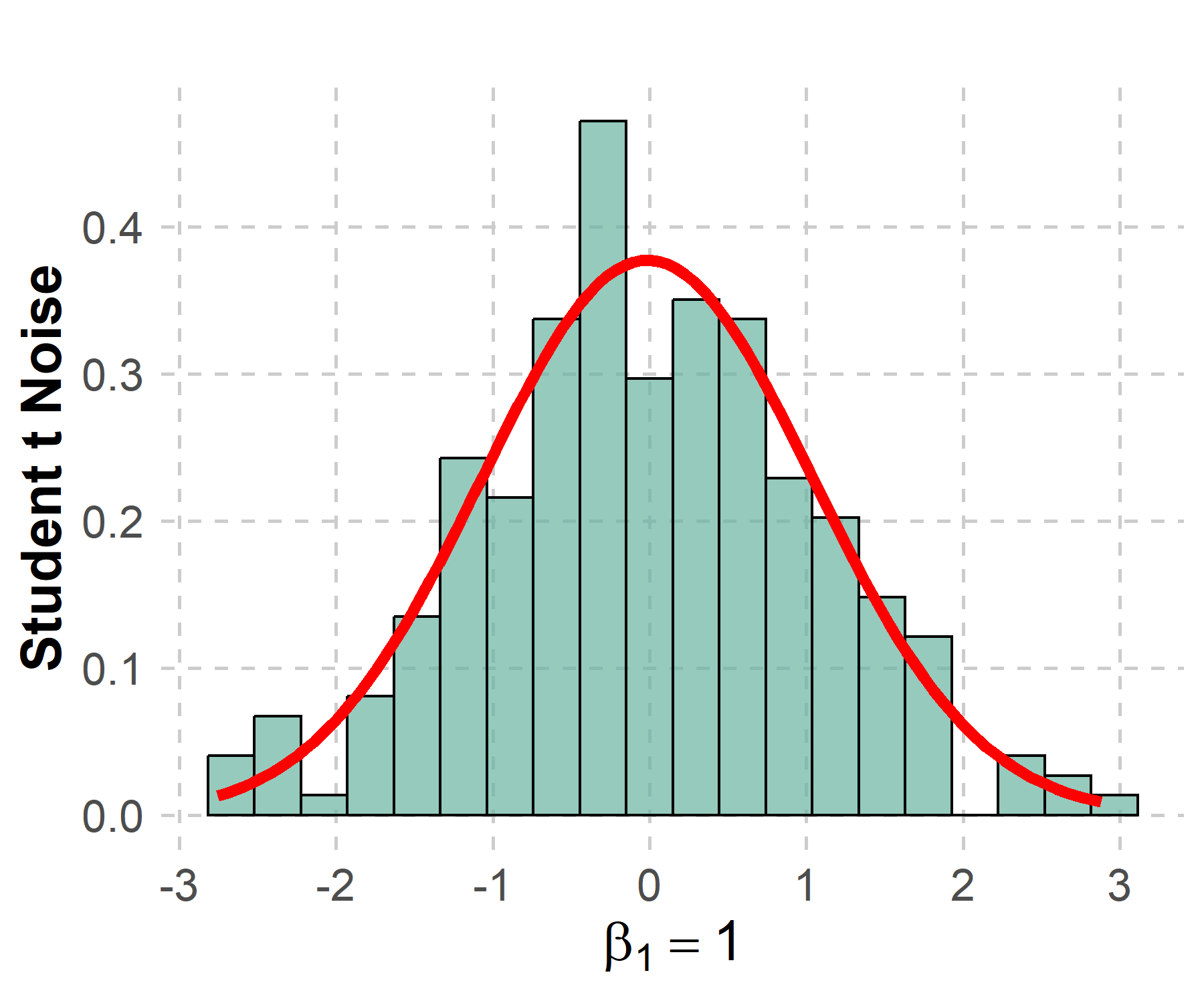}
        \includegraphics[width=0.29\textwidth]{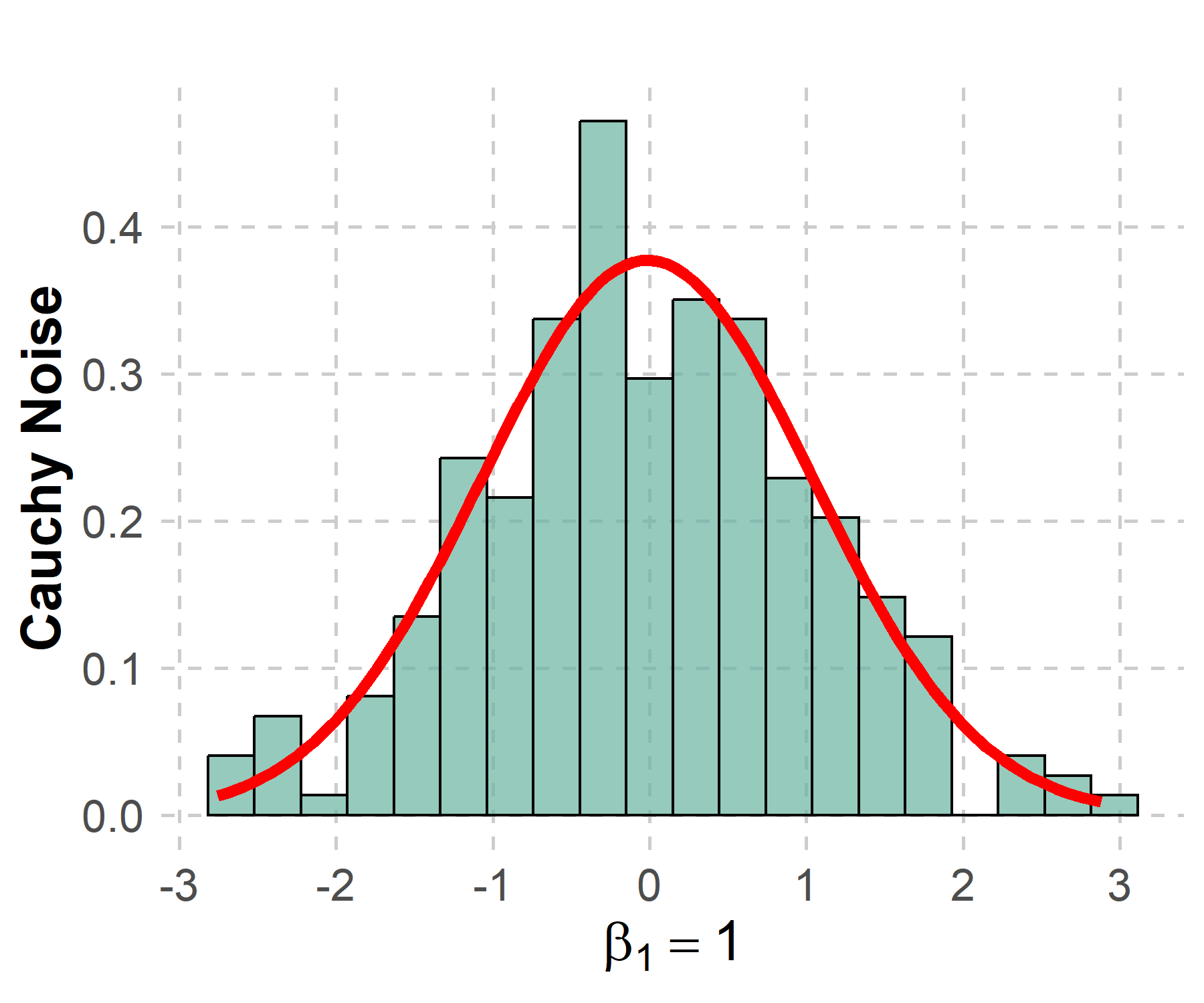}
        \includegraphics[width=0.29\textwidth]{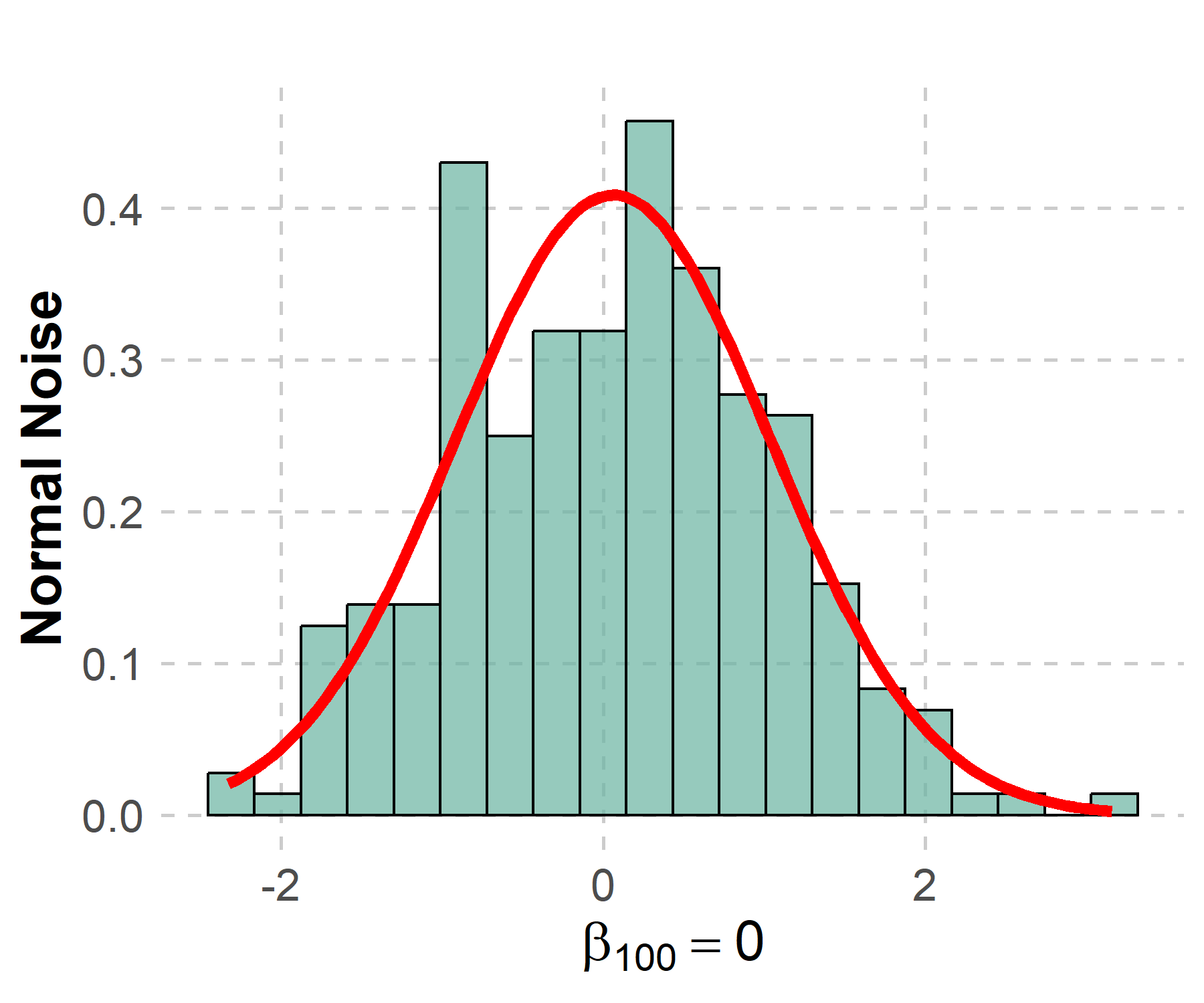}
        \includegraphics[width=0.29\textwidth]{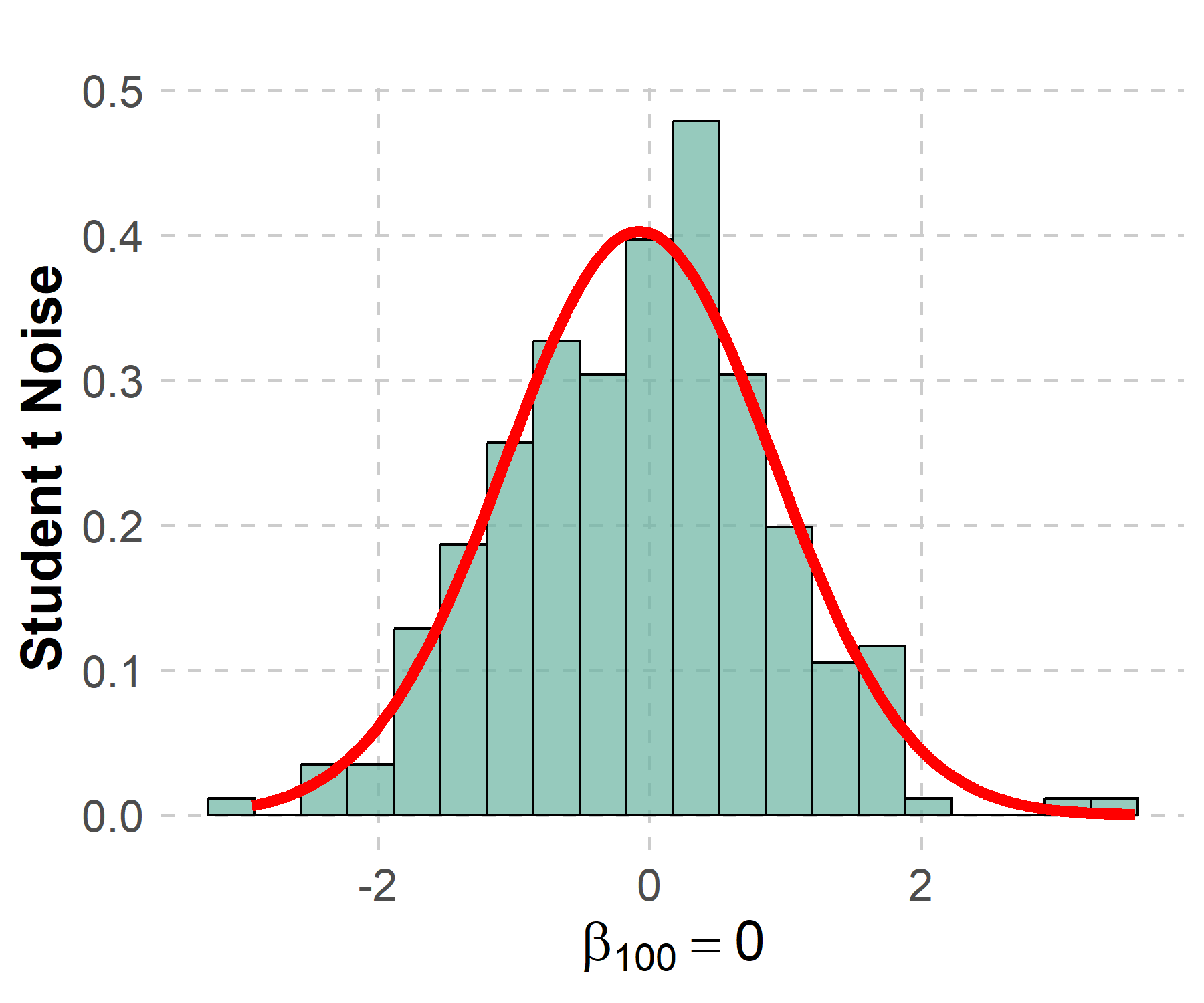}
        \includegraphics[width=0.29\textwidth]{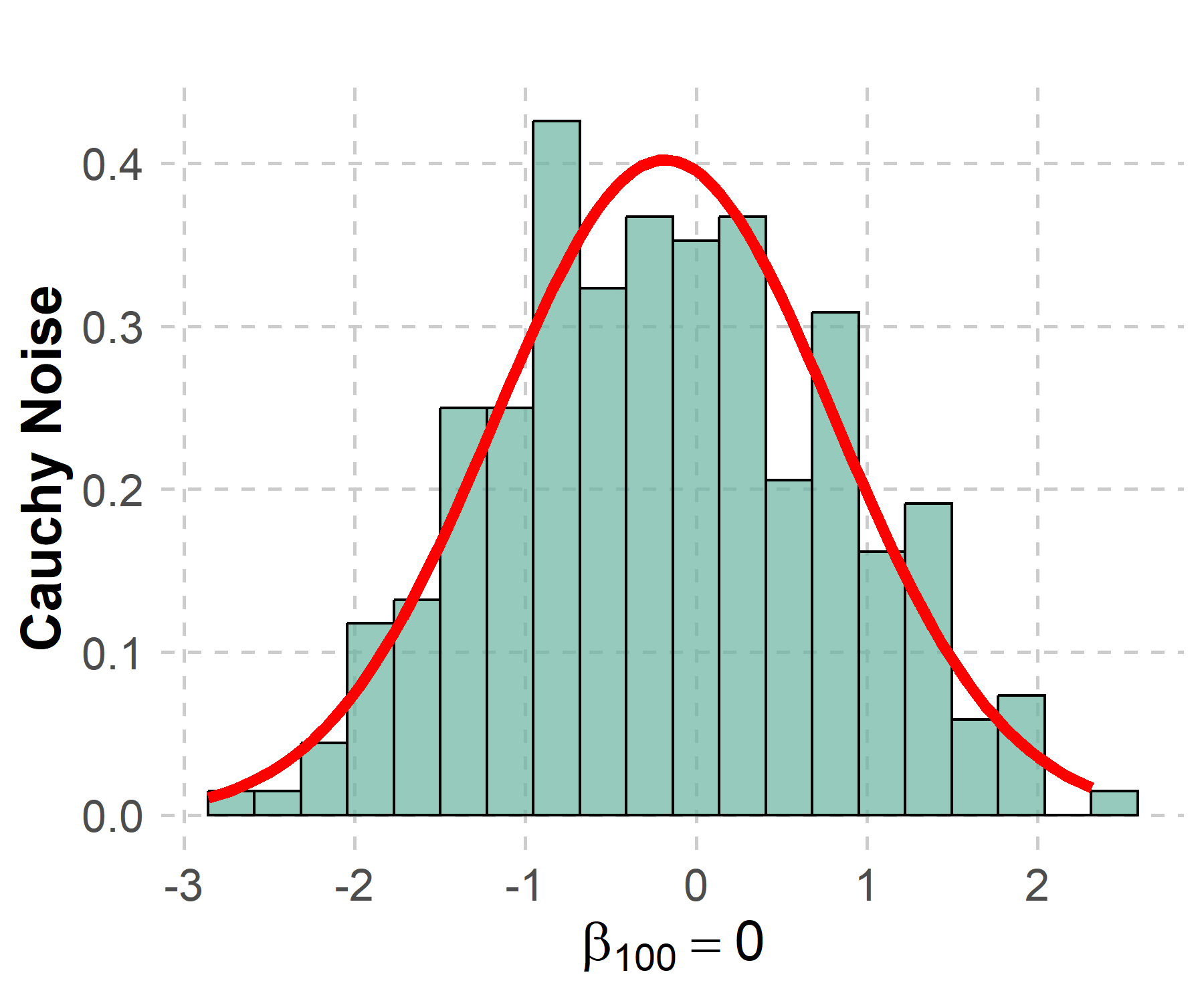}
        \includegraphics[width=0.29\textwidth]{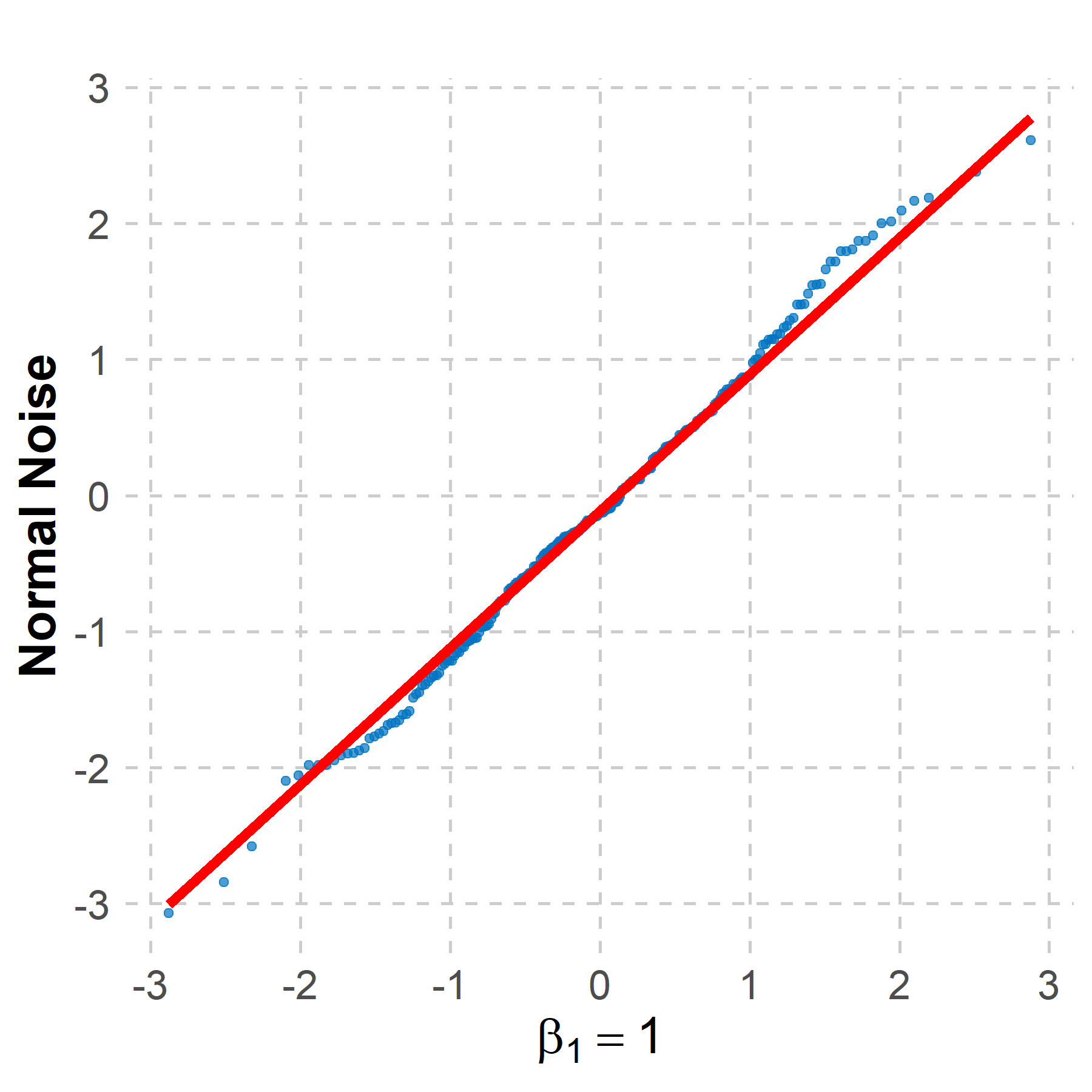}
        \includegraphics[width=0.29\textwidth]{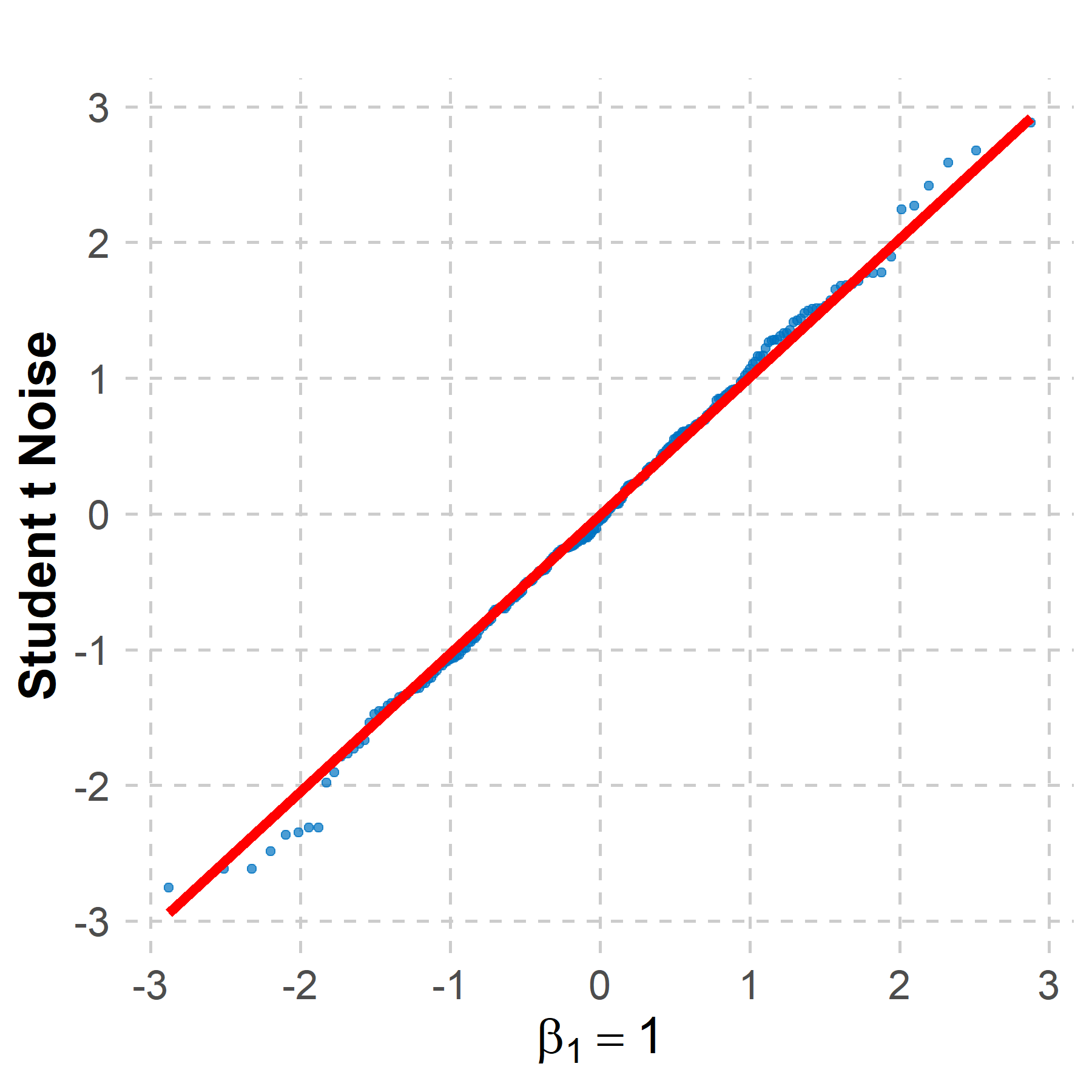}
        \includegraphics[width=0.29\textwidth]{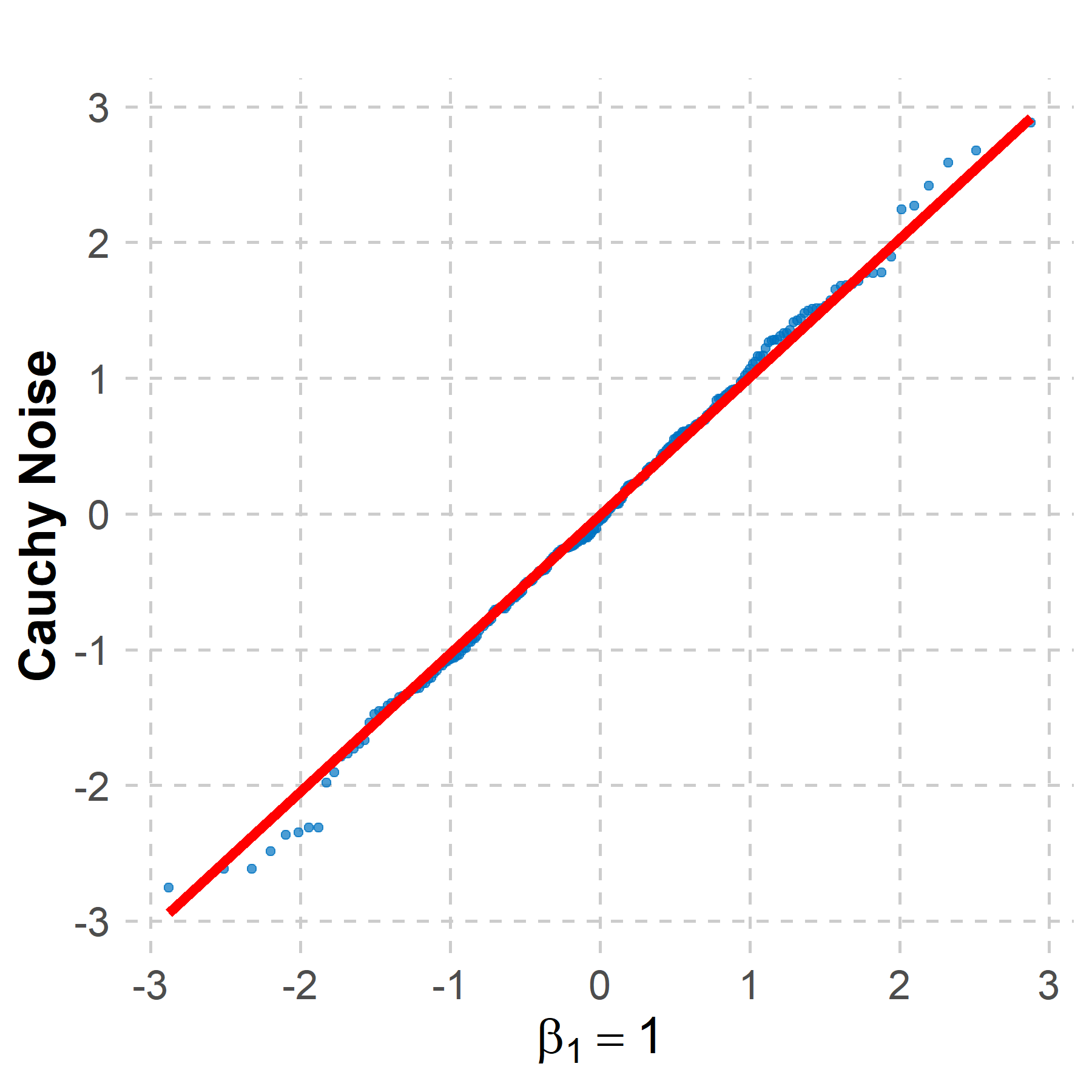}
        \includegraphics[width=0.29\textwidth]{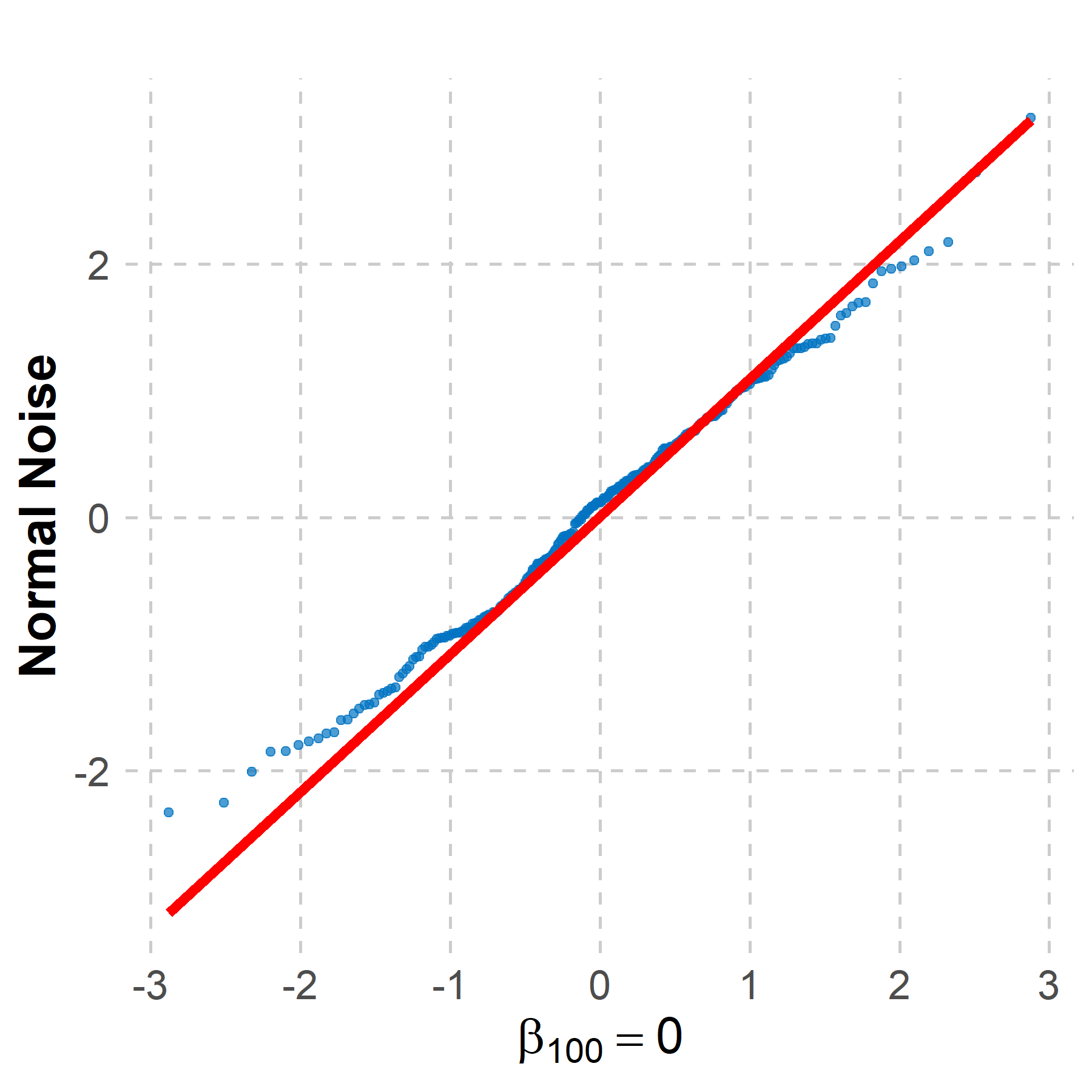}
        \includegraphics[width=0.29\textwidth]{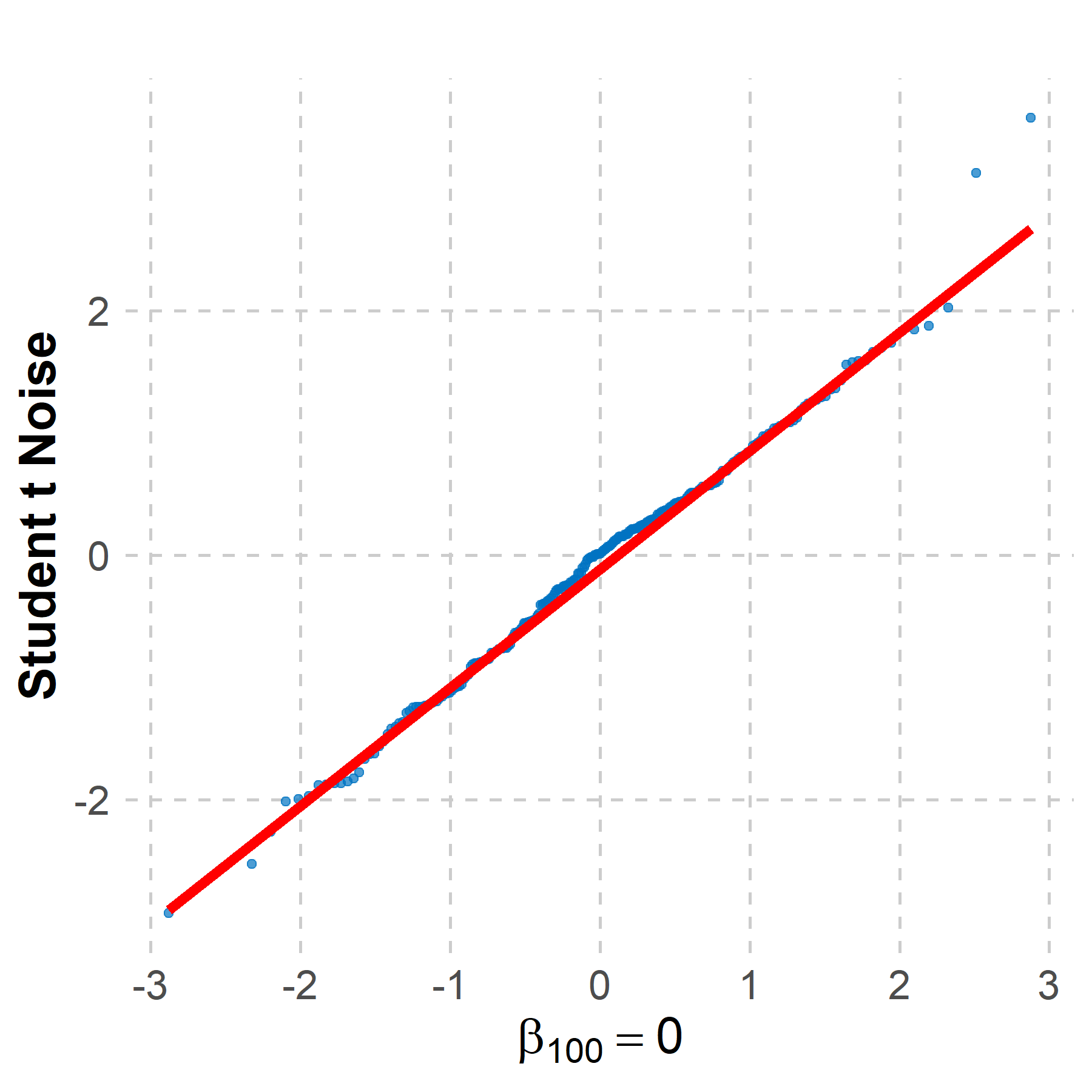}
        \includegraphics[width=0.29\textwidth]{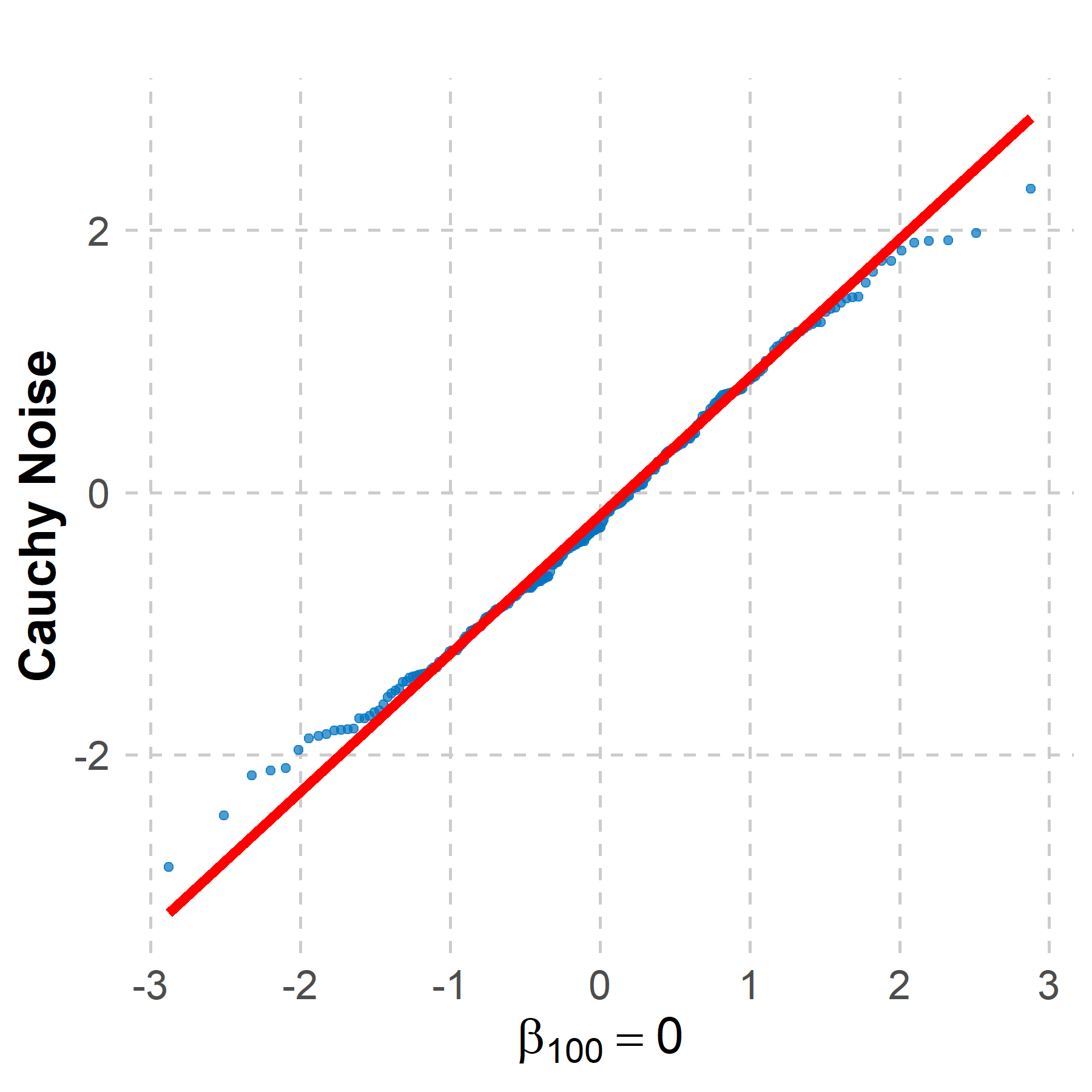} 
        \caption{Normality diagnostics for standardized test statistics under homoscedastic errors.}
    \label{fig:hom_combined_histograms_tau0.5}
\end{figure}

\setlength{\fboxrule}{1pt}  
\setlength{\fboxsep}{2pt}   
\begin{figure}[!t]
        \centering
        {\bfseries The case of heteroscedastic errors: Histograms} \\
        \includegraphics[width=0.29\textwidth]{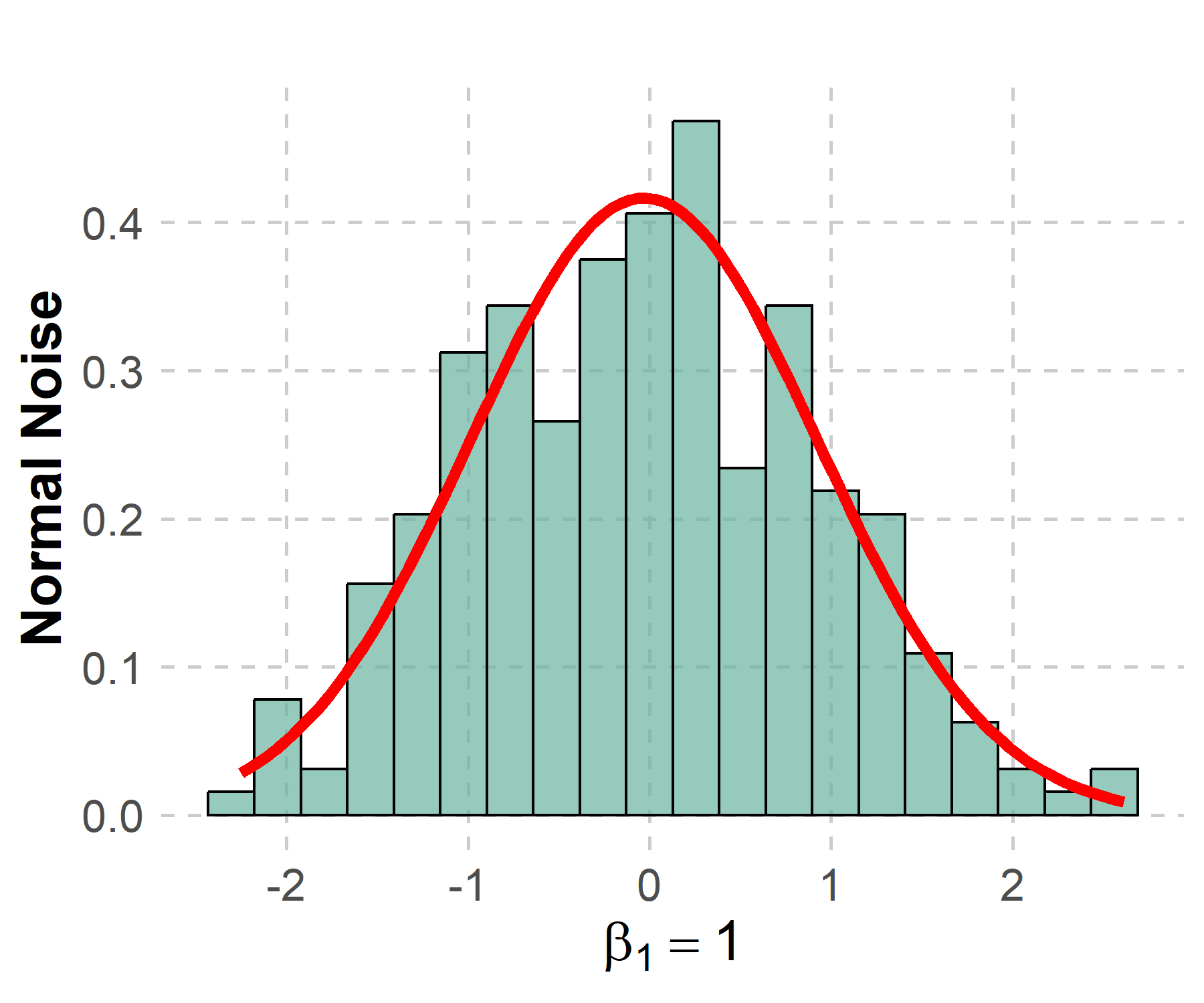}
        \includegraphics[width=0.29\textwidth]{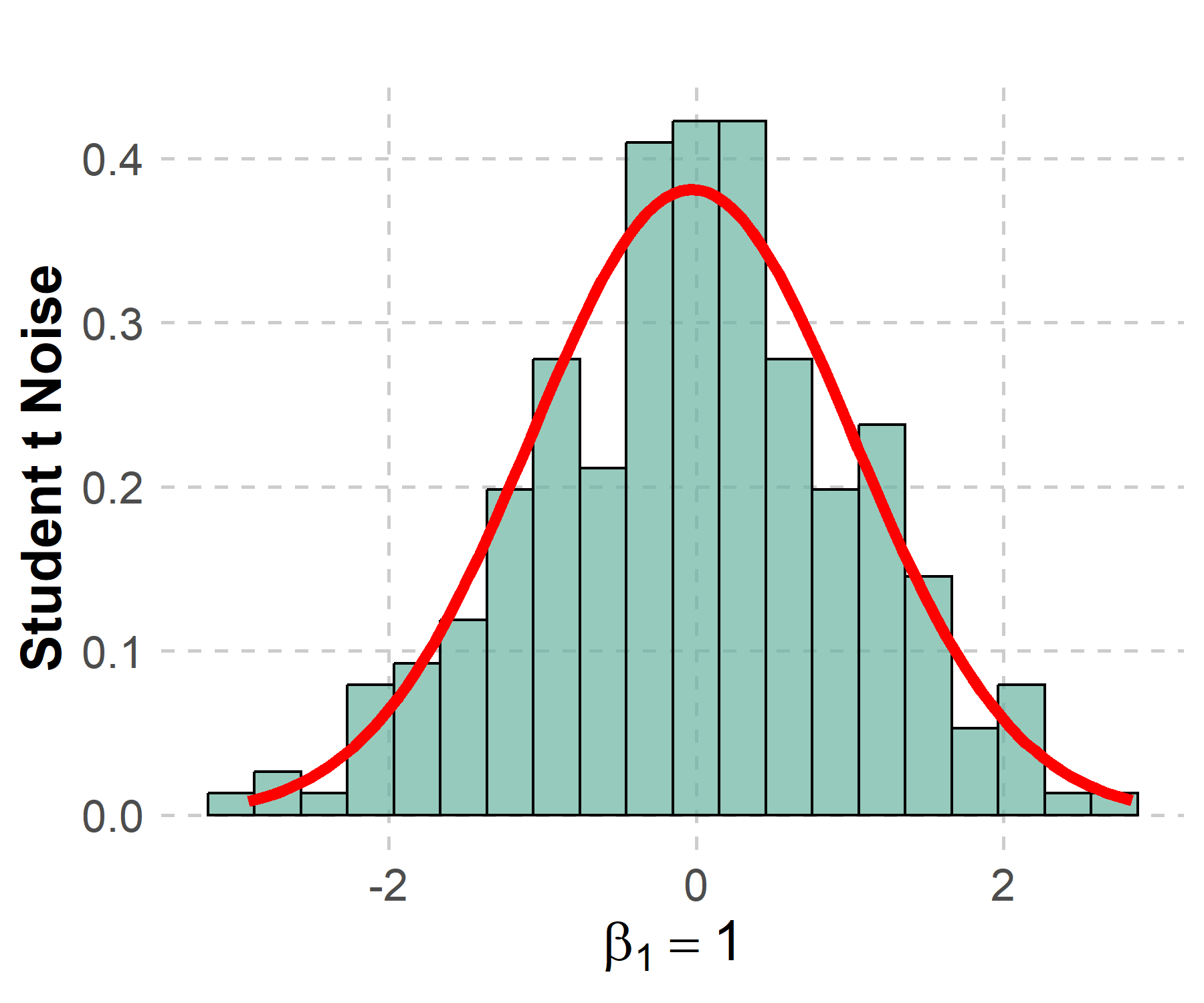}
        \includegraphics[width=0.29\textwidth]{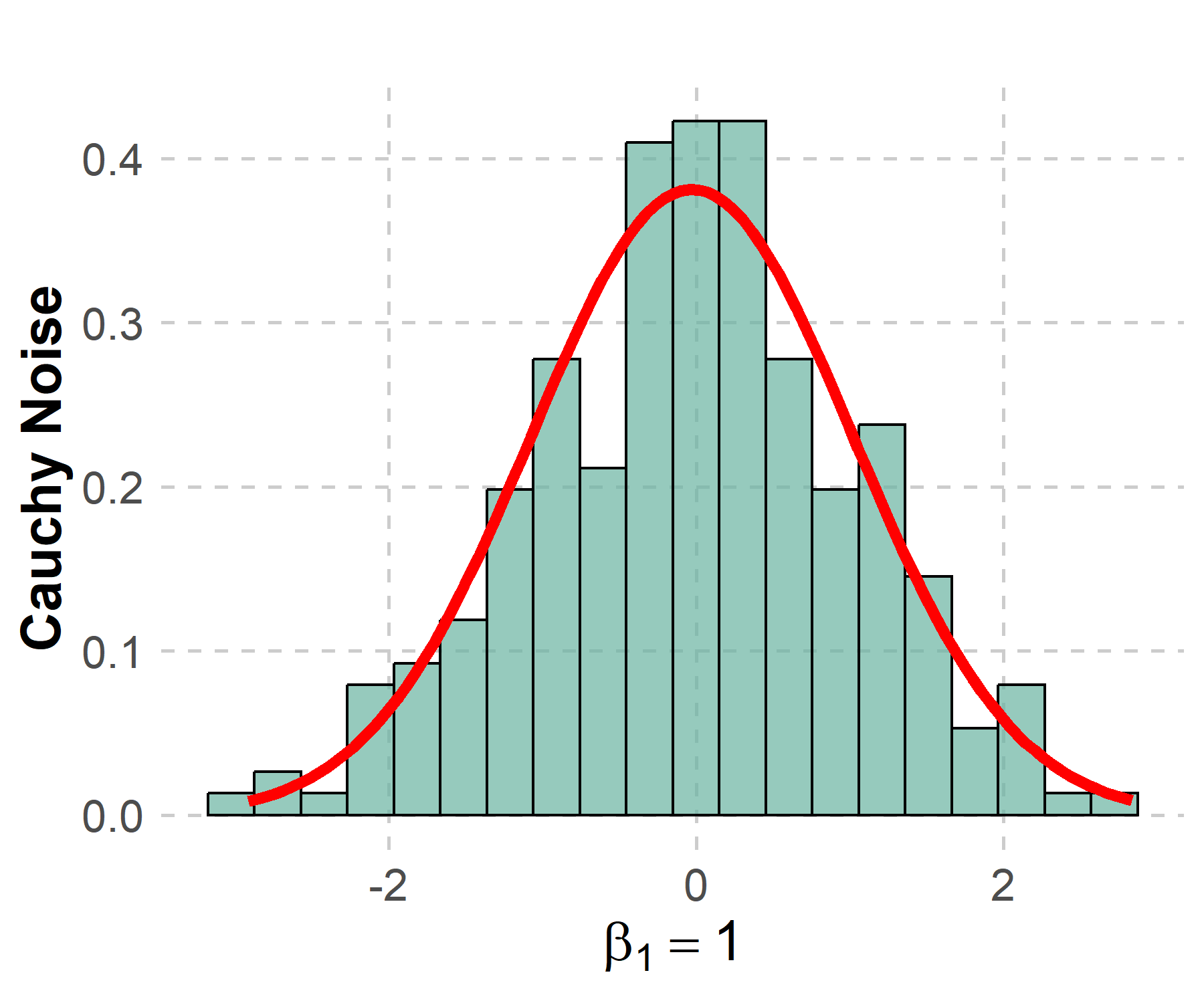}
        \includegraphics[width=0.29\textwidth]{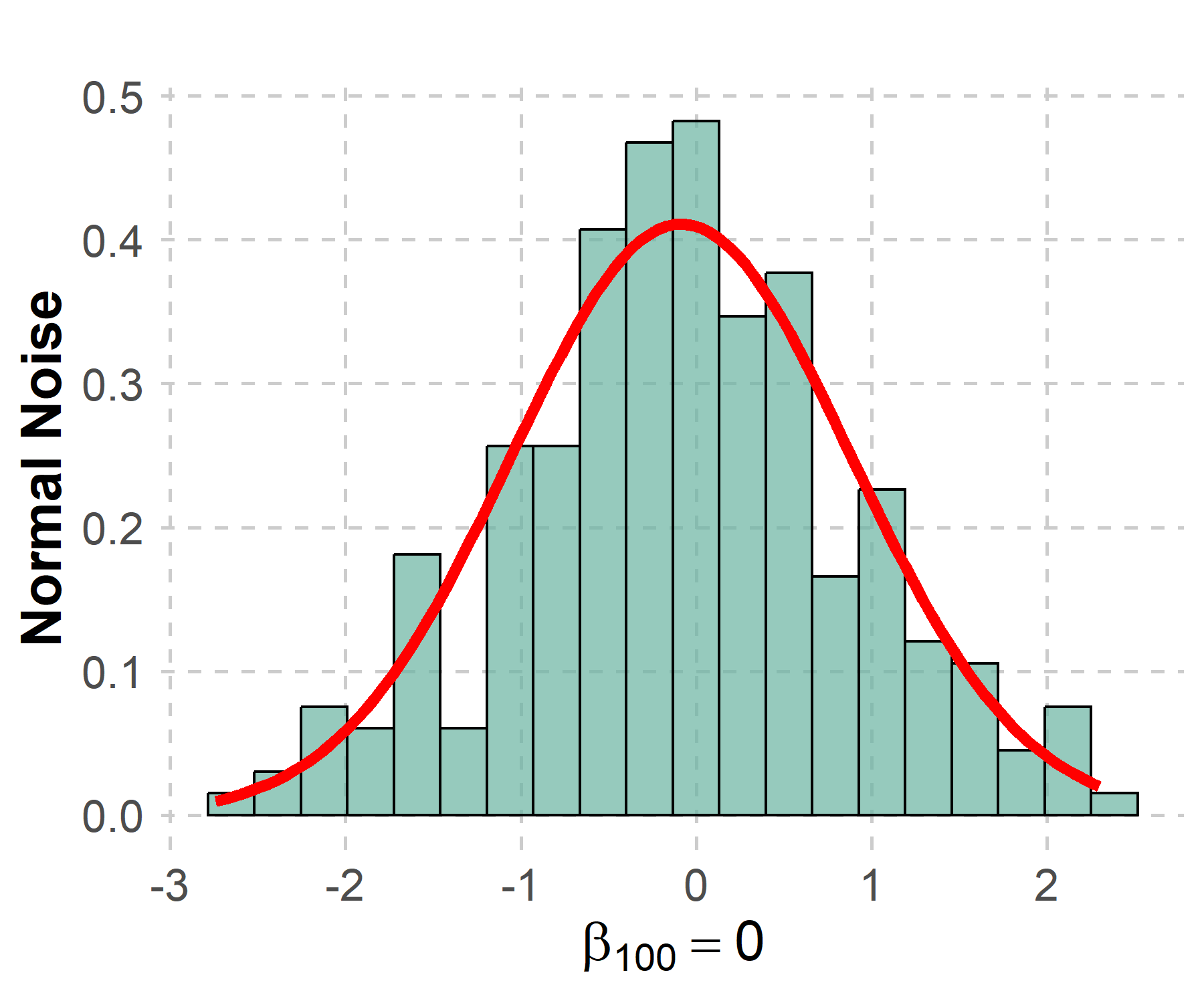}
        \includegraphics[width=0.29\textwidth]{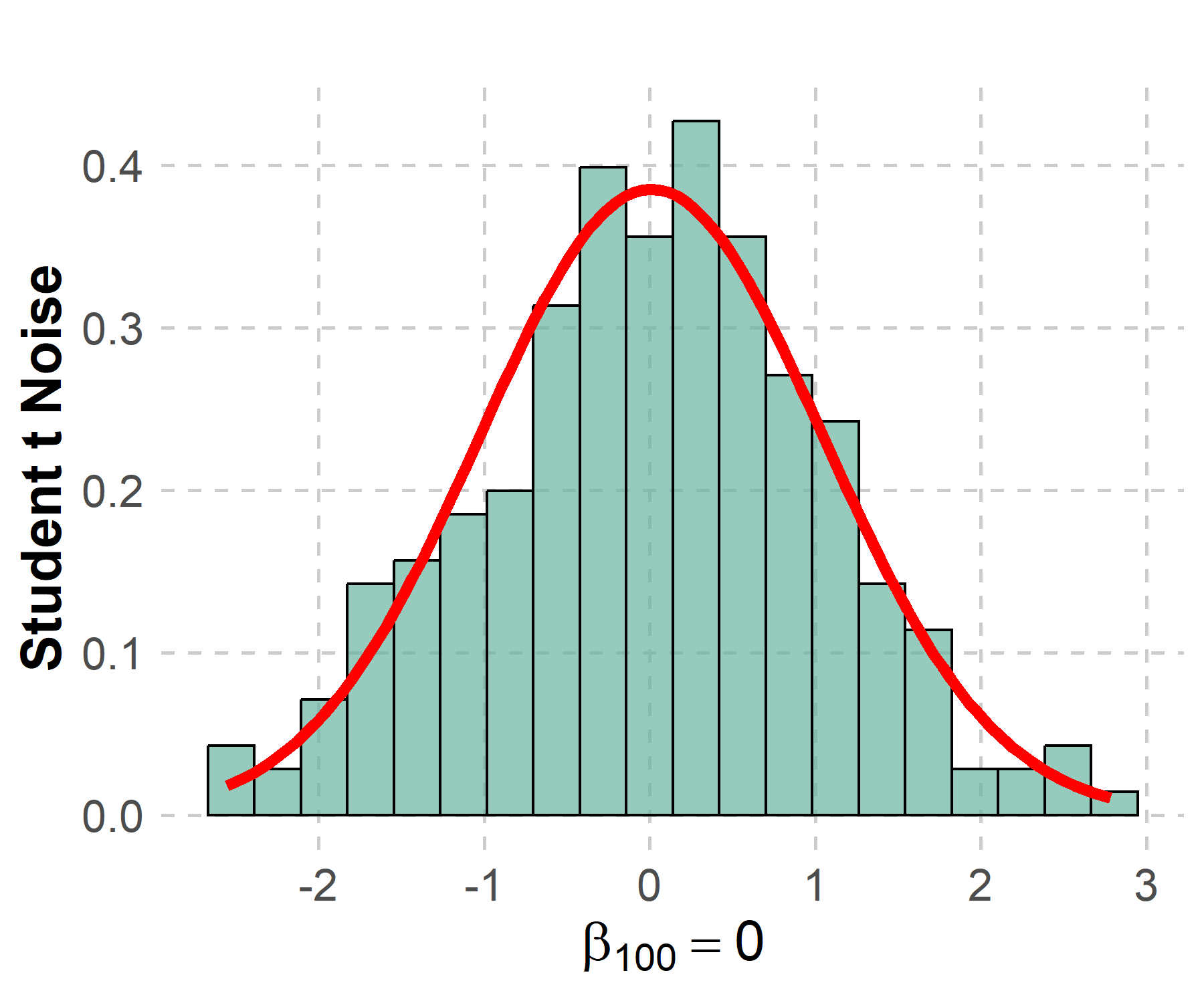}
        \includegraphics[width=0.29\textwidth]{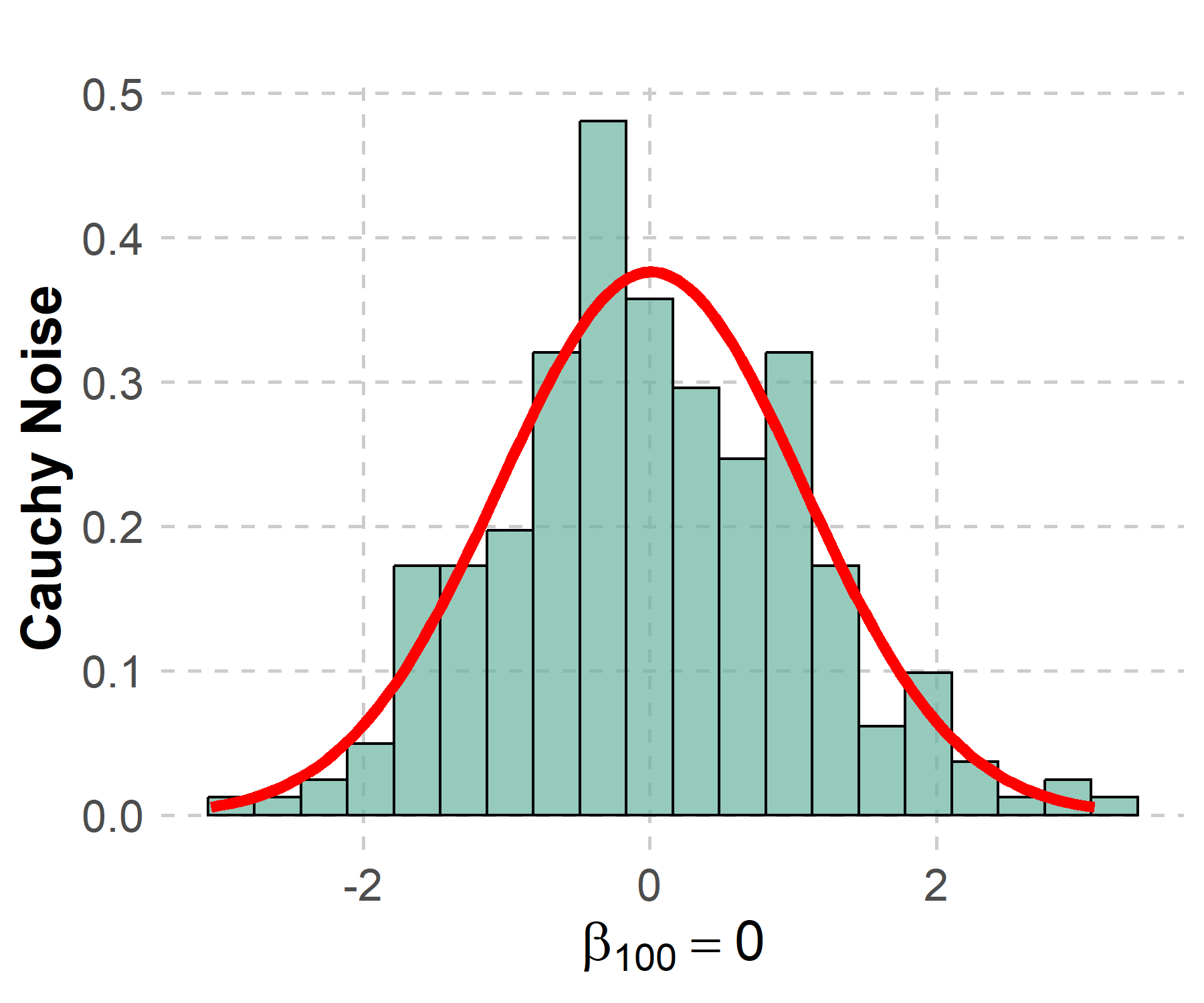}
        \includegraphics[width=0.29\textwidth]{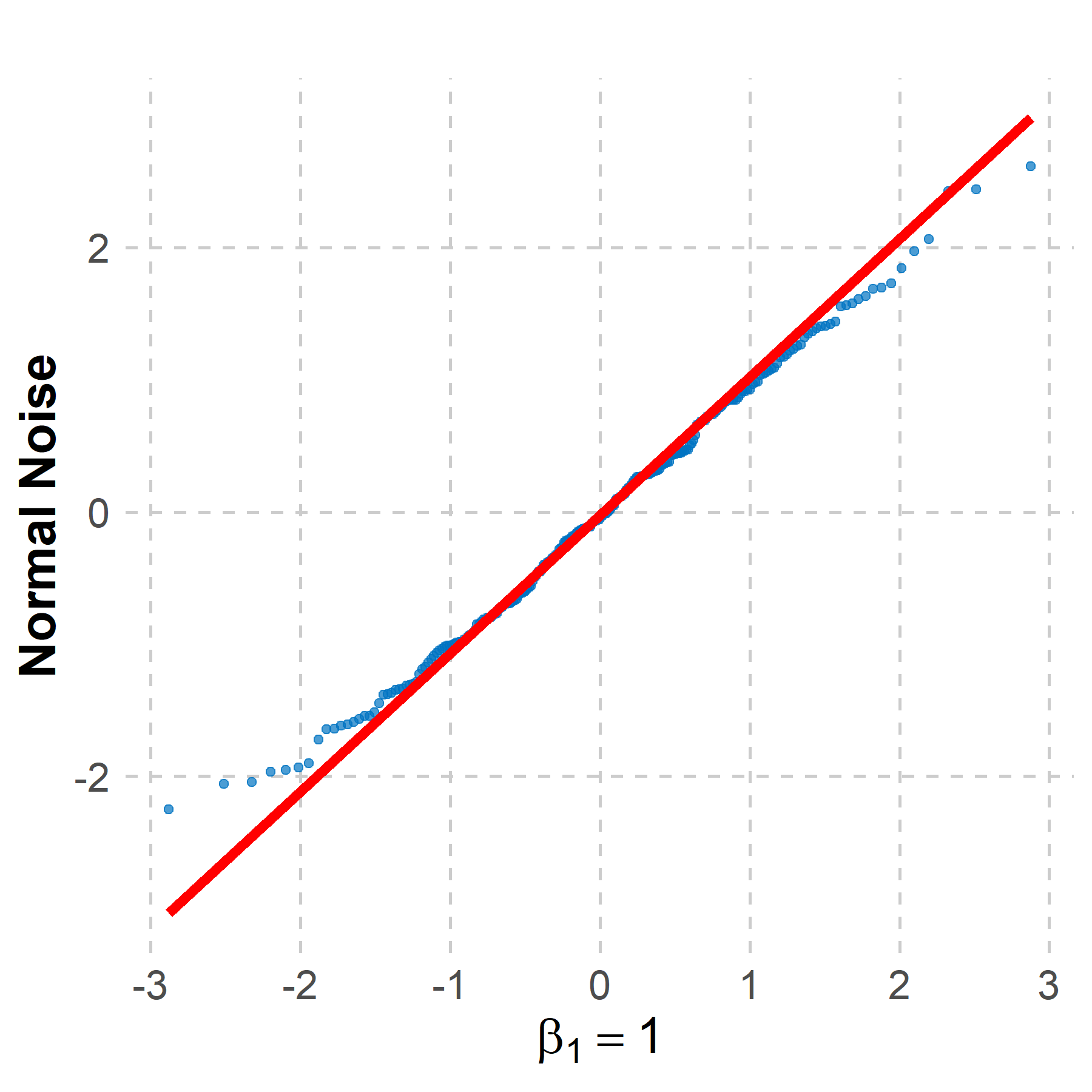}
        \includegraphics[width=0.29\textwidth]{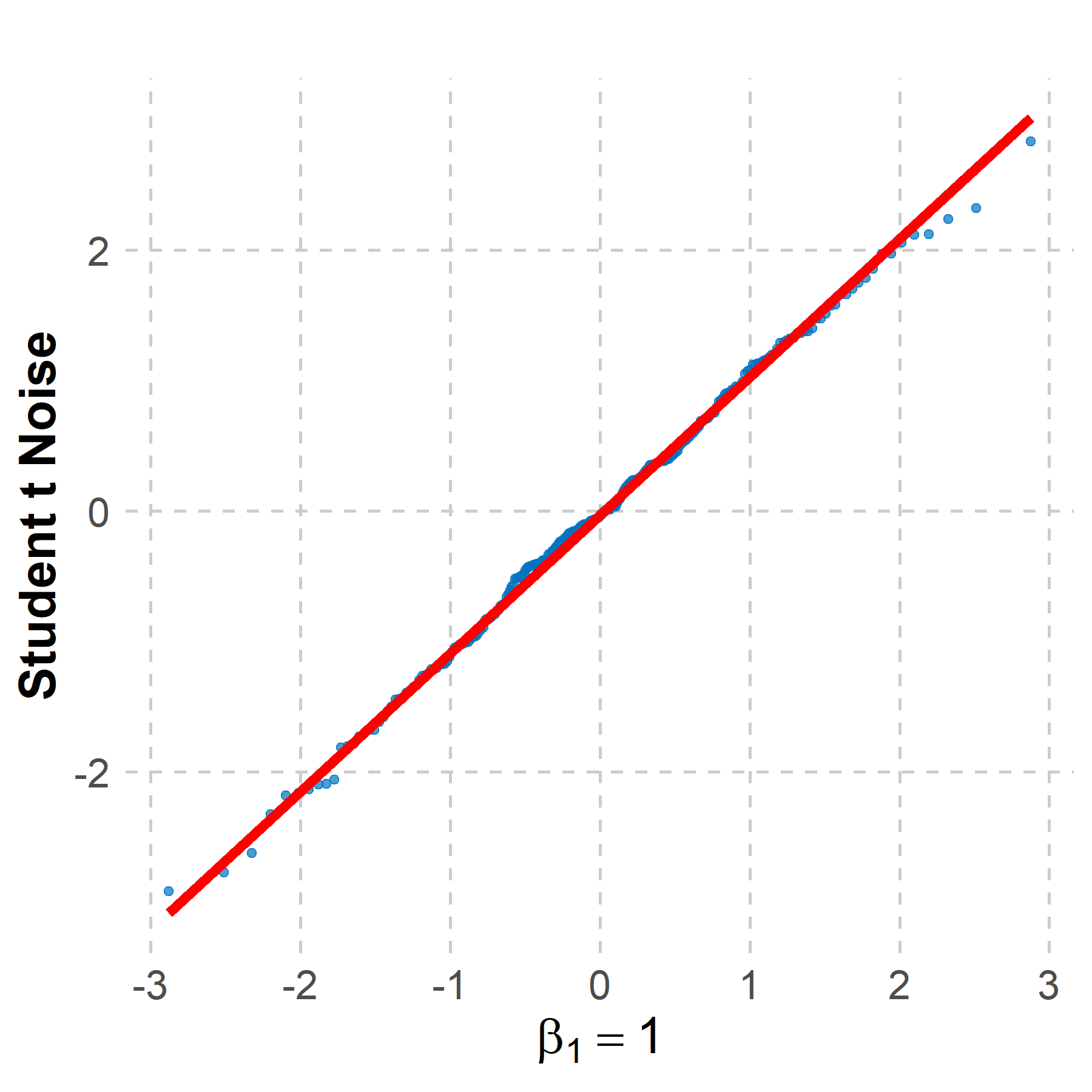}
        \includegraphics[width=0.29\textwidth]{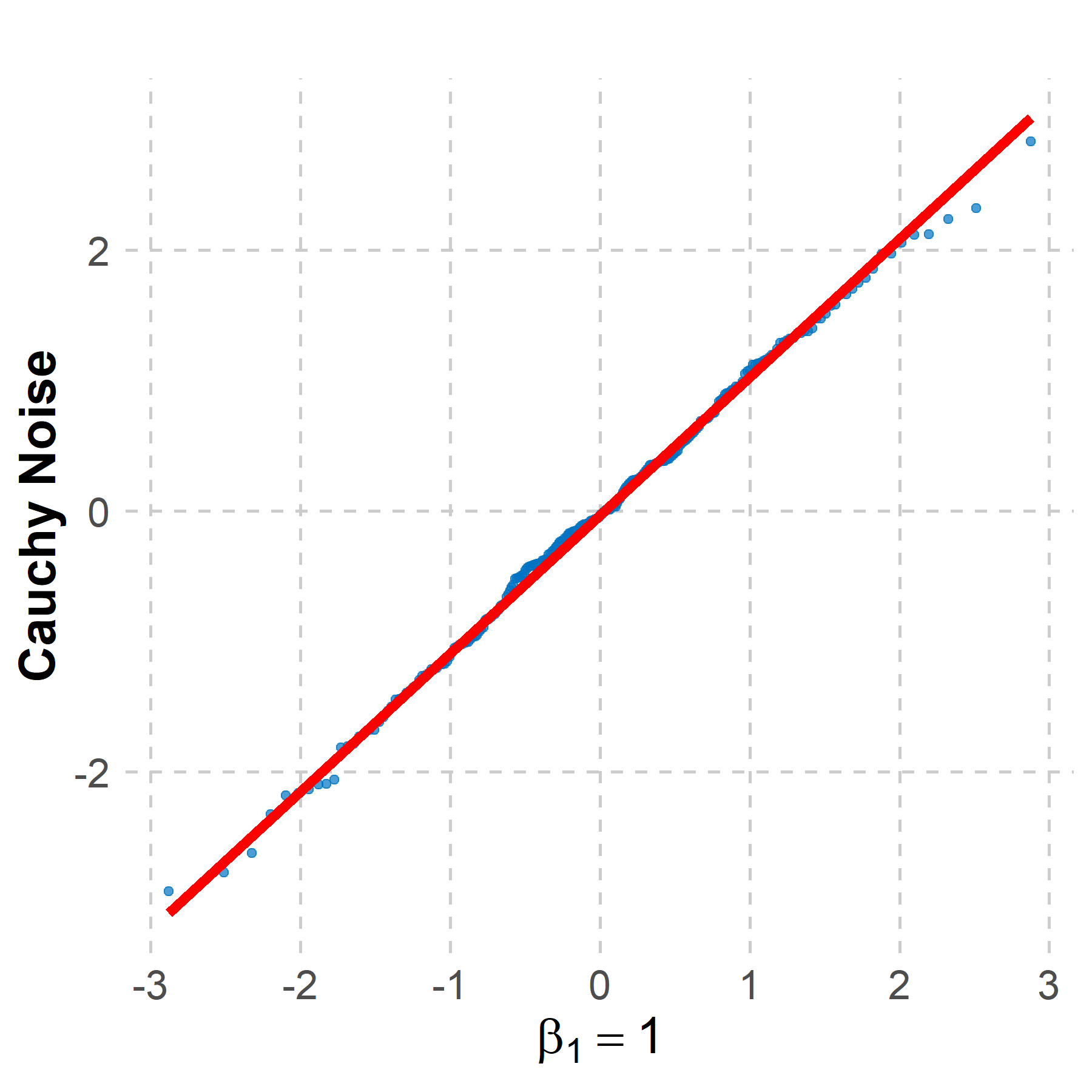}
        \includegraphics[width=0.29\textwidth]{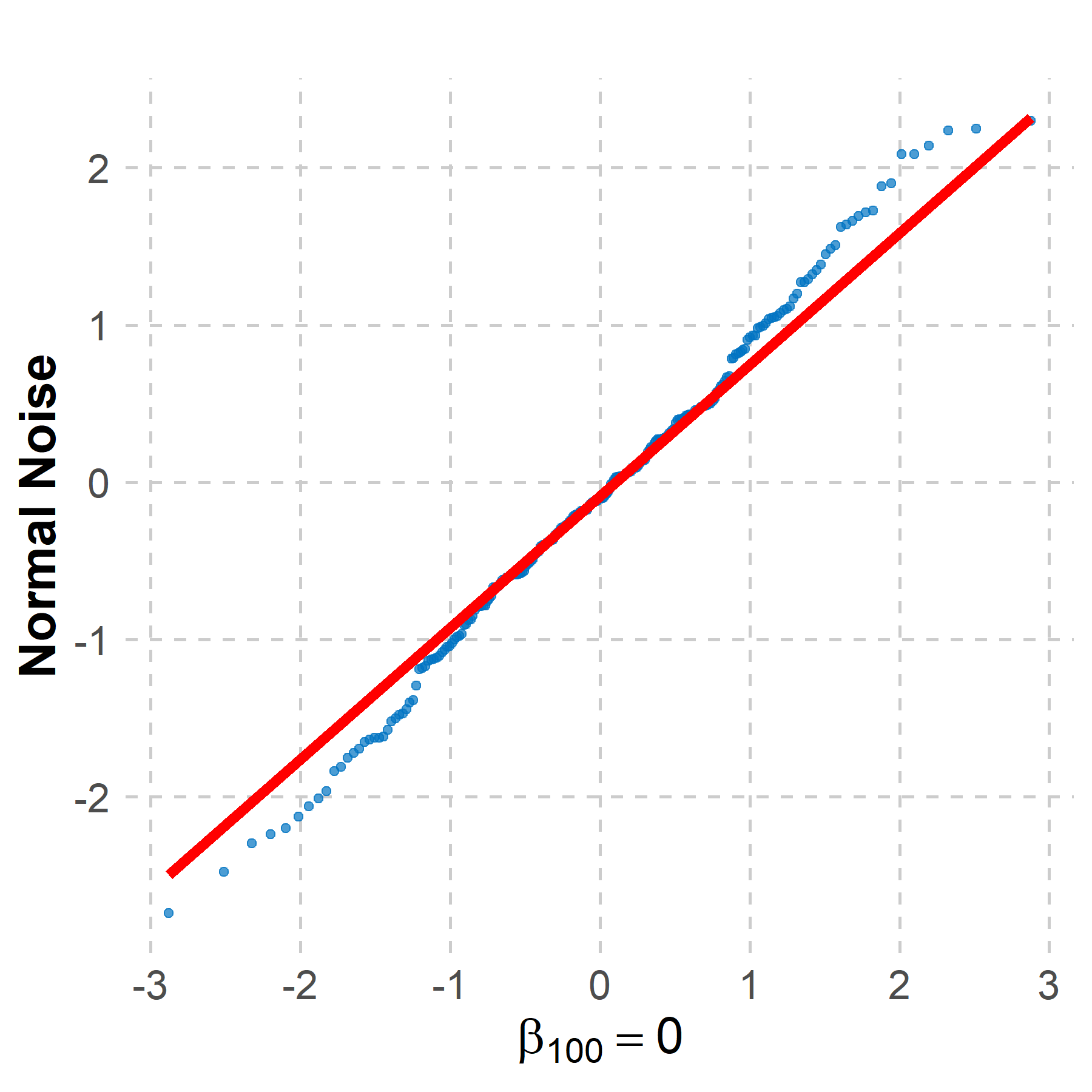}
        \includegraphics[width=0.29\textwidth]{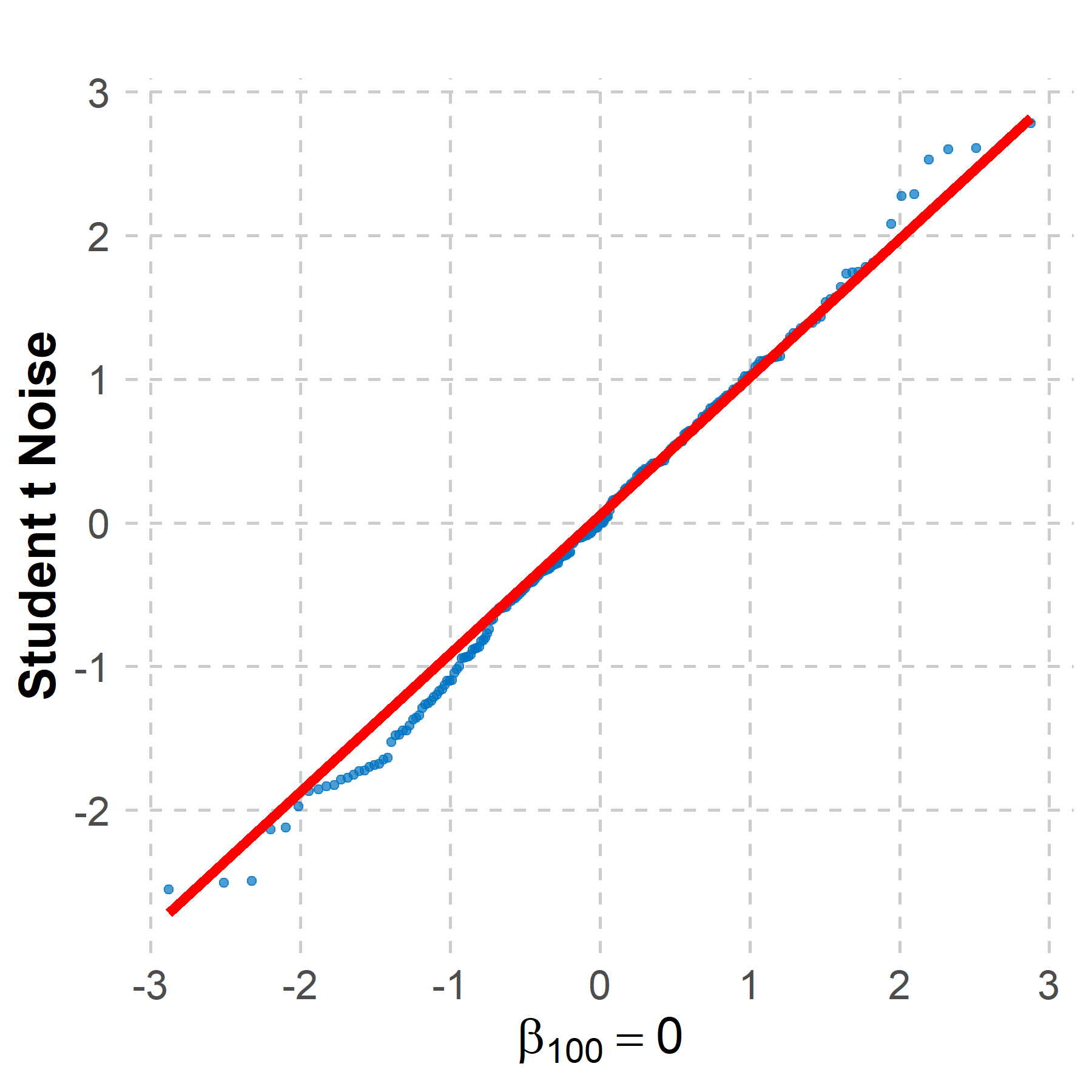}
        \includegraphics[width=0.29\textwidth]{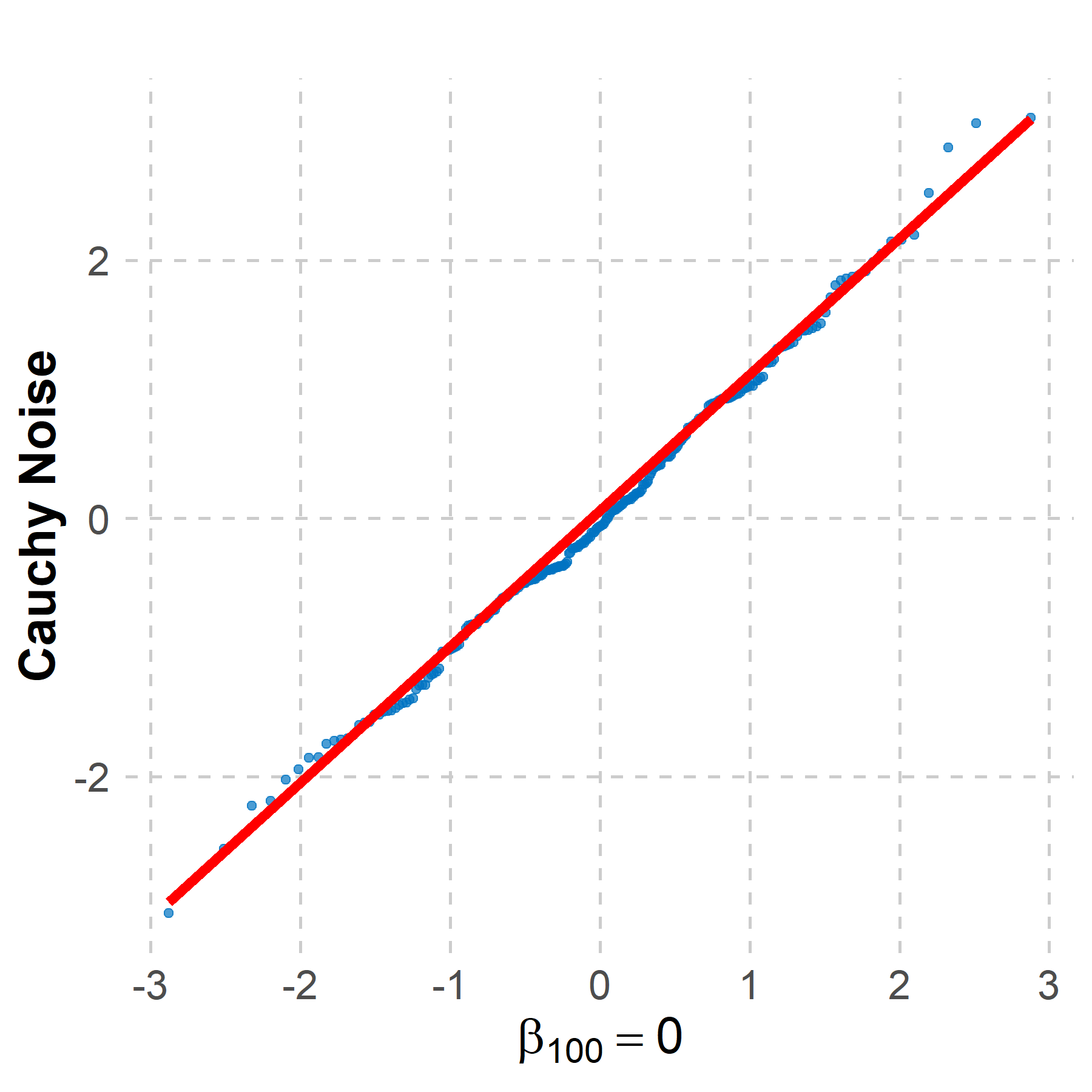} 
        \caption{Normality diagnostics for standardized test statistics under heteroscedastic errors.}
    \label{fig:het_combined_histograms_tau0.5}
\end{figure}

Figures \ref{fig:hom_combined_histograms_tau0.5} and \ref{fig:het_combined_histograms_tau0.5} display the distributions of the standardized test statistic
\[
z_j = \frac{\sqrt{mn}(\widetilde{\beta}_j - \beta^*_j)}{\sqrt{\frac{1}{m}\sum_{k=1}^m \widehat{\sigma}^{(k)} + \frac{8B_2^2 \log (1/\delta)}{mn\epsilon^2}}}
\]
for \(j = 1\) and \(j = 100\) under different null hypothesis settings. Across all noise types, the histograms demonstrate that \(z_j\) closely follows the standard normal distribution, even in the presence of heavy-tailed errors. This indicates that our inference procedure remains robust for both strong and weak signals. The accompanying Q-Q plots compare the empirical quantiles of \(z_j\) with those of the standard normal distribution. The close alignment of points along the $\mathrm{y} = \mathrm{x}$ line further confirms the approximate normality of \(z_j\). These empirical findings are consistent with the theoretical results established in Section~\ref{sec:DP-inference}.

\begin{figure}
    \centering
    \includegraphics[width=0.8\linewidth]{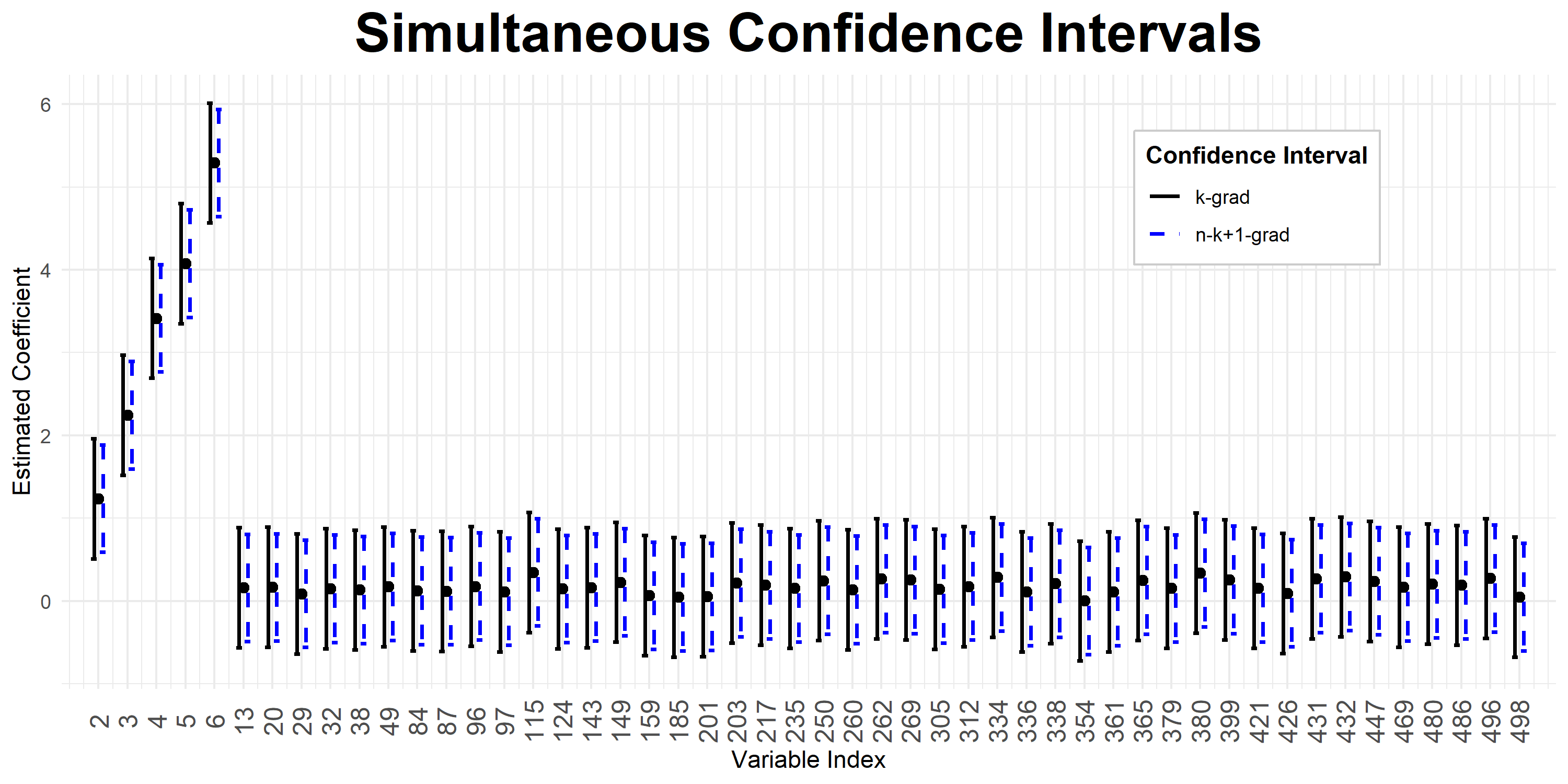}
    \caption{Simultaneous confidence intervals produced by the distributed bootstrap procedure.}
    \label{fig:bootstrap1}
\end{figure}

Figure~\ref{fig:bootstrap1} presents 95\% simultaneous confidence intervals under heteroscedastic Cauchy errors with $m=10$, $\epsilon=1$, and $\delta=1/N$. The figure reports all nonzero coordinates and a representative subset of 45 zero coordinates, with methods and reference values indicated in the legend. The intervals cover the true parameter values with practical widths, and the $\mathsf{(n+k-1)\text{-}grad}$ bootstrap gives narrower intervals than the $\mathsf{k\text{-}grad}$ bootstrap in this small-$m$ setting, consistent with Section~\ref{sec:DP-bootstrap}.




\section{  {Real Data Analysis}}\label{realdata}

 {We use the 2022 Medical Expenditure Panel Survey (MEPS) Full Year Consolidated Data File (HC-243)\footnote{\url{https://meps.ahrq.gov/mepsweb/}} to evaluate the proposed DP-QR framework. The original MEPS HC-243 dataset contains 22431 observations and 1415 non-identifier variables, which contain individual-level demographic, socioeconomic, insurance, healthcare utilization, and clinical information. This makes it a suitable benchmark for distributed privacy-preserving healthcare analytics, since individual-level records are sensitive and often cannot be directly pooled across data holders. 
The response variable is log-transformed annual total healthcare expenditure, $y_i=\log(1+\texttt{TOTEXP22}_i)$. Since the expenditure distribution is highly right-skewed, it motivates quantile regression rather than mean regression. We analyze the transformed response $y_i=\log(1+\texttt{TOTEXP22}_i)$ at quantile levels $\tau\in\{0.50,0.75,0.90\}$. Our task is to estimate conditional quantiles of annual healthcare expenditure from distributed individual-level records and to assess key factors associated with the typical and upper-tail medical spending under different privacy budgets. We focus on the median and upper quantiles because our main interest is to understand typical spending and high-expenditure risk. Lower quantiles mainly correspond to individuals with limited healthcare use, where utilization-related signals are weaker and more easily masked by privacy noise.}

 {For data pre-processing, we first remove expenditure sub-components and payment-source variables to avoid target leakage.  
Negative MEPS response codes are mapped to missing values, and variables with at least 20\% missingness are removed. We then restrict the sample to individuals with positive response \texttt{TOTEXP22} and observed \texttt{INSCOV22} and \texttt{REGION22}. Here, \texttt{INSCOV22} denotes the individual's primary health insurance coverage status in 2022, and \texttt{REGION22} denotes the individual's geographic region of residence. To mimic a federated healthcare environment, we partition the sample by the interaction between \texttt{INSCOV22} and \texttt{REGION22}. Since very small local sample size may lead to unstable estimation and privacy-noise calibration, clients with fewer than 300 observations are removed. The final analysis sample contains $N=18587$ individuals distributed across $m=9$ client nodes. Remaining missing covariates are imputed by median for continuous variables and mode for discrete or categorical variables. We also add second-order interactions among the first 15 core covariates. After removing extremely sparse or nearly constant dummy columns, the final regression design contains $p=286$ covariates excluding the intercept. }

 {To evaluate the predictive performance of the proposed method on real healthcare data, we conduct prediction experiments across different privacy budgets and compare the results with a centralized non-private pooled benchmark. In our experiments, we randomly choose $N_{\mathrm{train}}=14869$ samples as the training data and the rest as the testing data. We fix the gradient step size to $\eta_0=0.05$. The privacy parameter is evaluated at $\epsilon\in\{0.10,0.50,0.75,1.00\}$, and we set $\delta=1/N_{\mathrm{train}}^2=4.523\times 10^{-9}$. Figure~\ref{fig:loss_curve}  reports empirical risk on the test data, which shows a clear privacy--utility trade-off. When the privacy budget is small, especially at $\epsilon=0.10$, the injected DP noise substantially increases the prediction loss for all quantile levels. As $\epsilon$ increases, the check loss decreases sharply, indicating that a larger privacy budget leads to better predictive accuracy.}

\begin{figure}[ht]
    \centering
    \includegraphics[width=0.68\linewidth]{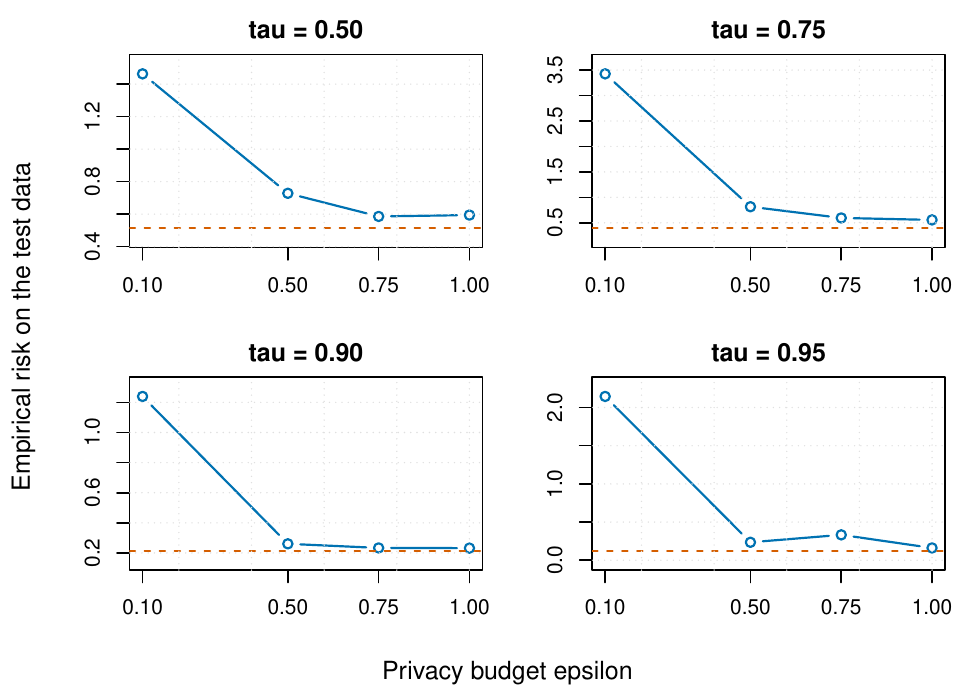}
    \caption{Empirical risk on the test data across privacy budgets and quantile levels. Dashed horizontal lines denote the centralized non-private pooled benchmark.}
    \label{fig:loss_curve}
\end{figure}


\begin{table}[ht]
\centering
\caption{Selected healthcare utilization variables identified by our method and used in the main debiased inference analysis.}
\label{tab:selected_variable_descriptions}
\small
\setlength{\tabcolsep}{5pt}
\begin{tabular}{lll}
\toprule
Variable & Description & Interpretation \\
\midrule
\texttt{hhagd22} & Agency home-health provider days in 2022 & Home-health care intensity \\
\texttt{ipdis22} & Hospital inpatient discharges in 2022 & Inpatient utilization intensity \\
\texttt{obdrv22} & Office-based physician visits in 2022 & Office-based physician utilization \\
\texttt{opdrv22} & Outpatient department physician visits in 2022 & Outpatient physician utilization \\
\bottomrule
\end{tabular}
\end{table}

 {Beyond evaluating predictive performance, our real-data analysis also aims to understand which healthcare utilization factors are associated with annual medical expenditure at different conditional quantile levels. Table~\ref{tab:selected_variable_descriptions} lists the healthcare utilization variables identified by our estimation method and the subsequent debiased inference step as being significantly different from zero in most scenarios. These selected variables correspond to distinct dimensions of medical service use: \texttt{hhagd22} measures agency home-health provider days, \texttt{ipdis22} measures hospital inpatient discharges, \texttt{obdrv22} measures office-based physician visits, and \texttt{opdrv22} measures outpatient department physician visits. The selection suggests that these are important utilization-related factors associated with conditional healthcare expenditure, especially at higher expenditure quantiles.}

\begin{table}[ht]
\centering
\caption{Debiased inference for selected healthcare utilization drivers at $\epsilon=1.00$.}
\label{tab:inference_results_main}
\small
\setlength{\tabcolsep}{4pt}
\begin{tabular}{llccccc}
\toprule
Feature & $\tau$ & Pooled est. & Pooled 95\% CI & DP est. & DP 95\% CI & Sig. \\
\midrule
\texttt{hhagd22} & 0.50 & 0.258 & [0.091, 0.425] & 0.367 & [0.027, 0.707] & * \\
                 & 0.75 & 0.220 & [0.169, 0.271] & 0.203 & [-0.082, 0.488] &  \\
                 & 0.90 & 0.223 & [0.201, 0.245] & 1.374 & [1.119, 1.630] & * \\
\addlinespace
\texttt{ipdis22} & 0.50 & 0.494 & [0.428, 0.560] & 0.121 & [-0.168, 0.410] &  \\
                 & 0.75 & 0.473 & [0.414, 0.533] & 0.349 & [0.081, 0.617] & * \\
                 & 0.90 & 0.438 & [0.356, 0.519] & 1.224 & [1.024, 1.425] & * \\
\addlinespace
\texttt{obdrv22} & 0.50 & 0.667 & [0.601, 0.733] & 0.667 & [0.364, 0.971] & * \\
                 & 0.75 & 0.582 & [0.523, 0.641] & 0.822 & [0.553, 1.091] & * \\
                 & 0.90 & 0.556 & [0.519, 0.592] & 1.021 & [0.888, 1.153] & * \\
\addlinespace
\texttt{opdrv22} & 0.50 & 0.389 & [0.287, 0.492] & 0.690 & [0.420, 0.961] & * \\
                 & 0.75 & 0.404 & [0.352, 0.455] & 0.681 & [0.480, 0.882] & * \\
                 & 0.90 & 0.351 & [0.312, 0.390] & 1.310 & [1.091, 1.529] & * \\
\bottomrule
\end{tabular}
\end{table}

 {Table~\ref{tab:inference_results_main} reports the debiased inference results for selected healthcare utilization variables at $\epsilon=1.00$. Overall, all reported variables have positive pooled estimates across the considered quantile levels, indicating that greater healthcare utilization is associated with higher conditional medical expenditure. The DP estimates are also directionally consistent with the pooled non-private estimates, although the DP confidence intervals are wider because they incorporate privacy noise and distributed estimation uncertainty.}

 {The strength of these associations varies across the expenditure distribution. Office-based physician visits \texttt{obdrv22} and outpatient department physician visits \texttt{opdrv22} show stable positive associations across the median, upper-middle, and high-expenditure quantiles, suggesting that physician-service utilization is relevant not only for typical spending but also for higher expenditure levels. Inpatient discharges \texttt{ipdis22} and agency home-health provider days \texttt{hhagd22} become more pronounced at the upper quantile, especially at $\tau=0.90$, where their DP confidence intervals exclude zero. This pattern suggests that high-cost patients are more strongly associated with intensive healthcare utilization, including inpatient care, outpatient physician visits, and home-health services. Thus, the selected utilization factors are not equally important across the full expenditure distribution. However, their associations tend to be stronger in the upper tail at $\tau=0.9$.}

 {These results demonstrate that our proposed method can recover interpretable and utilization-related signals under a moderate privacy budget in a realistic distributed healthcare setting. 
Throughout this real data analysis, it can be shown that the proposed method can identify healthcare utilization variables that are predictive of different parts of the conditional expenditure distribution under differential privacy.}


\section{Conclusion and Future Work}\label{sec:Conclusion}
In this work, we studied distributed high-dimensional quantile regression under differential privacy, providing both theoretical guarantees and practical algorithms. Our results show that the final estimation error decomposes into two main components: the oracle convergence rate and the additional error due to privacy, consistent with the ``cost of privacy'' established in \cite{cai2021cost}. For inference, we demonstrated that the debiased estimator achieves asymptotic normality, enabling valid confidence intervals and hypothesis testing. Furthermore, we developed a differentially private multiplier bootstrap procedure for simultaneous inference in high dimensions. Extensive simulations confirm that our methods achieve robust estimation and inference with moderate privacy budgets, while excessive privacy protection can impede statistical accuracy, highlighting the fundamental privacy-accuracy trade-off.

Future research directions include extending the proposed framework to other models, such as expected shortfall regression and functional regression, and to more complex setups, such as semi-supervised settings and missing data situations.  {Another important direction is to develop joint inference theory under differential privacy. 
While this paper considers coordinate-wise inference, finite-dimensional linear functionals, and simultaneous inference via private multiplier bootstrap, a full joint asymptotic normality theory for debiased estimators over growing index sets remains challenging. In addition, extending the full theoretical guarantees to non-i.i.d. and non-evenly distributed data is an important future direction, as it requires new techniques to handle client heterogeneity, weighted aggregation, and privacy-noise propagation simultaneously.} Finally, advanced privacy and optimization mechanisms, such as local or Gaussian differential privacy and DP-ADMM, may further improve the trade-off between privacy and statistical efficiency in high-dimensional problems \cite{wang2020sparse,asi2022optimal,li2023robustness,Huang2020DPadmm}.

\bibliographystyle{apalike}
\bibliography{ref}

\end{document}